\def\BState{\State\hskip-\ALG@thistlm}
\definecolor{darkred}{RGB}{150,0,0}
\definecolor{darkgreen}{RGB}{0,150,0}
\definecolor{darkblue}{RGB}{0,0,200}
\newtheorem{theorem}{Theorem}[section]
\newtheorem{assumption}{Assumption}[section]
\newtheorem{lemma}[theorem]{Lemma}
\newtheorem{proposition}[theorem]{Proposition}
\newtheorem{definition}[theorem]{Definition}
\newcommand{\eps}{\varepsilon}
\newcommand{\epsr}{\varepsilon_R}
\newcommand{\mur}{\mu_R}
\newcommand{\algo}{Regularized Binned Regression}
\newcommand{\bdata}{binned data matrix}
\newcommand{\irb}{{Binomial Binning}}
\newcommand{\rrb}{{Regular Binning}}
\newcommand{\beq}{\begin{equation}}
\newcommand{\bbetas}{{\bbeta_\star}}
\newcommand{\eeq}{\end{equation}}
\newcommand{\nn}{\nonumber}
\newcommand{\A}{{\mtx{A}}}
\newcommand{\B}{{{\mtx{B}}}}
\newcommand{\Xbin}{\mtx{\X}_{\rm{bin}}}
\newcommand{\Bbin}{\mtx{\B}}
\newcommand{\Bbinn}[1]{\mtx{\B}_{#1}}
\newcommand{\Xbinn}[1]{\mtx{\X}_{bin;#1}}
\newcommand{\Gbin}{{{\mtx{S}}}}
\newcommand{\xbin}{{{\mtx{b}}}}
\newcommand{\Sb}{{{\mtx{S}}}}
\newcommand{\pb}{p_{\rm{bin}}}
\newcommand{\Lc}{{\cal{L}}}
\newcommand{\Gc}{{\cal{G}}}
\newcommand{\Db}{{\mtx{D}}}
\newcommand{\oneb}{{\mathbb{1}}}
\newcommand{\onebb}{{\mathbf{1}}}
\newcommand{\Iden}{{\bf{I}}}
\newcommand{\M}{{\mtx{M}}}
\newcommand{\order}[1]{{\cal{O}}(#1)}
\newcommand{\mean}[1]{{\text{mean}}(#1)}
\newcommand{\z}{{\mtx{z}}}
\newcommand{\tn}[1]{\|{#1}\|_{\ell_2}}
\newcommand{\dist}[1]{\text{dist}(#1)}
\newcommand{\cone}[1]{{\text{cone}(#1)}}
\newcommand{\Cc}{\mathcal{C}}
\newcommand{\btrue}{\bbeta_{true}}
\newcommand{\bbeta}{{\boldsymbol{\beta}}}
\newcommand{\Sc}{\mathcal{S}}
\newcommand{\Ic}{\mathcal{I}}
\newcommand{\Nn}{\mathcal{N}}
\newcommand{\vb}{\mtx{v}}
\newcommand{\w}{\mtx{w}}
\newcommand{\li}{\left<}
\newcommand{\ri}{\right>}
\newcommand{\s}{\vct{s}}
\newcommand{\ab}{\vct{a}}
\newcommand{\ub}{\vct{u}}
\newcommand{\g}{\vct{g}}
\newcommand{\Tc}{\mathcal{T}}
\newcommand{\nnz}[1]{\texttt{nnz}(#1)}
\newcommand{\x}{\vct{x}}
\newcommand{\y}{\vct{y}}
\definecolor{emmanuel}{RGB}{255,127,0}
\newcommand{\p}{{\bf{p}}}
\newcommand{\R}{\mathbb{R}}
\newcommand{\Pro}{\mathbb{P}}
\newcommand{\E}{\operatorname{\mathbb{E}}}
\newcommand{\grad}[1]{{\nabla\Lc(#1)}}
\newcommand{\e}{\mathrm{e}}
\newcommand{\vct}[1]{\bm{#1}}
\newcommand{\mtx}[1]{\bm{#1}}
\newcommand{\supp}[1]{\operatorname{supp}(#1)}
\newcommand{\Pc}{{\cal{P}}}
\newcommand{\X}{{\bf{X}}}
\numberwithin{equation}{section} 
\def \endprf{\hfill {\vrule height6pt width6pt depth0pt}\medskip}
\newenvironment{proof}{\noindent {\bf Proof} }{\endprf\par}
\title{Learning Feature Nonlinearities with\\Non-Convex Regularized Binned Regression} 
\author{Samet Oymak$^\dagger$~~~Mehrdad Mahdavi$^\dagger$~~~Jiasi Chen$^\ddagger$\vspace{7pt}
\\$\dagger$ The Voleon Group\\
$\ddagger$ University of California, Riverside}
\date{May 19, 2017}
\begin{document}
\maketitle

% and kernel methods are commonly used to overcome these nonlinearities and they often outperform
\begin{abstract}For various applications, the relations between the dependent and independent variables are highly nonlinear. Consequently, for large scale complex problems, neural networks and regression trees are commonly preferred over linear models such as Lasso. This work proposes learning the feature nonlinearities by binning feature values and finding the best fit in each quantile using non-convex regularized linear regression. The algorithm first captures the dependence between neighboring quantiles by enforcing smoothness via piecewise-constant/linear approximation and then selects a sparse subset of good features. We prove that the proposed algorithm is statistically and computationally efficient. In particular, it achieves linear rate of convergence while requiring near-minimal number of samples. Evaluations on synthetic and real datasets demonstrate that algorithm is competitive with current state-of-the-art and accurately learns feature nonlinearities. Finally, we explore an interesting connection between the binning stage of our algorithm and sparse Johnson-Lindenstrauss matrices.\end{abstract}
\section{Introduction}
% samet look here 

% The algorithm first tries to enforce smoothness between neighboring quantiles by applying a piecewise-constant or piecewise-linear approximation and then selects a sparse subset of good features. 
%We prove that proposed algorithm is statistically and computationally efficient and derive sample complexity bounds.
%With the abundance of data, we need algorithms that can extract useful

% abstract copy:
%For various applications, the relations between the dependent and independent variables are highly nonlinear. Consequently, for large scale complex problems, neural networks and regression trees are commonly preferred over linear models such as Lasso. This work proposes learning the feature nonlinearities by binning feature values and finding the best fit in each quantile using non-convex regularized linear regression. The algorithm first selects a sparse subset of good features and then tries to enforce smoothness between neighboring quantiles by applying a piecewise-constant or piecewise-linear approximation. We prove that the proposed algorithm is statistically and computationally efficient. In particular, it achieves linear rate of convergence while requiring near-minimal number of samples. Evaluations on synthetic and real datasets demonstrate that algorithm is competitive with current state-of-the-art and accurately learns feature nonlinearities. Finally, we explore an interesting connection between the binning stage of our algorithm and Sparse Johnson-Lindenstrauss Transform.

Recently, substantial progress has been made on the problem of high-dimensional sparse linear models~\cite{o2016statistical}. In particular, Lasso has been shown to be remarkably successful, and is statistically well-behaved and generates interpretable solutions. However, in the presence of non-linearity (i.e., the relation between the covariates and response is non-linear), boosted decision trees, deep learning models, and kernel methods are regarded as the most effective models that deliver substantial performance boost over linear models; however, their interpretability is limited. As a result, there is a significant gap between the statistical performance and the interpretability, and it is often desirable to have computationally efficient algorithms that learn interpretable models without sacrificing statistical guarantees. This raises a natural question that we aim to tackle: \textit{Is there any algorithm which has similar statistical performance to complex models, while still retaining much of the interpretability of Lasso?}

%additive models with least-squares loss where the relation
In this paper, we answer the above question affirmatively and propose a novel way of learning the feature non-linearities with provable statistical and computational guarantees. In particular, we focus on additive models where, in case of least-squares loss, the relation between the response vector $\y\in\mathbb{R}^n$ and the data matrix $\X \in \mathbb{R}^{n \times p}$ is given by \cite{hastie1990generalized,ravikumar2007spam}:
\begin{align}
\hat{y}_i=\sum_{1\leq j\leq p}f_j(\X_{i,j}).\label{additive model}
\end{align}
Here, for each $j$, $\{f_j\}_{1\leq j\leq p}$ is the uni-variable feature gain function that we wish to learn. The broad idea is based on binning the continuous feature values \cite{liu2002discretization} and learning the correct gain of each quantile (see Figure \ref{fig:binning}). This strategy borrows ideas from high-dimensional estimation, feature discretization, and CART analysis \cite{dougherty1995supervised,breiman1984classification,meier2009high}. To achieve fast and accurate solutions, we propose a \emph{non-convex} projected gradient descent algorithm that consists of two stages: First, to encourage smooth $f_j$'s and capture relationships between neighboring feature quantiles, we employ non-convex piecewise-constant and piecewise-linear approximations. Secondly, we apply iterative \textit{group hard-thresholding} for sparse feature selection. The advantage of our formulation is that it decouples smoothness and sparsity, which leads to a simple algorithm that can be carried out with any sparse smoother and scales easily to high dimensions. Indeed, the proposed algorithm is low-complexity and converges quickly. To the best of our knowledge, our work is the only greedy non-convex algorithm with provable statistical and computational guarantees for learning sparse additive models~\cite{ravikumar2007spam}.

%Classically, sparsity is encouraged by $\ell_1$ penalty and smoothness is encoura  Meier et al. \cite{meier2009high}The related work in this direction include  to belong to a reasonable {\bf{Related work:}} Our work is within the additive model
% Our contributions

%. For enforcing   The advantages of our algorithm are three-folds. First, the function space we learn is more expressive than traditional linear models thanks to feature binning. Secondly, t

% and feature dependence is a one dimensional curve

{\bf{Contributions.}} This work provides both algorithmic and theoretical contributions to high-dimensional learning techniques. First, we develop the Non-Convex \algo,  which is significantly more expressive than traditional linear models and can efficiently learn feature non-linearities. The resulting algorithm is easier to understand and visualize, compared to regression trees and neural networks, as the overall decision function is separable over features. The algorithm is based on sparse matrix multiplication followed by fast projections; hence, runtime is competitive with iterative hard thresholding (IHT). Indeed, real and synthetic experiments complement our theoretical results and demonstrate that the proposed algorithm is competitive with gradient boosting. We provide computational and statistical guarantees on the rate of convergence and on the statistical precision of the proposed algorithm. In particular, for a random design data matrix $\X$, the algorithm converges \textit{linearly} to the optimal solution and requires near-optimal (minimal) sample complexity. %Hence, the proposed algorithm is competitive with gradient boosting while having provable guarantees as strong as lasso/IHT.

On the theory side, we provide a novel result for the convergence of non-convex projected gradient descent. To apply this result, we analyze the \bdata~$\Xbin$ derived from $\X$ and study its restricted eigenvalue conditions. Our analysis of this special random matrix is the key to fast convergence rates. We also illustrate an interesting connection between $\Xbin$ and sparse Johnson-Lindenstrauss matrices \cite{kane2014sparser,bourgain2015toward} which suggests improved dimensionality reduction techniques.

\subsection{{Related work}} It is often desirable to ensure gain functions have good properties, such as smoothness. The combination of sparse feature selection and smoothness is applied in additive models \cite{meier2009high,ravikumar2007spam,raskutti2012minimax} as well as fused Lasso \cite{tibshirani2005sparsity}. These works are based on convex optimization and use $\ell_1$ and total variation penalizations for regularization as well as splines \cite{meier2009high} and smooth basis functions \cite{ravikumar2007spam}. While there exists interesting statistical estimation results for additive models, they don't provide the deeper understanding we have for simpler linear models such as Lasso. For instance, it is unclear how much data we need for guaranteed training, and what is the convergence rate of iterative methods. Our algorithm is closely related to non-convex projections; in particular, iterative hard thresholding. In this direction, several works \cite{jain2014iterative,blumensath2009iterative} provide guaranteed convergence rates for classical sparse estimation problem as well as low-rank regression. Finally, our framework is inherently related to the decision/regression trees \cite{breiman1984classification} where the trees can learn nonlinear decisions while allowing for feature interactions (i.e. trees are multivariate functions unlike \eqref{additive model}). In fact, our algorithm with piecewise-constant projections corresponds to training an additive regression tree model where each tree uses a single feature.

{\bf{Notation.}}~We adopt the following notation throughout the paper. We use bold face letters for matrices and vectors. The transpose of a matrix $\A$ is denoted by $\A^{\top}$. $\Pc_\Cc(\cdot)$ is the projection operator on the set $\Cc$ and $\Cc-\Cc$ is the Minkowski difference of set $\Cc$. $\onebb$ denotes the all ones vectors of appropriate size and $\oneb$ denotes the indicator function. Given $\{b_i\}_{i\leq p}$ and $\pb=\sum_i b_i$, and a matrix $\A$ in $\R^{n\times \pb}$, $\A^j$ denotes the $j$th submatrix of size $n\times b_j$ so that $\A=[\A_1~\A_2~\dots~\A_p]$. For a vector $\vb\in\R^{\pb}$, $\vb^j\in\R^{b_j}$ is defined similarly. We use $\nnz{\cdot}$ to denote the number of nonzero entries of a vector or matrix. Finally, $\X_{:,i}$ and $\X_{i,:}$ denote the $i$th column and row of a matrix, respectively.

\section{Non-Convex Regularized Binned Regression}
Consider a response vector  $\y \in \mathbb{R}^{n}$ and a data matrix  $\X \in \mathbb{R}^{n \times p}$. We are concerned with the problem of modeling the dependent variable $\y$ as a linear combination of unknown \textit{functions} of individual features. In particular, for least-squares loss, we are interested in learning an estimator of the form \eqref{additive model}.
% samet look here Should we talk about a general loss instead?
%\begin{align}
%\hat{y}_i=\sum_{j\leq p}f_j(\X_{i,j}).\label{additive model}
%\end{align}
This class of estimators can be advantageous  to linear models such as Lasso which assign a scalar weight to each feature $j$.

To learn $f_j$'s, we will focus on {piecewise-constant} or {piecewise-linear} approximations (see Figure \ref{fig1}); however, our arguments can also be extended more generally. We propose the \algo~(RBR) algorithm which quickly learns these functions and  demonstrate that RBR has provably good statistical and computational properties.  We begin our development by presenting the two building blocks of the RBR algorithm: feature binning and linear estimation with decoupled sparsity and smoothness regularizers utilizing a non-convex projected gradient descent algorithm.
\begin{figure}
~~~ \begin{subfigure}[b]{0.4\textwidth}
        \includegraphics[width=\textwidth]{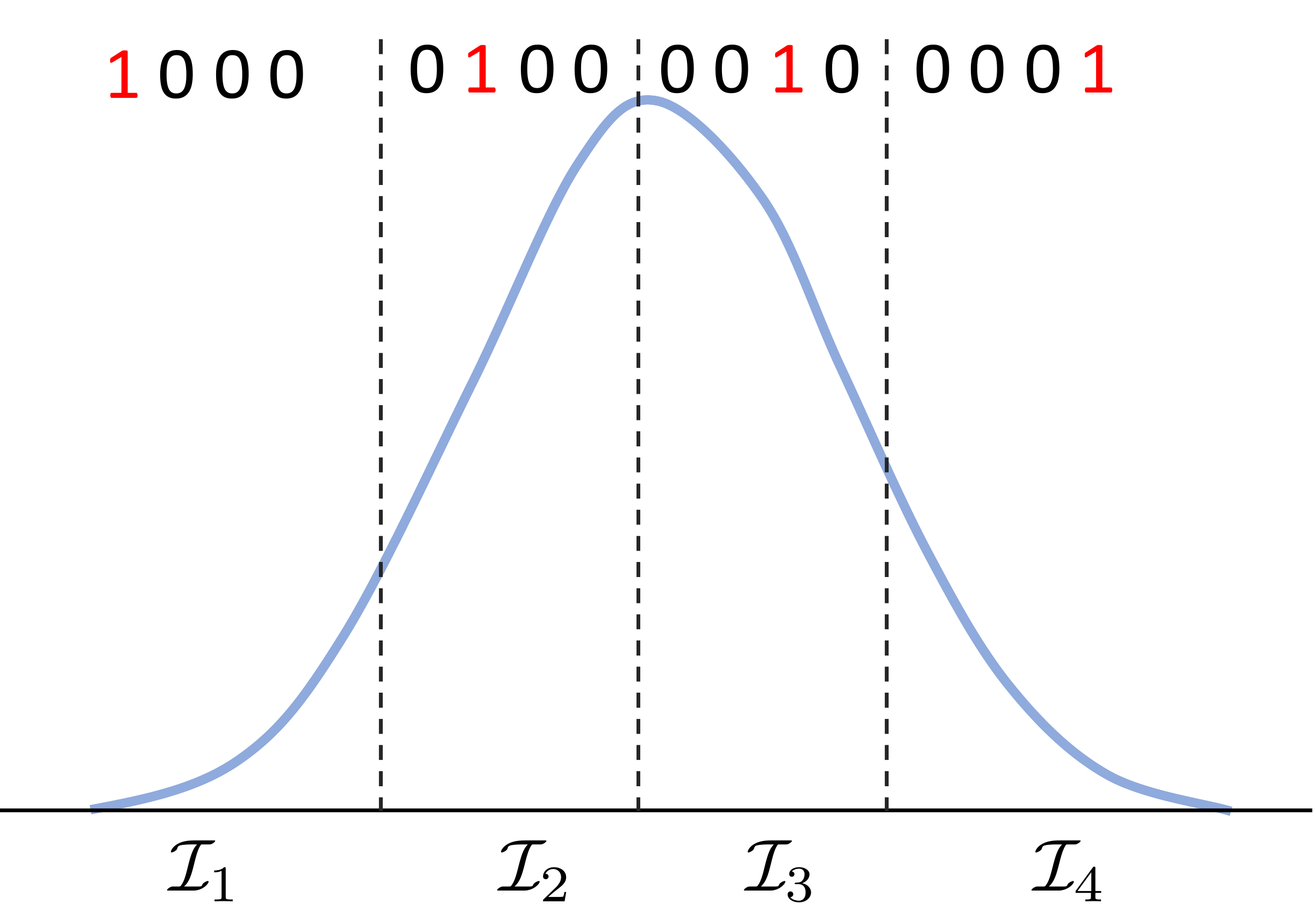}
        \caption{One-hot encoding feature quantiles given histogram of values.}
        \label{fig:binning}
    \end{subfigure}~~~~~~
\begin{subfigure}[b]{0.5\textwidth}
        \includegraphics[width=\textwidth]{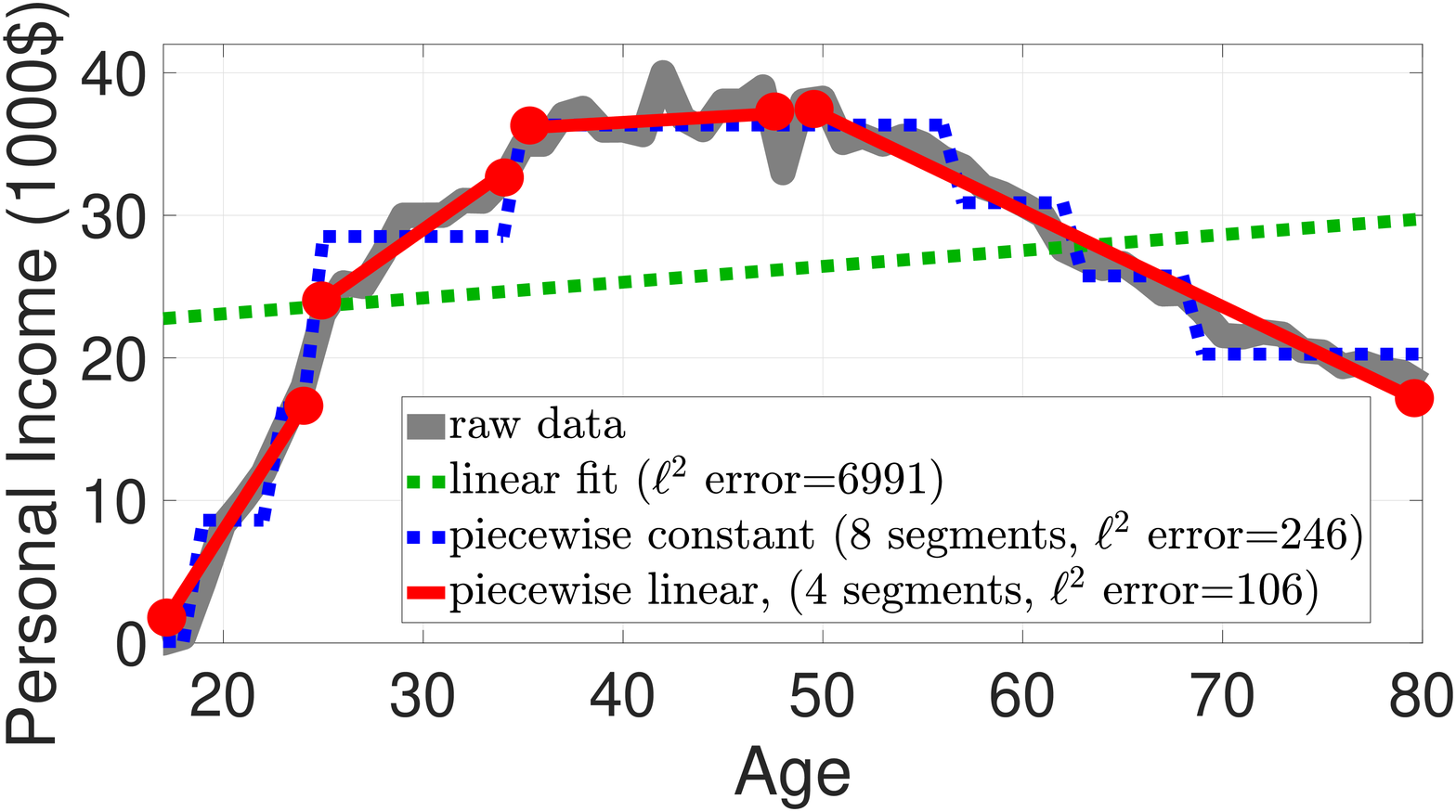}
        \caption{Personal income vs age, fitted by linear, piecewise constant, and piecewise linear functions.\cite{umn-data}}
        \label{fig1}
    \end{subfigure}
	\caption{}\vspace{-10pt}
%    \caption{Feature binning strategy and approximation of a function and Data from \cite{umn-data}}
\end{figure}

%\begin{figure}
%  \centering
%    \includegraphics[width=0.5\textwidth]{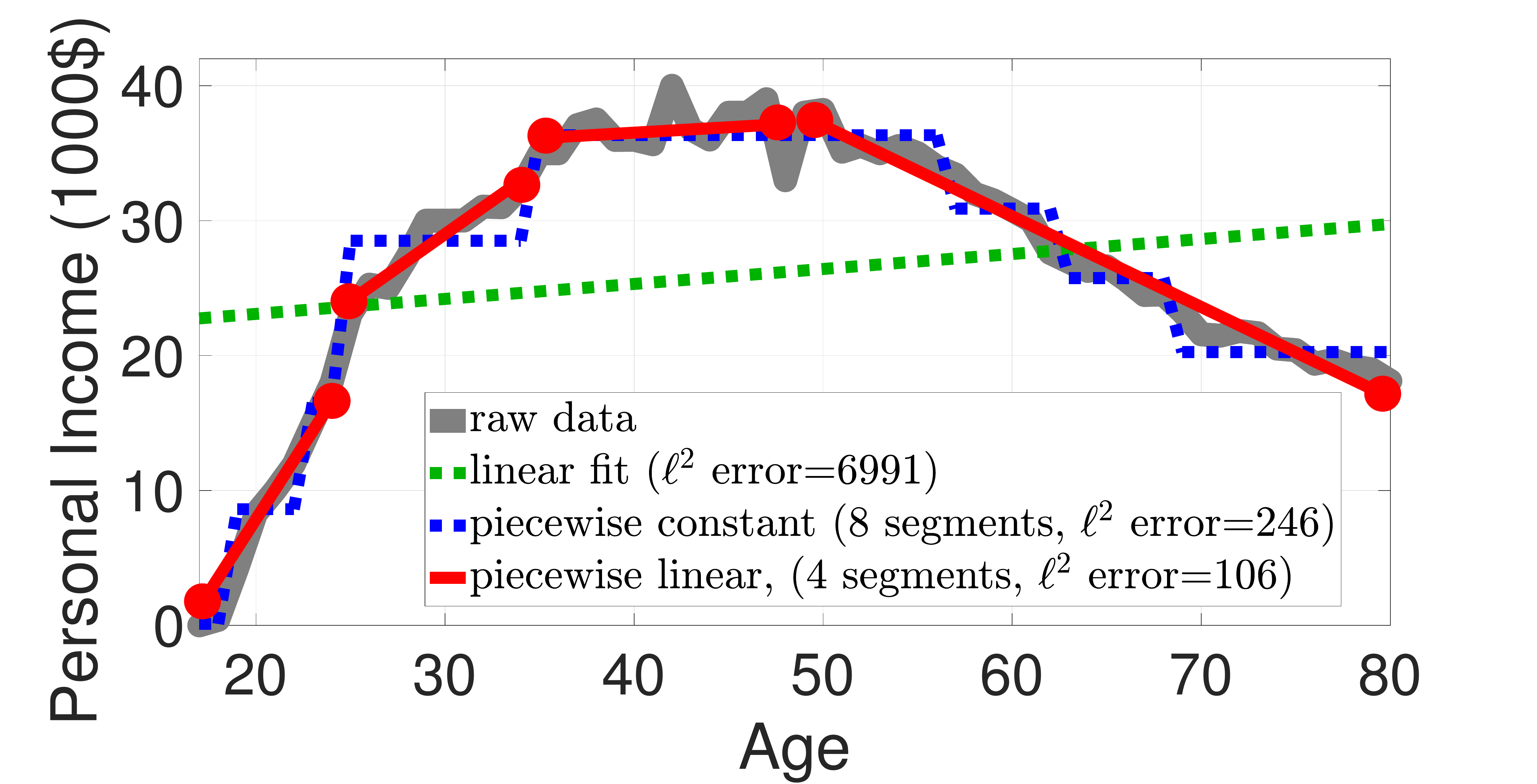}
%      \caption{}\label{fig1}
%\end{figure}

%The idea is to represent continuous features in a onehot encoded form. 
{\bf{Feature binning.}}~In order to accurately estimate $\{f_j\}_{j\leq p}$'s, we make use of the feature binning strategy. The idea is to map the data matrix $\X\in\R^{n\times p}$ to a larger one-hot encoded data matrix $\Xbin\in\R^{n\times \pb}$. Here $\pb/p$ corresponds to the average number of bins per feature. For each feature $1\leq j\leq p$, we split the real line into $b_j$ disjoint intervals $\{\Ic_{j,k}\}_{k\leq b_j}$ where $\Ic_{j,1}$ is the left-most and $\Ic_{j,b_j}$ is the right-most interval (see Figure \ref{fig:binning}). We then map $\X_{i,j}$ to the $k$'th standard basis vector $\e_k\in \R^{b_j}$ iff $\X_{i,j}\in \Ic_{j,k}$. This mapping also maps the $j$th feature column $\X_{:,j}$ to a matrix $\Xbin^j$ of size $n\times b_j$. Our data-dependent binning strategies are outlined in Definition \ref{strategy}. We obtain the \bdata~$\Xbin$ by concatenating the binning matrices for individual features as:
\[
\Xbin=[\Xbin^1~\dots~\Xbin^p] \in \mathbb{R}^{n \times \pb},
\]
where $\pb=\sum_{i\leq p}b_i$. The new estimator for $\y$ can then take the form $\hat\y = \Xbin\bbeta$ where $\bbeta\in\R^{\pb}$. This estimator helps us learn distinct coefficients for individual bins $\Ic_{j,k}$ as we learn a $b_j$ dimensional vector $\bbeta^j$ for feature $j$. The vector $\bbeta^j$ is essentially the discrete representation of the function $f_j$. %In addition to its computational virtues, we shall demonstrate in Section~\ref{sec-main-results} that the binning strategy also enjoys nice statistical guarantees. 

%With the help of this discretization, in SecOur main contribution is that we propose a projected gradient descent algorithm to learn a regularized $\bbeta$ and prove that it learns the feature nonlinearities quickly.

{\bf{Regularization with global sparsity and local smoothness.}}~To quickly and provably learn $\bbeta$ from labels $\y$ and \bdata~$\Xbin$, we apply two regularizations on $\bbeta$ which we call global and local constraints respectively. Global constraint enforces sparse feature selection whereas local constraint enforces smooth structure of individual feature vectors $\{\bbeta^j\}_{j\leq p}$:
\vspace{-0.1cm}
\begin{itemize}
\item {\bf{Global constraint:}} Select only $s$ features out of $p$, which implies $\bbeta$ is $s_G$-group sparse where group sparsity is defined as $\|\bbeta\|_{G}=\sum_{i=1}^p \oneb_{\bbeta^i\neq 0}$
\item {\bf{Local constraints:}} Each subvector $\bbeta^j$ satisfies a smoothness constraint $\|\bbeta^j\|_{L}\leq s_L$. In particular, we enforce $\bbeta^j$ to be composed of $s_L$ \emph{piecewise-constant} or \emph{piecewise-linear} segments as illustrated in Figure \ref{fig1}. For instance, piecewise-constant smoothness of a vector $\ab\in\R^d$ is defined as $\|\ab\|_L:=\|\ab\|_{PC}:=\sum_{i=1}^{d-1} \oneb_{\ab_{i}\neq \ab_{i+1}}$.
\end{itemize}
Observe that columns of each submatrix $\Xbin^i$ adds up to the all ones vector; hence, columns of $\Xbin$ are collinear. To work around this collinearity, we also enforce the condition $\onebb^{\top}\bbeta_j=0$.

To minimize a loss function $\Lc(\bbeta)$ (which is often $\Lc(\y,\Xbin\bbeta)$) for labels vector $\y$ and \bdata~$\Xbin$, we have the following optimization problem
\[
\min_{\bbeta} \Lc(\bbeta)~~~\text{subject to}~~~\|\bbeta\|_G\leq s_G,~\|\bbeta^j\|_L\leq s_L,~\onebb^{\top}\bbeta^j=0~\text{for}~1\leq j\leq p.
\]
We propose Algorithm \ref{pgd-algo}, which is a variant of projected gradient descent  to solve the optimization problem. The algorithm first takes a step in the direction of the gradient and then projects the updated solution $\bbeta_t$ to the constraint set. Observe that when dealing with a regression problem, the gradient takes the form $\nabla \Lc(\bbeta)=\Xbin^{\top}(\Xbin\bbeta-\y)$.  The projection step can performed efficiently by performing piecewise constant/linear approximation algorithms of~\cite{keogh2001locally}.

\begin{algorithm}[t] \hrulefill \\{\bf{Input:}} Sparsity $s_G$, local smoothness $s_L$, loss function $\Lc$, step size $\mu$, iteration count $\tau$.\\
{\bf{Output:}} Parameter $\bbeta$\\
{\bf{Initialization:}} $\bbeta_0\gets 0$, $t=0$.\\
\While{$1\leq  t\leq  \tau$}{
$\bbeta_{t}\gets \bbeta_{t-1}-\mu\grad{\bbeta_{t-1}}$\hspace{48pt}(gradient descent)\\
\For{$1\leq i\leq p$}{
$\bbeta^j_{t} \gets \bbeta^j_t-b_j^{-1}\onebb\onebb^{\top}\bbeta^j_t$\hspace{33pt}~~~~~(local zero-mean)\\
$\bbeta^j_{t} \gets \Pc_{L}(\bbeta^j_t)$\hspace{75pt}($s_L$ local-smoothness with projection operator $\Pc_{L}$)\\
}
$\bbeta_{t}\gets\Pc_{G}(\bbeta_t)$\hspace{92pt}($s_G$ global-sparsity with projection operator $\Pc_{G}$)\\
}
$\bbeta\gets\bbeta_\tau$
\caption{Non-Convex Regularized Binned Regression (RBR)}\label{pgd-algo}\vspace{-3pt}
\end{algorithm}
\vspace{-2pt}
Before moving to our theoretical contributions, we outline the advantages of proposed RBR algorithm.
\begin{itemize}
\item The proposed algorithm can learn inherent nonlinearities of features and is more expressive than linear models such as Lasso.
\item While $\Xbin$ is a larger matrix, it is sparse (in particular, $\nnz{\Xbin}=\nnz{\X}$), and gradient iterations involve sparse matrix multiplication. Hence, the runtime is fast (see Section \ref{sec runtime}).
\item The output is interpretable because the final prediction $\Xbin\bbeta_\star$ is separable in individual features as $\sum_{i=1}^p \Xbin^j\bbeta_\star^j$. Each feature can be visualized by plotting $\bbeta_\star^j$.
\item The algorithm is based on gradient descent, and can possibly be integrated with neural networks. While there are technical challenges, one could replace the softmax layer of deep neural networks with Algorithm \ref{pgd-algo}. The additional expressivity might help with faster convergence or improved prediction performance.
\end{itemize}
\vspace{-5pt}
% We are interested in ensuring that Algorithm $1$ converges for a given loss function. We first present a result on convergence of projected gradient descent for nonconvex loss functions. This result is in a similar flavor to that of \cite{} however, we allow arbitrary projections. In return, our assumption is stronger than restricted strong convexity.

\section{Main results}\label{sec-main-results}
In this section, we answer fundamental questions about the statistical and computational properties of the RBR algorithm. In particular, we aim to rigorously answer the following key questions:
\begin{itemize}
\item Under what conditions does the proposed RBR algorithm succeed?
\item Does the design matrix $\Xbin$ generated by feature binning have desirable statistical properties for high-dimensional learning problems?
\end{itemize}
%First, we shall establish the conditions for the convergence of Algorithm \ref{pgd-algo}. 
Let us denote the overall constraint set (combination of global and local constraints) as $\Cc$, defined as:
\begin{align}
\Cc=\{\bbeta\in\R^{\pb}~\big|~\|\bbeta\|_G\leq s_G,~\|\bbeta^j\|_L\leq s_L,~\onebb^{\top}\bbeta^j=0~1\leq j\leq p\}\label{constraint_eq}
\end{align}
%let us denote the overall projection operator  as $\Pc_{\Cc}(\cdot)$ where $\Cc$ is the set $\bbeta$ belongs. 
We will analyze general non-convex projections $\Pc_{\Cc}(\cdot)$ where the main property we use is that $\Cc$ is a closed cone. We introduce \emph{restricted gradient correlation} (RGC) for analyzing such projections.
\begin{definition}[Restricted gradient correlation condition] \label{def rgc} Function $\Lc(\cdot)$ obeys RGC over the set $\Cc$ with parameters $\mur,\epsr>0$ if all vectors $\vb\in\Cc-\Cc$ and $\x,\y\in\Cc$ satisfy
\[
|\li\vb,\x-\y\ri-\mur\li\vb, \grad{\x}-\grad{\y}\ri|\leq \epsr\tn{\vb}\tn{\x-\y}.
\]
\end{definition}
%samet look here
%It can be shown that RGC implies conventional restricted smothness and restricted strong convexity conditions \cite{jain2014iterative,negahban2009unified}. Furthermore, it is implied by them if $\grad{}$ is a linear operator. 
We note that the RGC condition is closely related to the restricted smoothness and restricted strong convexity conditions that find frequent use in high-dimensional problems~\cite{jain2014iterative,negahban2009unified}. For instance, RGC is implied by them if $\grad{\cdot}$ is a linear operator. 

\subsection{Convergence analysis and statistical guarantees}
We now demonstrate that the loss function converges \textit{geometrically} with restricted gradient correlation and that one can achieve good statistical estimation with RBR.  First, we state a general result on the convergence of  Algorithm \ref{pgd-algo} and   then establish the conditions for the convergence of RBR. Our  result is in a similar flavor to \cite{jain2014iterative}; however, \cite{jain2014iterative} applies to general non-convex sets $\Cc$ to address \eqref{constraint_eq} instead of only sparsity constraints.  We refer the reader to the supplementary material for the detailed proofs of all of our results.
\begin{theorem}\label{general loss} Suppose $\Lc(\cdot)$ obeys restricted gradient correlation with parameters $\mur,\epsr$ over the set $\Cc$ and let $\bbetas= \arg\min_{\bbeta \in \Cc} \Lc(\bbeta)$ be the  {\emph{unique}} minimizer. Starting from $\bbeta_0=0$, run the iterations
\[
\bbeta_{t+1}=\Pc_\Cc(\bbeta_t-\mur\grad{\bbeta_t})%Algorithm \ref{pgd-algo} for $\tau$ steps
\] for $\tau$ steps. When $\epsr<0.5$, we have linear convergence to $\bbetas$ as follows
\begin{align}
\tn{\bbeta_{\tau}-\bbetas}\quad&\leq\quad (2\epsr)^\tau\tn{\bbetas},\nn\\
\Lc(\bbeta_\tau)-\Lc(\bbetas)\quad&\leq\quad(2\epsr)^{2\tau}\frac{\epsr+1}{\mur}\tn{\bbetas}^2+(2\epsr)^\tau \tn{\bbetas}\tn{\grad{\bbetas}}.\nn
\end{align}
Furthermore, for any $\bbeta \in \Cc$ that is  estimated by minimizing $\Lc$, we have 
\[
\tn{\bbeta_{\tau}-\bbeta}\leq (2\epsr)^\tau\tn{\bbeta}+\frac{2\mur}{1-2\epsr}\tn{\Pc_{\Cc-\Cc}(-\grad{\bbeta})}.
\]
\end{theorem}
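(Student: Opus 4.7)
The plan is to derive a single one-step contraction bounding $\tn{\bbeta_{t+1}-\bbeta}$ in terms of $\tn{\bbeta_t-\bbeta}$ for an arbitrary fixed $\bbeta\in\Cc$, unroll it to obtain the third (most general) inequality of the theorem, and then specialize and refine for the statements about $\bbetas$.

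The one-step bound itself comes in two steps. First, a projection inequality: let $\z_t=\bbeta_t-\mur\grad{\bbeta_t}$; since $\bbeta\in\Cc$ and $\bbeta_{t+1}=\Pc_\Cc(\z_t)$ is a nearest point in $\Cc$ to $\z_t$, the basic estimate $\tn{\bbeta_{t+1}-\z_t}\le\tn{\bbeta-\z_t}$ (which requires no convexity of $\Cc$) combined with a Pythagorean expansion of $\tn{\bbeta_{t+1}-\bbeta}^2$ via $\bbeta_{t+1}-\bbeta=(\bbeta_{t+1}-\z_t)+(\z_t-\bbeta)$ collapses into the key estimate $\tn{\bbeta_{t+1}-\bbeta}^2\le 2\li\bbeta_{t+1}-\bbeta,\z_t-\bbeta\ri$. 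Second, I would split $\z_t-\bbeta=\big((\bbeta_t-\bbeta)-\mur(\grad{\bbeta_t}-\grad{\bbeta})\big)-\mur\grad{\bbeta}$ and control the two pieces separately. Setting $\vb=\bbeta_{t+1}-\bbeta\in\Cc-\Cc$ and using $\bbeta_t,\bbeta\in\Cc$, the RGC condition (Definition~\ref{def rgc}) controls the bracketed piece by $\epsr\tn{\vb}\tn{\bbeta_t-\bbeta}$, while for the $-\mur\grad{\bbeta}$ piece the cone structure of $\Cc-\Cc$ and the duality $\sup_{\w\in\Cc-\Cc,\tn{\w}=1}\li\w,\cdot\ri=\tn{\Pc_{\Cc-\Cc}(\cdot)}$ give $\li\vb,-\mur\grad{\bbeta}\ri\le\mur\tn{\vb}\tn{\Pc_{\Cc-\Cc}(-\grad{\bbeta})}$. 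Dividing through by $\tn{\vb}$ yields the one-step recursion
\[
\tn{\bbeta_{t+1}-\bbeta}\le 2\epsr\tn{\bbeta_t-\bbeta}+2\mur\tn{\Pc_{\Cc-\Cc}(-\grad{\bbeta})},
\]
and unrolling from $\bbeta_0=0$ while summing the geometric series under $2\epsr<1$ delivers the third statement directly.

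The first statement is then obtained by specializing to $\bbeta=\bbetas$ and eliminating the residual $\tn{\Pc_{\Cc-\Cc}(-\grad{\bbetas})}$; the route I would take is to use optimality of $\bbetas$ over the closed cone $\Cc$ together with the restricted strong convexity implicit in RGC to establish $\li\grad{\bbetas},\bbeta_{t+1}-\bbetas\ri\ge 0$, so that the $-\mur\grad{\bbetas}$ term in the splitting above vanishes and the recursion collapses to the pure contraction $\tn{\bbeta_{t+1}-\bbetas}\le 2\epsr\tn{\bbeta_t-\bbetas}$. For the excess loss bound, converting distance into loss is then a matter of using RGC in its restricted-smoothness direction to produce the second-order expansion $\Lc(\bbeta_\tau)-\Lc(\bbetas)\le\li\grad{\bbetas},\bbeta_\tau-\bbetas\ri+\tfrac{\epsr+1}{2\mur}\tn{\bbeta_\tau-\bbetas}^2$, after which the first bound combined with Cauchy--Schwarz on the linear term recovers the claimed mixture of $(2\epsr)^{2\tau}$ and $(2\epsr)^{\tau}$ terms.

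The main obstacle is the non-convexity of $\Cc$: for a convex projection the variational inequality $\li\z_t-\bbeta_{t+1},\bbeta-\bbeta_{t+1}\ri\le 0$ would give a cleaner contraction without any factor of two, but here the weaker Pythagorean substitute used above is the best available, and this is precisely what injects the factor $2$ in front of $\epsr$ and forces the sharp threshold $\epsr<1/2$ stated in the theorem. A secondary subtlety is that the duality step controlling the residual and the optimality argument collapsing it at $\bbetas$ both rely on $\Cc$ (and hence $\Cc-\Cc$) being a closed cone, so some care is needed to ensure these steps match the exact quantity $\tn{\Pc_{\Cc-\Cc}(-\grad{\bbeta})}$ appearing in the statement rather than a looser surrogate.
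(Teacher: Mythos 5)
Your proof is correct and reaches the same one-step recursion as the paper, but it realizes the crucial projection step differently. The paper passes to the tangent cone $\Tc_{\Cc,\bbetas}=\cone{\Cc-\{\bbetas\}}$ and invokes an external result (Lemma~6.4 of the companion paper) to get $\tn{\Pc_{\Cc-\{\bbetas\}}(\cdot)}\leq 2\tn{\Pc_{\Tc_{\Cc,\bbetas}}(\cdot)}$, then rewrites the cone projection as a supremum of inner products over the unit ball of the tangent cone before splitting off the RGC and KKT terms. You instead derive the factor $2$ from first principles: the nearest-point property $\tn{\bbeta_{t+1}-\z_t}\leq\tn{\bbeta-\z_t}$ (valid for any closed set) plus the Pythagorean expansion gives $\tn{\bbeta_{t+1}-\bbeta}^2\leq 2\li\bbeta_{t+1}-\bbeta,\z_t-\bbeta\ri$, after which you test against the single direction $\vb=\bbeta_{t+1}-\bbeta\in\Cc-\Cc$ rather than a supremum over a cone. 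This is self-contained (no tangent cones, no cited lemma) and is essentially the IHT-style argument of Jain et al.; what the paper's route buys in exchange is that the supremum formulation isolates the quantity $\tn{\Pc_{\Tc_{\Cc,\bbetas}}(-\grad{\bbetas})}$ over the potentially smaller tangent cone, though since RGC is assumed over all of $\Cc-\Cc$ this yields no gain here. The remaining ingredients coincide: both kill the $-\mur\grad{\bbetas}$ term via first-order optimality of $\bbetas$ (your phrasing is slightly vaguer than the paper's explicit KKT statement, but neither treatment fully addresses that for a non-convex cone the feasible-direction optimality condition requires a short justification), both bound the residual at a general $\bbeta$ by $\tn{\Pc_{\Cc-\Cc}(-\grad{\bbeta})}$ via the cone-projection duality, and both convert the distance bound into the loss bound through the smoothness/convexity consequences of RGC followed by Cauchy--Schwarz (your constant $\frac{\epsr+1}{2\mur}$ is in fact slightly tighter than the paper's $\frac{\epsr+1}{\mur}$ and still implies the stated inequality).
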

We now demonstrate the implications of Theorem \ref{general loss} on Algorithm \ref{pgd-algo}. To do this, we need to understand the properties of binned matrix $\Xbin$, which depends on how the features are binned. We analyze two different strategies for binning data matrix:
\begin{definition} [Feature binning schemes] \label{strategy}For each $1\leq j\leq p$, the nonzero entries over $\Xbin^j$ are equal to $\sqrt{b_j/n}$. Furthermore,%We have the following definitions for feature binning.
\begin{itemize}
\item {\bf{\rrb:}} Allocates equal number of samples ($n/b_j$) to each bin for all features.
%$\Xbin_R\in\R^{n\times \pb}$ is chosen uniformly at random from the set of matrices satisfying: i) Columns of $\Xbin^j$ have $n/b_j$ nonzero entries. ii) Nonzero entries of $\Xbin^j$ are $\sqrt{b_j/n}$. This scheme corresponds to allocating equal number of samples ($n/b_j$) each bin each feature. %Allocates equal number of samples ($n/b_j$) to each bin for all features.
\item {\bf{\irb:}} For each feature $j$, the size of each bin follows a binomial distribution with parameters $(n,1/b_j)$; i.e.,~the sum of $n$ independent Bernoulli's with mean $b_j^{-1}$. The sole dependence of $b_j$ bin size variables is that they add up to $n$. % The sole dependence of bin sizes is that total adds up to $n$.
%$\Xbin_I\in\R^{n\times \pb}$ is a matrix with independent rows. Each row is a random encoding vector normalized by $\sqrt{n}$. This scheme corresponds to binning $j$th feature based on the number of nonzeros of columns of $\Xbin^j$.
%chosen uniformly at random from the set of matrices satisfying the followings: 1) $\{\Xbin^j\}_{j=1}^p$ are independent random matrices. 2) 
%$\Xbin\in\R^{n\times \pb}$ is a random matrix where each row is statistically identical to a random encoding vector  normalized by $\sqrt{n}$ and it is generated uniformly at random from matrices having equal number of nonzeros at each block $\Xbin^j$ for $1\leq j\leq p$. This corresponds to allocating $n/b_j$ samples each bin each feature.
%For each feature, create 
\end{itemize}
\end{definition} 
Observe that for fixed $\{b_j\}_{j\leq p}$, regular and binomial binning methods have similar bin sizes as $n\rightarrow\infty$ due to law of large numbers. % With these definitions 
To state the main result on the performance of the algorithm, we require the following assumptions on data matrix $\X$.
\begin{assumption}\label{main assume} Entries of $\X$ are random variables with continuous distribution. Furthermore,
\begin{itemize}
\item {\bf{Independent features:}} $\X$ has independent columns.
\item {\bf{Independent identical samples:}} $\X$ has independent identically distributed (i.i.d.) rows. 
\end{itemize}
\end{assumption}
% with independent rows each of which are random encoding vectors. This corresponds to binning $j$th feature based on the number of nonzeros of columns of $\Xbin^j$. 
These assumptions are sufficient to ensure that each row of $\Xbin$ is composed of randomly one-hot encoded vectors: Each row is statistically identical to a vector $\ab\in\R^{\pb}$ where $\{\ab^j\}_{j=1}^p$ are $b_j$-dimensional independent vectors and $\sqrt{n/b_j}\ab^j$ is uniformly distributed over the standard basis. Furthermore, with the \irb~strategy, $\Xbin$ has \emph{i.i.d.} rows, which is crucial for our analysis. Continuous distribution assumption is used to ensure that feature values are distinct with probability $1$ and there is no ambiguity during binning stage.

By construction, for any $\bbeta\in\Cc$, $\Xbin$ obeys $\E[\Xbin\bbeta]=0$ and $\E[\tn{\Xbin\bbeta}^2]=1$. We have the following result for quadratic loss function $\Lc(\bbeta)=\tn{\y-\Xbin\bbeta}^2$ when using Algorithm \ref{pgd-algo}.% when features are independent of each other and ``\irb`` strategy is employed.
\begin{theorem}\label{main binning} Suppose $\max_{1\leq j\leq p} b_j$ is upper bounded by a constant and Assumption \ref{main assume} holds. Create a {\em{\irb}}~matrix $\Xbin$.
%Let features of data matrix $\X$ be binned according to {\bf{\irb}} strategy to form binned matrix $\Xbin$ so that $\Xbin$ has independent rows which are random feature encodings. 
Suppose we observe samples $\y\in\R^n$ obeying
\[
\y=\Xbin\btrue+\z
\]
for some planted vector $\btrue\in\R^{\pb}$ and noise $\z\in\R^n$. There exists constants $c,C>0$ such that if \begin{align}n>n_0:=cs_Ls_G\log \pb\label{sample comp}\end{align} with probability $1-\exp(-Cn)$, starting from $\bbeta_0=0$, the iterations of Algorithm \ref{pgd-algo} with step size $\mu=1$ obey
\begin{align}
\tn{\bbeta_\tau-\btrue}\leq \left(\frac{n_0}{n}\right)^{\tau/2}\tn{\btrue}+\eta\tn{\z}\label{opt error}
\end{align}
with $\eta=\sqrt{\frac{n_0}{n}}$. Under the same assumptions, if we employ a {\em{\rrb}}~matrix $\Xbin$, \eqref{opt error} holds for $\eta=2$, with probability $1-\exp(-Cn)-\pb^{-10}$ as long as we additionally have $n\geq cs_G^2\max_{j\leq p}b_j\log\pb$.
\end{theorem}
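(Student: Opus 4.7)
For the squared loss $\Lc(\bbeta)=\tfrac{1}{2}\tn{\y-\Xbin\bbeta}^2$, the gradient is linear, $\grad{\x}-\grad{\y}=\Xbin^{\top}\Xbin(\x-\y)$, so with $\mur=1$ the RGC condition of Definition \ref{def rgc} collapses to the restricted isometry-type inequality
\[
\sup_{\substack{\vb,\w\in\Cc-\Cc\\ \tn{\vb}=\tn{\w}=1}}\bigl|\langle\vb,(\Iden-\Xbin^{\top}\Xbin)\w\rangle\bigr|\ \leq\ \epsr.
\]
Thus the whole proof reduces to (a) establishing this RIP-type bound on the non-convex cone $\Cc-\Cc$ with $\epsr\approx\tfrac12\sqrt{n_0/n}$ and (b) controlling the noise term $\tn{\Pc_{\Cc-\Cc}(-\grad{\btrue})}=\tn{\Pc_{\Cc-\Cc}(\Xbin^{\top}\z)}$. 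Then Theorem~\ref{general loss} applied to the unique minimizer $\btrue\in\Cc$ (with $\grad{\btrue}=-\Xbin^{\top}\z$) yields the stated bound \eqref{opt error}.

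My first step is a structural analysis of $\Cc-\Cc$. A difference of two vectors from $\Cc$ is $2s_G$-group-sparse and, within each active group, admits a piecewise-constant/linear decomposition with at most $2s_L$ segments and zero mean. So $(\Cc-\Cc)\cap S^{\pb-1}$ sits inside a union over (i) choices of $\binom{p}{2s_G}$ active groups, and (ii) for each active $j$, choices of at most $2s_L$ break-points among $b_j-1$ positions. On each such combinatorial piece the set is a low-dimensional subspace intersected with the sphere, of effective dimension $O(s_Gs_L)$. A standard covering/Dudley or Gordon-type bound therefore estimates the Gaussian width of $(\Cc-\Cc)\cap S^{\pb-1}$ as $O(\sqrt{s_Gs_L\log\pb})$, matching the advertised $n_0=cs_Gs_L\log\pb$.

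The second step is the concentration of $\Xbin^{\top}\Xbin$ on $\Cc-\Cc$. For \irb, rows of $\Xbin$ are i.i.d.\ and, after incorporating the zero-mean constraint $\onebb^{\top}\bbeta^j=0$, satisfy $\E[\Xbin^{\top}\Xbin]=\Iden$ on the admissible subspace; entries are bounded ($\sqrt{b_j/n}$ with $b_j$ constant), so each $\tn{\Xbin\vb}^2$ is a sub-exponential sum. I would apply a union bound/peeling argument over the combinatorial pieces of $\Cc-\Cc$ identified above, using Bernstein's inequality (or a matrix Bernstein step on each low-dimensional piece) and the covering estimate from the previous paragraph. This yields the RIP constant $\epsr\lesssim\sqrt{s_Gs_L\log\pb/n}$ with probability $1-e^{-Cn}$, and picking the constant $c$ large enough gives $\epsr<1/2$ and the geometric contraction rate $(2\epsr)^\tau\le(n_0/n)^{\tau/2}$.

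For the noise, I would use that for any $\vb\in(\Cc-\Cc)\cap S^{\pb-1}$, $\langle\vb,\Xbin^{\top}\z\rangle=\langle\Xbin\vb,\z\rangle$ with $\tn{\Xbin\vb}\le\sqrt{1+\epsr}$, so taking the supremum over $\vb$ and invoking a Gaussian/Rademacher-complexity bound (again via the covering of $(\Cc-\Cc)\cap S^{\pb-1}$) gives $\tn{\Pc_{\Cc-\Cc}(\Xbin^{\top}\z)}\lesssim\sqrt{n_0/n}\,\tn{\z}$ conditional on the RIP event; this matches $\eta=\sqrt{n_0/n}$.

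Finally, for \rrb, the rows of $\Xbin$ are no longer independent: the constraint that each feature places exactly $n/b_j$ samples in every bin induces negative correlation among rows. I would handle this by conditioning on the group support: for any fixed set $S$ of $s_G$ groups, the submatrix $\Xbin_S$ of size $n\times s_Gb_j$ is well-behaved (each column is a uniformly random indicator vector of the right density, and restricted to a fixed low-dimensional structural subspace of dimension $O(s_Gs_L)$ one obtains a Hoeffding/Bernstein-type concentration after centering). Union-bounding over $\binom{p}{s_G}$ supports and, within each support, over the piecewise-constant patterns, produces the extra factor $s_G\log\pb$ in the covering, while controlling the maximum column density inside each block needs a separate concentration whose failure probability (at scale $\pb^{-10}$) explains the additive requirement $n\gtrsim s_G^2\max_j b_j\log\pb$ in the regular case, and forces $\eta$ to degrade to the absolute constant $2$ (since the bias from the deterministic bin-size constraint does not shrink with $n$).

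The main obstacle is the combined covering analysis in step two: the cone $\Cc-\Cc$ is genuinely combinatorial in two ways (group support \emph{and} break-point support within each group), and getting $n_0=s_Gs_L\log\pb$ rather than a product $s_L\log b_j\cdot s_G\log p$ or worse requires carefully exploiting that the smoothness log factor can be absorbed into $\log\pb$. The \rrb~argument adds a second layer of difficulty because independence across rows is lost and one must extract concentration from exchangeability and the $\pb^{-10}$ bad event explicitly.
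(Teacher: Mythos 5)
Your plan for the \irb~case is essentially the paper's: reduce to an RIP-type bound on $\Cc-\Cc$ via the linearity of the gradient, bound the complexity of the cone by $O(s_Gs_L\log\pb)$, and invoke Theorem~\ref{general loss}. The paper implements the two technical steps slightly differently from you: the width bound is obtained not by a union bound over $\binom{p}{2s_G}\cdot\prod_j\binom{b_j}{2s_L}$ subspaces but by a ``sum of top $s_G$'' decomposition of the Gaussian complexity (Theorems~\ref{samp comp}--\ref{local-sample}, using Lipschitz concentration of $\tn{\Pc_{\Sc_j}(\g^j)}$ and a union bound over the $b^2$ segment endpoints), and the concentration of $\Xbin^{\top}\Xbin$ is obtained by first proving that the rows of the \irb~matrix are i.i.d.\ subgaussian with norm $O(\max_j\sqrt{b_j})$ (Theorems~\ref{subgauss rows} and \ref{good one}) and then citing a generic subgaussian restricted-isometry result (Proposition~\ref{propo copy}) rather than a hand-rolled Bernstein-plus-covering argument. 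Both routes land on the same $n_0$, and your covering approach is a legitimate alternative given that $b_j$ is bounded. Your noise bound likewise matches the paper's in spirit, though note that the $\sqrt{n_0/n}$ factor comes from the subgaussianity of $\Xbin^{\top}\z/\tn{\z}$ (generic chaining bounds the subgaussian width by the Gaussian width), not from the Cauchy--Schwarz step you write first, which only yields a constant.

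The place where you genuinely diverge, and where your sketch is thinnest, is the \rrb~case. The paper does not analyze the dependent-row matrix directly via exchangeability; it constructs an explicit coupling (Theorem~\ref{irb to rrb}) that transforms an \irb~matrix into an \rrb~matrix by redistributing the excess nonzeros of over-full columns into under-full ones, bounds $\nnz{\Db^j_{:,c}}$ by Chernoff, observes that the difference matrix has orthogonal columns within each of $S_+$ and $S_-$ so its spectral norm is controlled by the largest column norm, and uses Lemma~\ref{concated} to pay only $\sqrt{s_G}$ for concatenation --- this is exactly where the extra condition $n\gtrsim s_G^2\max_jb_j\log\pb$ and the $\pb^{-10}$ failure probability come from. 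Your proposed direct argument via negative correlation and conditioning on supports would have to reprove concentration for a matrix with dependent rows from scratch, which is the hard part you are deferring. Also, your explanation of why $\eta$ degrades to $2$ is not the paper's mechanism: it is not a non-vanishing bias, but simply that without row subgaussianity one can no longer bound $\sup_{\vb\in\Cc-\Cc}\vb^{\top}\Xbin^{\top}\z$ by a width argument and must fall back on $\tn{\z}\sup_{\vb}\tn{\Xbin\vb}\leq(1+\eps)\tn{\z}$, which is $O(1)$ rather than $O(\sqrt{n_0/n})$.
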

%\proofsk{}
\begin{proof} We only provide a sketch of proof and defer the detailed proof to the appendix. The proof of first claim consists of two steps. The first step involves properties of the \irb~matrix $\Xbin$ when $\X$ obeys Assumption \ref{main assume}. By construction $\Xbin$ have independent rows but dependent columns. Denote its first row by $\xbin$. Consider the subspace $S_{zm}=\{\vb\in\R^{\pb}\big|\onebb^{\top}\vb^i=0~1\leq i\leq p\}$. Our first major result shows that for any $\vb\in S_{zm}$, $\xbin^{\top}\vb$ is a zero-mean subgaussian random variable with unit variance. From the results of \cite{companion, UUP2}, restricted gradient correlation can be controlled if size of the constraint set $\Cc$ is small. We control size in terms of Gaussian complexity \cite{McCoy,Cha} which is defined as $\omega(\Cc)=\E[\sup_{\vb\in\Cc,\tn{\vb}\leq 1}(\g^{\top}\vb)^2]$. In particular, $\epsr^2\lesssim \order{\frac{\omega(\Cc)}{n}}$.

Our second crucial estimate is bounding the quantity $\omega(\Cc)$ in terms of sparsity $s_G$ and smoothness $s_L$. We do this by proving the upper bound $\omega(\Cc)\leq c s_Gs_L\log \pb$.

The proof of our result on \rrb~is based on approximating a \rrb~matrix in terms of a \irb~matrix and is provided in the supplementary material.
% The idea is to bound restricted eigenvalues of $\Xbin_R$ in terms of that of $\Xbin_I$ and a small perturbation matrix.
\end{proof}
Remarkably, for the \irb~scheme, Theorem \ref{main binning} is optimal in the sense that the number of required samples $n_0$ is proportional to the degrees of freedom $s_Ls_G$ (total number of discontinuities of $\btrue$) up to logarithmic factors. Furthermore, the error bounds provided in \eqref{opt error} is consistent with state-of-the-art results such as \cite{companion,jain2014iterative,negahban2009unified}.

For the more practical \rrb~scheme, we have similar but weaker results. This is due to the fact that, binning matrix $\Xbin$ has more structure (e.g. dependent rows) and is more challenging to analyze. In particular, the number of selected features $s_G$ can scale as $\order{\sqrt{n}}$ instead of $\order{n}$.

%In Theorem \ref{main binning}, \irb~strategy provides state of the art estimation and convergence bounds. However, it has the drawback that, features should be binned according to the \irb~matrix described in Definition \ref{strategy}. A more practical approach is equally partitioning a feature to each bin which results in the \rrb~matrix $\Xbin=\Xbin_R$. This matrix is more challenging to analyze due to dependent rows hence we require the additional constraint \eqref{regular binning}.% We have the following corollary of Theorem \ref{main binning}.
%\begin{corollary}\label{cor binning} Consider the setup in Theorem \ref{main binning}. Let $\x\in\R^p$ be a vector with independent entries having continuous distribution. Form the data matrix $\X\in\R^{n\times p}$ by using $n$ i.i.d.~copies of $\x$. Let the binning intervals $\{I_{j,k}\}_{j\leq p,k\leq b_j}$ are so that $\Pr(\x_j\in I_{j,k})=b_j^{-1}$. Form binned matrix $\Xbin$ from $\X$. Suppose samples $\y\in\R^n$ obey
%\[
%\y=\Xbin\btrue+\z
%\]
%where $\btrue\in\R^{\pb}$ is the true hidden vector and $\z\in\R^n$ is a Gaussian noise. Then, there exists constants $c_1,c_2$ such that if $n>n_0:=c_1s_Ls_G\log \pb$, with probability $1-\exp(-c_2n)$, starting from $\bbeta_0=0$, the iterations obey
%\[
%\tn{\bbeta_\tau-\btrue}\leq \left(\frac{n_0}{n}\right)^{\tau/2}\tn{\btrue}+\sqrt{\frac{n_0}{n}}\tn{\z}.
%\]
%\end{corollary}

% This bound applies to not only piecewise constant smoothness, 
%\end{proof}
We should add the following remarks for better interpretation of Theorem \ref{main binning}:
\begin{itemize}
%\item Theorem \ref{main binning}, assumes knowledge of intervals $I_{j,k}$ that arises from true distribution of $\x_j$. In practice, $I_{j,k}$ are derived from empirical data $\X_{:,j}$ which is the $j$th feature column. before data matrix $\X$ is generated. In practice, a naive approach would be to distribute feature values equally to each bin. This would imply that each column of $\X^j$ contain $n/b_j$ nonzeros. In contrast, $\Xbin$ of Theorem \ref{main binning} has $n/b_j$ nonzeros in expectation. Theorem \ref{main binning} also applies to the following data dependent binning strategy. Generate a matrix $\Xbin$ by i.i.d. sampling $\xbin$. Then, for each feature bin the values so that, $k$th bin of $j$th feature has exactly $\|\Xbin^j_{:,k}\|_0$ samples. This binning is not uniform, but converges to uniform as $n$ grows.
\item Theorem \ref{main binning} can be generalized to account for distinct smoothness levels $\{s_{L,j}\}_{j\leq p}$. In \eqref{sample comp}, we simply replace $s_Ls_G$ with the sum of top $s_G$ elements of $\{s_{L,j}\}_{j\leq p}$ (see appendix).
\item It should be remarked that, our analysis also addresses one-hot encoded categorical (discrete) features. We can simply set $b_j$ to be the number of distinct feature values and let $s_{L,j}=b_j$.
\item The identical result applies to other types of smoothness such as piecewise-linear approximation where $s_L$ is the number of non-differentiable points instead of discontinuities.
%\item Observe that, by construction $\Xbin\bbeta$ has zero mean for all $\bbeta\in\Cc$. In practice, one can append an all-ones vector to form $\Xbin'=[\Xbin~\onebb]$ matrix and solve a $n\times (\pb+1)$ dimensional system that allows for the constant bias term in prediction.
\end{itemize}

\subsection{Runtime analysis}\label{sec runtime}
%An important feature of machine learning algorithms is the computational complexity. RBR algorithm enjoys desirable properties 
As stated in Theorem \ref{main binning}, the proposed RBR algorithm  converges linearly and requires $\log(1/\eps)$ iterations to achieve $\eps$ accuracy.  Here, we study the computational complexity of RBR and demonstrate that it enjoys desirable properties.  To find the overall runtime, we focus our analysis  on the running time of each step $\bbeta_{t+1}\gets\bbeta_t-\Lc(\y,\Xbin\bbeta_t)$. For most applications such as classification with logistic loss and regression with squared loss, which are special cases of generalized linear models (GLM) \cite{mccullagh1984generalized}, the gradient has the following form
\[
\grad{\y,\Xbin\bbeta}=\Xbin^{\top}(g(\Xbin\bbeta)-\y)
\]
where $g(\cdot)$ is the so-called mean function. For least-squares $g(\cdot)$ is the identity and for logistic regression, $g(a)=\exp(a)/(1+\exp(a))$. In both cases, $g$ can be calculated in $\order{1}$ time. Hence, the gradient step takes $\order{\nnz{\Xbin}}=\order{\nnz{\X}}\leq \order{np}$. The second step is the projection which involves:
\begin{itemize}
\item {\bf{Local zero-mean:}} $\order{\pb=\sum_{i}b_i}$ complexity,
\item {\bf{Local smoothness:}} For feature $j$, an $\order{b_j\log b_j}$ algorithm exists for good piecewise-constant approximation \cite{keogh2001locally}. Hence the total runtime is $\order{\pb\log \max_j b_j}$.%We remark that one can also minimize convexified smoothness total variation minimization as well \cite{condat2013direct}.
\item {\bf{Global sparsity:}} Requires thresholding vectors $\bbeta^j$ by their $\ell_2$ norms and returning the top $s_G$ entries. Consequently, it has $\order{\pb+p\log p}$ time complexity.
\end{itemize}
Overall, each iteration takes $\order{\pb\log \pb+np}$ time. As long as $\max_jb_j\log \pb<\order{n}$, the dominating term has the same complexity as doing the matrix-vector product $\X\bbeta$, which is needed for the gradient iteration of ordinary linear regression. The total time it takes to reach an $\eps$ accurate solution is $\order{\log \eps^{-1}(np+\pb\log \pb)}$ as long as the RGC is satisfied.

% which is summarized in the following lemma.
%\begin{lemma} Suppose $\Lc(\bbeta)=\Lc(\y,\Xbin\bbeta)$ is the least squares or logistic loss function. Each step $\bbeta_{t+1}\gets \Pc_\Cc(\bbeta_t-\mu\grad{\bbeta})$ of the algorithm requires 
%\end{lemma}

\section{Numerical results}
This section is dedicated to numerical experiments involving our algorithm. %As discussed previously, the function space that can be represented by our algorithm is a superset of lasso-type algorithms and subset of gradient boosted regression trees. As a result, we will compare the numerical performance of three algorithms:
We compare the numerical performance of the following algorithms:
\begin{itemize}
\item \algo~(RBR)~(Algorithm~\ref{pgd-algo}),
\item Iterative hard-thresholding (IHT) \cite{blumensath2009iterative},
\item Gradient boosted regression trees (GBRT) (XGBoost implementation \cite{chen2016xgboost})
\end{itemize}
%With infinite amount of data, we expect that the more expressive algorithm should perform the best. This implies that, \algo~will likely outperform linear models (such as IHT) and will be outperformed by GBRT.
%As mentioned earlier, for \algo, we use the piecewise constant/linear approximation algorithms of \cite{keogh2001locally} to perform the projection step. 
For all of our experiments, we use $s_L=8$ segments per feature and $b_j=40$ bins. For XGBoost, we use $10$ trees with maximum tree depth of $6$. Also for all classification tasks, the training phase uses 80\% of the data. % As 

\begin{figure}
 \begin{subfigure}[b]{0.32\textwidth}
        \includegraphics[width=\textwidth]{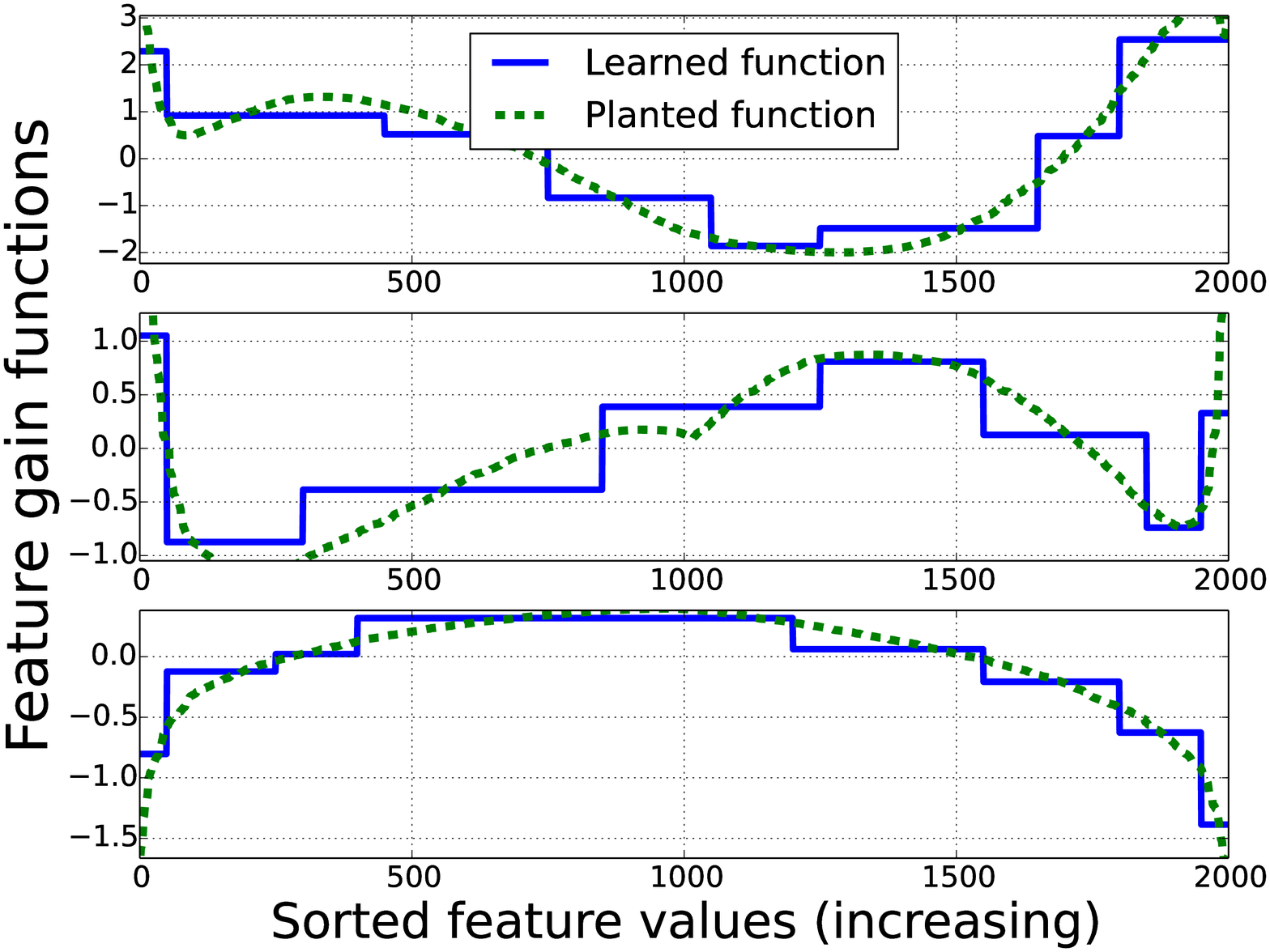}
        \caption{Learning planted non-linearities with Algorithm \ref{pgd-algo}}
        \label{fig:nonlinear}
    \end{subfigure}~
    \begin{subfigure}[b]{0.32\textwidth}
        \includegraphics[width=\textwidth]{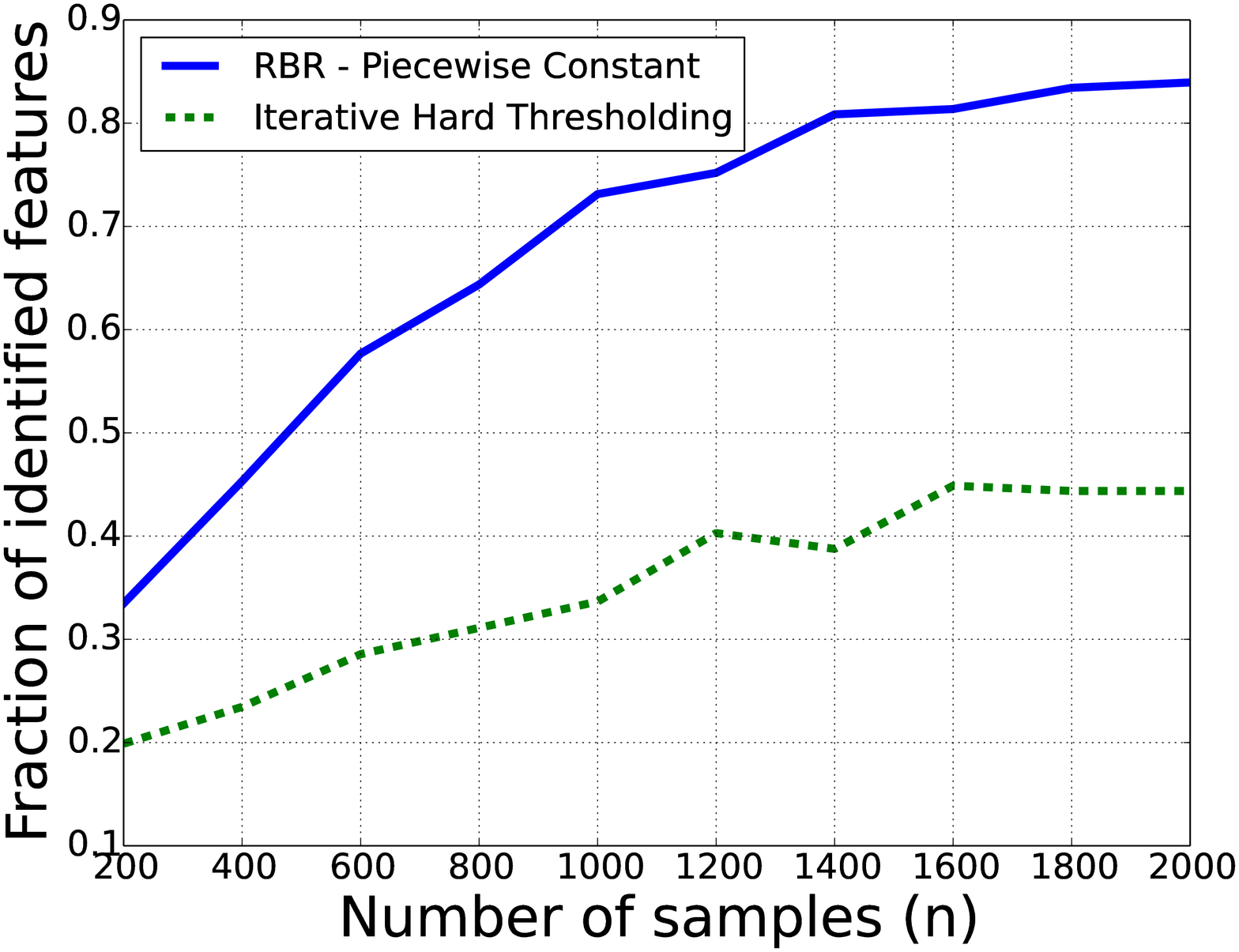}
        \caption{Feature selection performance of RBR vs IHT}
        \label{fig:select}
    \end{subfigure}~
    \begin{subfigure}[b]{0.35\textwidth}
        \includegraphics[width=\textwidth]{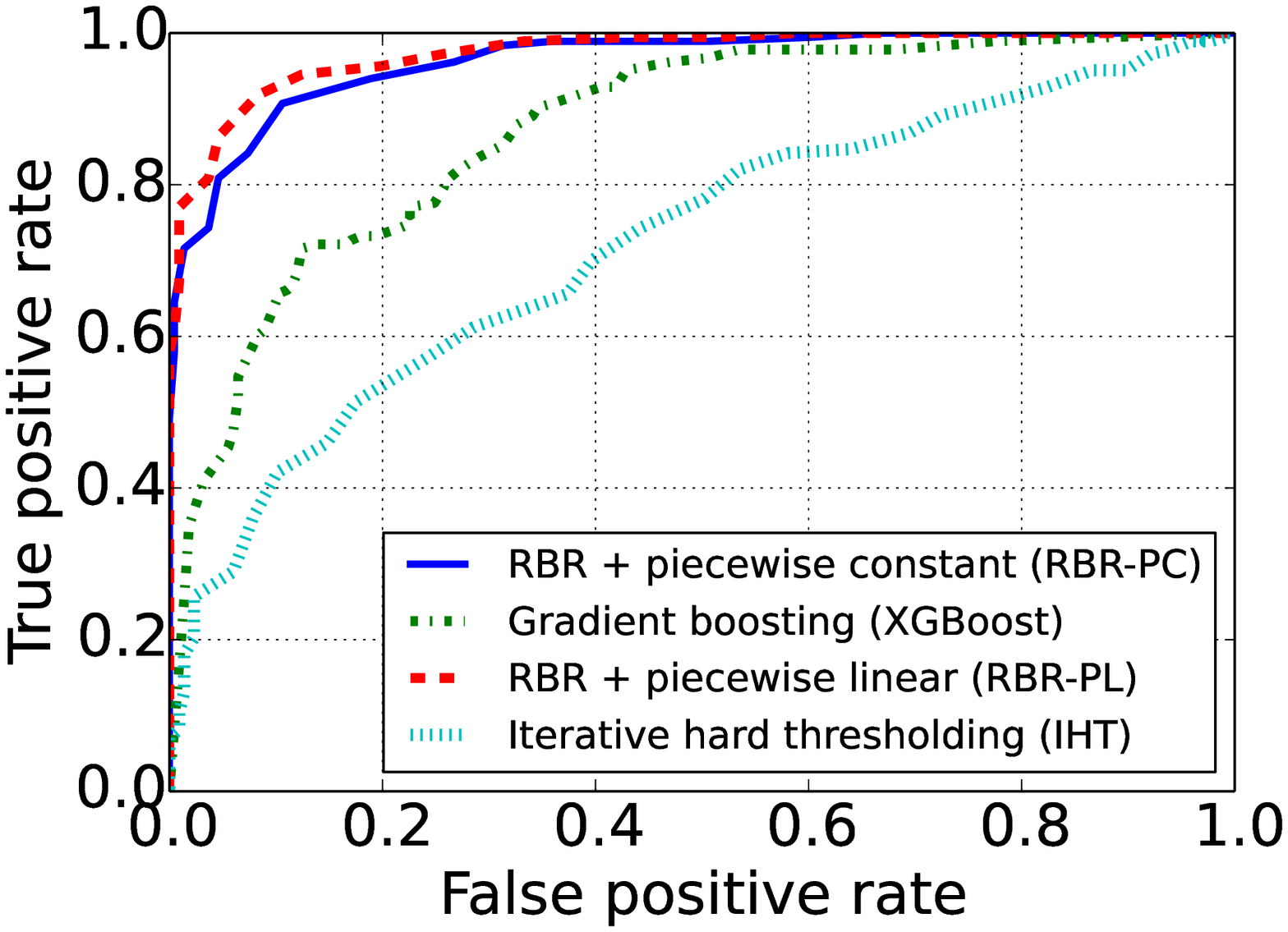}
        \caption{ROC curves for synthetic data}
        \label{fig:rocsynt}
    \end{subfigure}
    \caption{}
\end{figure}

\subsection{Results on synthetic data}
In order to test the performance of RBR, we first consider a synthetic dataset where we engineer a planted model for the feature non-linearities \eqref{additive model}. We generate $\X$ as a Gaussian data matrix with independent standard normal entries. In \eqref{additive model}, we pick planted functions of the form
\[
f_i(x)=\alpha_{i,1}|x|^{\alpha_{i,2}}+\alpha_{i,3}\sin(\alpha_{i,4}x+\alpha_{i,5})
\]
where $\alpha_{i,j}$'s are independent random variables either uniformly distributed ($j\in[2,5]$) or Gaussian distributed ($j\in[1,3,4]$) for $s_G$ out of $p$ features. We set $n=2000$, $p=100$, $\pb=40p=4000$, and plant $s_G=10$ nonzero features. We first test the ability to learn feature non-linearities. For a random problem instance, Figure \ref{fig:nonlinear} overlays the learned parameter $\bbeta^i$ with non-linear function $f_i$ for nonzero $f_i$'s. Here, the horizontal axis is the sorted feature indices, where every $50$ values corresponds to a new bin. We observe that $\bbeta^i$ indeed learns the non-linear function $f_i$ to a good extent. In Figure \ref{fig:select}, we contrast the feature selection performance of IHT and RBR~where both know the true sparsity level $s_G/p=0.1$. As a performance metric, we use the fraction of correctly identified nonzero features. Over $20$ random problem instances, we average this metric for IHT and RBR and plot the values for $n=200$ to $2000$. As the amount of data increases, both methods show improved performance, but RBR is generally better, and identifies $8$ out of $10$ features correctly on average for $n\geq 1400$.

Finally, in~Figure \ref{fig:rocsynt}, we use the same setup as Figure~\ref{fig:nonlinear} but convert labels to $\{0,1\}$ by thresholding at the median value. We train a classifier via logistic regression and compare RBR Piecewise-Constant, RBR-Piecewise-Linear, IHT and XGBoost over test set. IHT and RBR use the planted sparsity level $s_G/p=0.1$. The receiver operating characteristic (ROC) curve is plotted in Figure \ref{fig:rocsynt}. We can see that RBR-PC and RBR-PL substantially outperform the competing algorithms.
%\[
%f_i(x)=\alpha_{i,1}|x|^{2\alpha_{i,2}}+\alpha_{i,3}\sin(2\alpha_{i,4}x+\pi\alpha_{i,5})
%\]
%where $\alpha_{i,j}$'s are independent random variables where for $j\in [1,3,4]$ $\alpha_{i,j}$ is standard normal and for $j\in [2,5]$ $\alpha_{i,j}$ is uniform distribution between $[0,1]$.

\begin{figure}
\begin{subfigure}[b]{0.5\textwidth}
        \includegraphics[width=\textwidth]{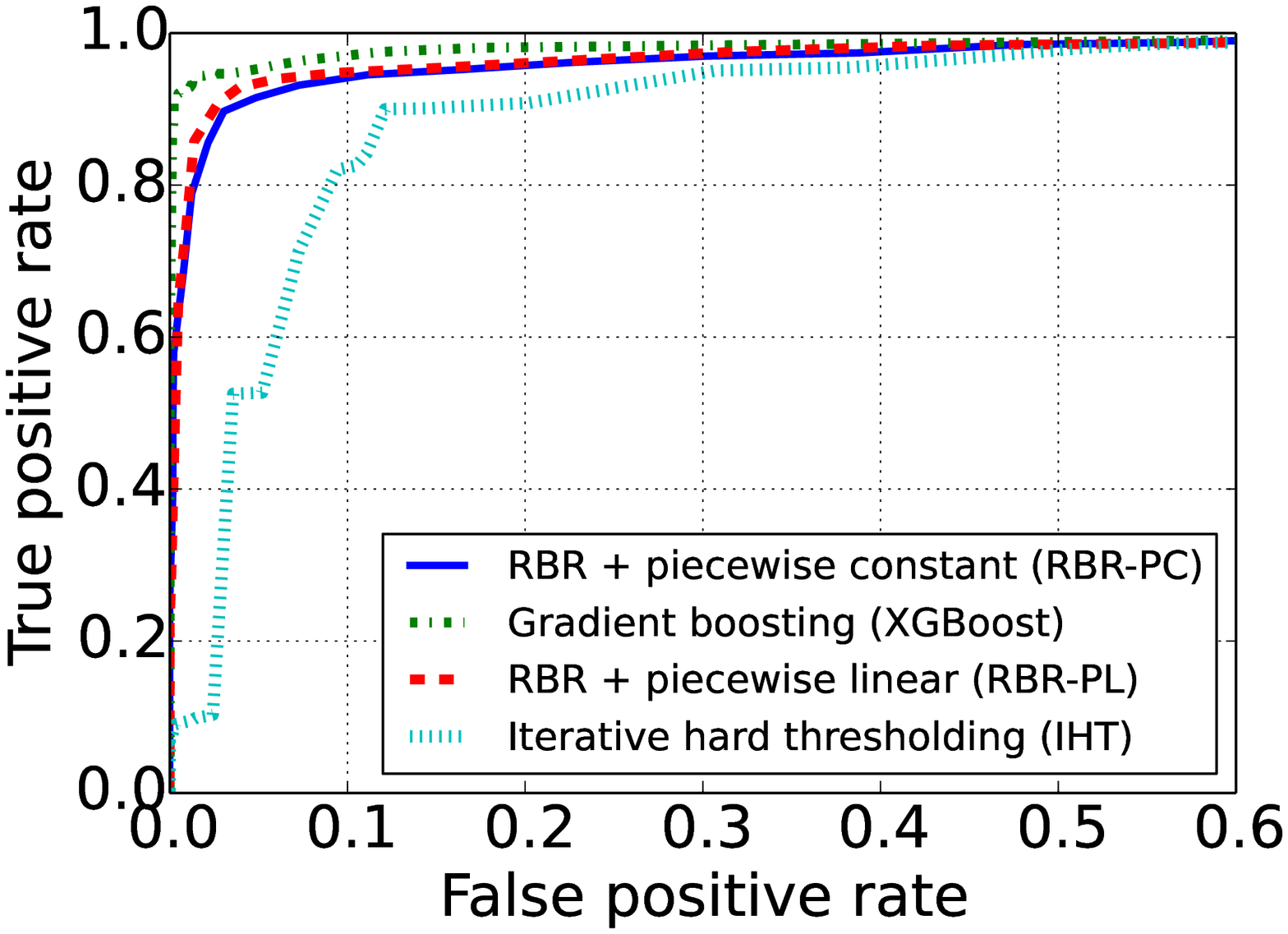}
        \caption{Human resources dataset}
        \label{fig:rochr}
    \end{subfigure}~
 \begin{subfigure}[b]{0.5\textwidth}
        \includegraphics[width=\textwidth]{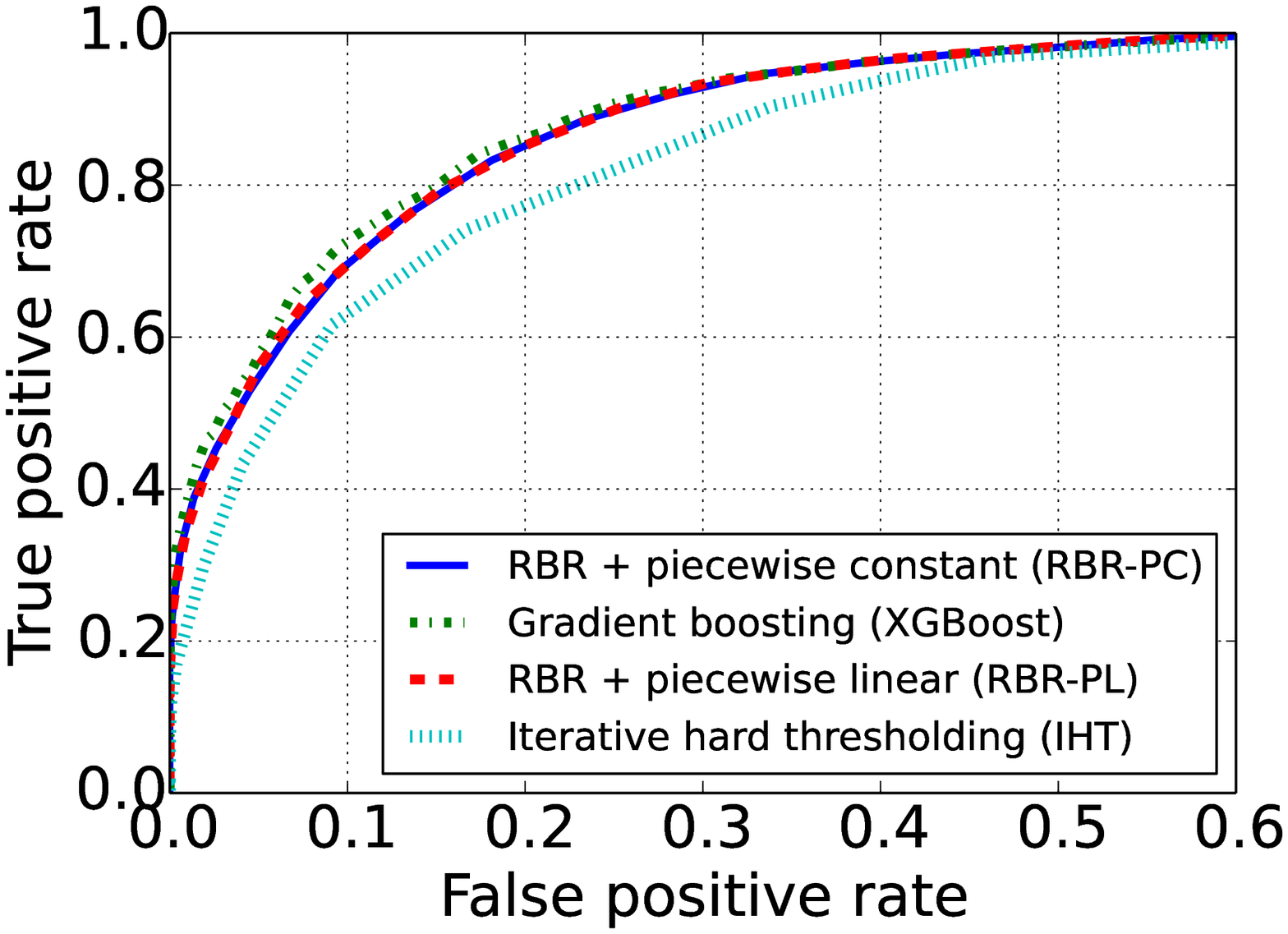}
        \caption{Adult dataset}
        \label{fig:rocadult}
    \end{subfigure}
   \caption{}
\end{figure}

\subsection{Results on real datasets}
We now compare the performance of RBR with IHT and XGBoost on two real datasets. The first dataset is a Human Resources Analytics dataset from Kaggle where the goal is predicting if an employee left the company \cite{kagglehr}. This dataset has three continuous and six categorical features and $n=15000$ samples. The second dataset is the Adult dataset on UCI Machine Learning repository. It is based on 1994 Census database, and the goal is to predict if income exceeds \$50K per year \cite{uciml}. There are five continuous and nine categorical features and $n=32560$ samples. For both datasets, we encode categorical features as one-hot vectors; hence IHT and RBR treats these identically. Hyperparameters are tuned via line search and we found $p=s_G$ performs best as $p\ll n$. The resulting ROC curves are plotted in Figures \ref{fig:rochr} and \ref{fig:rocadult}. For both cases, RBR is only slightly worse than XGBoost and noticeably better than IHT. These experiments demonstrate that RBR returns fast and interpretable results (e.g. Figure \ref{fig:nonlinear}) with minimal accuracy cost.

%The fact that RBR performs worse on real dataset compared to synthetic dataset has likely two reasons. First, $p\ll n$, hence there is less need for sparse feature selection. Secondly, synthetic data obeys the additive model \eqref{additive model}.%samet look here (we need to conclude with the advantage of RBR )

\section{Connection to sparse embedding matrices}\label{dim reduct}
An interesting property of our algorithm is the fact that the binned matrix $\Xbin$ can be very sparse as we have $\nnz{\Xbin}=\nnz{\X}$. Let us pause and assume $b=b_i$ for all $1\leq i\leq p$. When Assumption \ref{main assume} holds, $\Xbin$ is a random matrix where each entry is nonzero with probability $b^{-1}$. Surprisingly, similar matrices are being used for dimensionality reduction purposes, namely the Johnson-Lindenstrauss transform where the goal is embedding points to a low-dimensional space while preserving the distances.

In particular, let $\Sb\in\R^{n\times \pb}$ be a matrix where i) Each column contains exactly $s$ nonzeros which are $\pm 1/\sqrt{s}$, ii) Locations and signs of nonzeros are chosen uniformly at random. Then, the linear mapping $\x\rightarrow\Sb\x$ is known to preserve distances with high probability \cite{nelson2013osnap} while allowing $n\ll \pb$. We have the following result that connects random \bdata~to sparse JL transform which also suggests improved sparse dimensionality reduction schemes.
\begin{theorem}\label{distance} Let $\Sb$ be a sparse JL matrix described above with $s$ nonzero entries per column. Let $\Xbin$ be a random \rrb~matrix (i.e.~Assumption \ref{main assume} holds) with bin size $b=n/s$ so that $\nnz{\Xbin}=\nnz{\Sb}$. Flip signs of the nonzero entries of $\Xbin$ at random (each is $\pm 1/\sqrt{s}$ with probability $1/2$). Then, for any unit-length vector $\vb\in \R^{\pb}$, we have that
\[
\E[(\tn{\Sb\vb}^2-1)^2]\geq \E[(\tn{\Xbin\vb}^2-1)^2].
\]
where the inequality is strict if $\nnz{\vb^j}>1$ for some $1\leq j\leq p$.
\end{theorem}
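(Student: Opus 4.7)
The plan is to expand both $\tn{\Sb\vb}^2$ and $\tn{\Xbin\vb}^2$ as quadratic forms in the respective Gram matrices $M^{\Sb}=\Sb^\top\Sb$ and $M^{\Xbin}=\Xbin^\top\Xbin$, and then compare their fluctuations entry by entry. The first observation is that in both constructions each column contains exactly $s$ nonzeros of magnitude $1/\sqrt{s}$, so the diagonal entries satisfy $M^{\Sb}_{k,k}=M^{\Xbin}_{k,k}=1$ deterministically. Consequently $\tn{\Sb\vb}^2-1=2\sum_{k<k'}v_k v_{k'}M^{\Sb}_{k,k'}$ and likewise for $\Xbin$, so the second moment of the deviation reduces to a sum of $\E[M_{k,k'}M_{l,l'}]$ over pairs.

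Next I would exploit the random sign flips. Because the signs are i.i.d.\ Rademacher and independent of everything else, $\E[M_{k,k'}M_{l,l'}]$ vanishes unless $\{k,k'\}=\{l,l'\}$; this is the standard decorrelation trick for sparse sign matrices. Hence only the diagonal terms $\E[M_{k,k'}^2]$ survive, and the task reduces to computing these. For the sparse JL matrix $\Sb$, independence of the columns and $\E[\Sb_{i,k}^2]=1/n$ give $\E[(M^{\Sb}_{k,k'})^2]=\sum_i \E[\Sb_{i,k}^2]\E[\Sb_{i,k'}^2]=1/n$ for every $k\neq k'$. For $\Xbin$, columns coming from different features are independent by Assumption \ref{main assume} and the same computation yields $\E[(M^{\Xbin}_{k,k'})^2]=1/n$. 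The key structural difference appears for two columns $k,k'$ lying in the same submatrix $\Xbin^j$: under Regular Binning the columns of $\Xbin^j$ partition the rows, so their supports are disjoint and $M^{\Xbin}_{k,k'}=0$ deterministically.

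Putting the pieces together, I get
\begin{align*}
\E[(\tn{\Sb\vb}^2-1)^2] &= \frac{4}{n}\sum_{k<k'} v_k^2 v_{k'}^2,\\
\E[(\tn{\Xbin\vb}^2-1)^2] &= \frac{4}{n}\sum_{\substack{k<k' \\ \text{different features}}} v_k^2 v_{k'}^2,
\end{align*}
whose difference equals $\frac{2}{n}\sum_{j=1}^{p}\bigl(\tn{\vb^j}^4-\|\vb^j\|_4^4\bigr)$. Since $\|\ab\|_4^4\leq\tn{\ab}^4$ for every vector with equality iff $\ab$ has at most one nonzero entry, this difference is nonnegative and strictly positive exactly when $\nnz{\vb^j}>1$ for some $j$, giving the claim. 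The main obstacle I anticipate is being scrupulous about the joint distribution of nonzero locations in $\Xbin$ under Regular Binning (in particular that within-feature columns genuinely partition the rows regardless of how the quantile thresholds are drawn), and verifying that the sign-flip independence suffices to eliminate all the cross-pair covariances even though columns within a feature are themselves dependent.
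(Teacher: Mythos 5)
Your proof is correct and follows essentially the same route as the paper's: both expand the off-diagonal part of the quadratic form, use the independent Rademacher signs to annihilate all cross-pair covariances, and then compare the surviving $\E[\s_{ij}^2\s_{i'j'}^2]$-type terms case by case (different features, same column, same feature but different columns, the last vanishing for $\Xbin$ by disjoint supports). Your Gram-matrix packaging additionally yields the explicit gap $\frac{2}{n}\sum_j(\tn{\vb^j}^4-\|\vb^j\|_4^4)$, which makes the strictness condition immediate, but the underlying argument is the same.
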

In words, Theorem \ref{distance} shows that the variance of the ``distance deviation'' random variable resulting from $\Xbin$ is lower compared to $\Sb$; hence $\Xbin$ is a good candidate for JL transform for the same number of nonzeros. This result is rather intuitive since $\Xbin$ guarantees orthonormality of columns for each submatrix $\Xbin^j$, unlike $\Sb$.

\section{Conclusions}
In this work, we proposed and analyzed Non-Convex \algo~to learn feature non-linearities via projected gradient descent. The proposed algorithm is computationally efficient, is more expressive than lasso, and has interpretable output.  We empirically demonstrated that the proposed algorithm generally outperforms linear models such as iterative hard thresholding and is competitive with gradient boosted trees (e.g.~XGBoost). As fundamental contributions, we provide statistical and computational guarantees for the RBR algorithm by introducing novel fast-convergence conditions for non-convex projections and by analyzing properties of the \bdata~under random data assumption. %  which is based on the novel restricted gradient correlation condition and properties of the random matrix obtained by feature binning.
We believe our work can be extended in several interesting directions:

$\bullet$ {\bf{Prediction layer of neural networks:}} It would be interesting to apply the feature binning idea at the output layer of deep neural networks. While the backpropagation would be more challenging to implement, our strategy might increase the expressive power of softmax layer and may improve accuracy as well the rate of convergence.

%$\bullet$ {\bf{Equally binned feature matrix:}} Our theoretical results apply to binned matrices with i.i.d. rows. In practice, the borders are data dependent and each bin will have exactly same number of nonzero entries. We need an understanding of this random matrix compared to the current model we analyze.

$\bullet$ {\bf{Sparse embedding matrices:}} In Section \ref{dim reduct}, we demonstrated that the random \bdata~ is a good candidate for the sparse JL transform. This needs further theoretical understanding as well as verification in practical problems. 

%\begin{itemize}
%\item {\bf{As prediction layer of neural networks:}} It would be interesting to apply feature binning idea at the output layer of deep neural networks. While the backpropagation would be more challenging, it might increase the representative power of softmax layer and improve performance in both rate of convergence and accuracy.
%\item {\bf{Equally binned feature matrix:}} Our theoretical results apply to binned matrices with i.i.d. rows. In practice, the borders are data dependent and each bin will have exactly same number of nonzero entries. We need an understanding of this random matrix compared to the current model we analyze.
%\item {\bf{Sparse embedding matrices:}} We demonstrated that random onehot encoded matrices suggests good candidates for sparse JL transform. This needs further theoretical understanding as well as verification in practical problems. 
%\end{itemize}

\newpage

{
%\small
{
\bibliography{Bibfiles}
\bibliographystyle{plain}
}
}
%\newpage

\newpage

\appendix
\section{Projection onto the constraint set}
In this section, we show that RBR Algorithm \ref{pgd-algo} projects $\bbeta$ onto the constraint set \eqref{constraint_eq} (see Theorem \ref{overall proj}). The parameter $\bbeta$ we are estimating has two structures.
\begin{itemize}
\item {\bf{Global:}} $\bbeta$ is group-sparse, because we wish to select a small subset of features. Only $s_G$ out of $p$ $\bbeta^{j}$'s are nonzero.
\item {\bf{Local:}} Feature nonlinearities are represented by subvectors $\bbeta^j$ which are modeled as piecewise constant or piecewise linear. These correspond to sparsity in difference domain $\bbeta^j_{i+1}-\bbeta^j_{i}$, derivative-difference domain or frequency domain.
\end{itemize}

At each projected gradient descent iteration, we wish to perform a projection on this constraint space which we call $\Cc$ as a whole. Let $\Pc_\Cc(\cdot)$ denote the overall projection. We further define the following projections:
\begin{itemize}
\item $\Pc_{G}$: Projects $\bbeta$ onto global group-sparsity constraint.
\item $\Pc_{L}$: Projects individual $\bbeta^j$ onto local-smoothness constraints (i.e.~$s_L$ piecewise-constant or piecewise-linear).
\item $\Pc_{zm}$: Projects individual $\bbeta^j$ onto zero-mean constraints.% denote the projections onto global group-sparsity constraint,and local constraints respectively. 
\end{itemize}
We have the following theorem regarding $\Pc_\Cc$.

\begin{theorem} \label{overall proj}Given a vector $\x\in\R^{\pb}$, the projection on $\Cc$ can be decomposed as
\[
\Pc_\Cc(\x)=\Pc_G(\Pc_L(\Pc_{zm}(\x)))
\]
%projection algorithm is as follows:
%\begin{enumerate}
%\item Obtain local projection $\x_L=\Pc_{L}(\x)=[\Pc_L(\x^1)~\dots~\Pc_L(\x^p)]$.
%\item Calculate distance vector $\d\in\R^p$ where $\d_i=\tn{\x^i}^2-\tn{\x^i-\x_L^i}^2$. Let $\bar{\d}$ be the sorted vector in descending order.
%\item Set $\p^i=\Pc_{L}(\x^i)$ if $\d_i\geq \bar{\d}_k$ and $0$ else.
%\[
%\Pc(\x)=\Pc_{G}(\Pc_{L}(\x))
%\]
%\end{enumerate}
\end{theorem}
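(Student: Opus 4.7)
\begin{proofsk}
The plan is to exploit the fact that the objective $\tn{\bbeta-\x}^2=\sum_{j=1}^p\tn{\bbeta^j-\x^j}^2$ splits across subvectors. This reduces projection onto $\Cc$ to (i) a per-subvector projection onto the intersection of the zero-mean subspace with the local-smoothness set, and (ii) a combinatorial choice of which $s_G$ subvectors to keep nonzero.

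First I would handle the per-subvector problem. The zero-mean constraint defines a subspace, so for every $\bbeta^j$ with $\onebb^{\top}\bbeta^j=0$, the Pythagorean identity yields
\[
\tn{\bbeta^j-\x^j}^2=\tn{\bbeta^j-\Pc_{zm}(\x^j)}^2+\tn{\x^j-\Pc_{zm}(\x^j)}^2.
\]
Hence the per-feature subproblem is equivalent to projecting $\Pc_{zm}(\x^j)$ onto the smoothness set while maintaining zero mean. The key ingredient is that $\Pc_L$ preserves the mean: in the piecewise-constant case the optimal constant on each piece is the piece average, and the weighted combination of these averages equals the overall average; in the piecewise-linear case the least-squares line on each piece passes through the piece centroid. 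Applied to the zero-mean input $\Pc_{zm}(\x^j)$, the output $\Pc_L(\Pc_{zm}(\x^j))$ is therefore automatically zero-mean, and being the closest smooth vector to $\Pc_{zm}(\x^j)$ it is also the closest zero-mean smooth vector.

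Next I would perform the group-sparsity stage. The smoothness set is a union of subspaces (one per choice of breakpoints), and its intersection with the zero-mean subspace is again a union of subspaces. Orthogonality of the residual therefore gives $\tn{\x^j}^2=\tn{\x^j-\Pc_L(\Pc_{zm}(\x^j))}^2+\tn{\Pc_L(\Pc_{zm}(\x^j))}^2$. Setting $\bbeta^j=0$ costs $\tn{\x^j}^2$, while keeping $\bbeta^j$ nonzero at its optimum costs $\tn{\x^j}^2-\tn{\Pc_L(\Pc_{zm}(\x^j))}^2$; the ``savings'' from selecting feature $j$ is therefore $\tn{\Pc_L(\Pc_{zm}(\x^j))}^2$. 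Minimizing the total objective under the $s_G$-group-sparsity budget amounts to keeping the $s_G$ subvectors of largest $\ell_2$ norm in $\Pc_L(\Pc_{zm}(\x))$, which is precisely the action of $\Pc_G$.

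The main obstacle will be the mean-preservation property of $\Pc_L$: without it, $\Pc_L\circ\Pc_{zm}$ would leave the zero-mean subspace and the factorization would break. This property is a genuine feature of least-squares piecewise-constant and piecewise-linear approximations (each piece's optimum is centered on its data average/centroid) and must be verified carefully, including checking that the optimization over the \emph{choice} of breakpoints does not destroy it. A smaller subtlety is the non-convexity of the smoothness set: projections may be non-unique, so I would phrase the decomposition as an identity between the \emph{sets} of minimizers, or equivalently fix a consistent tie-breaking rule across the three stages.
\end{proofsk}
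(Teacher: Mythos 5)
Your proposal is correct and follows essentially the same two-stage decomposition as the paper's proof: reduce to independent per-block projections via the additive splitting of $\tn{\bbeta-\x}^2$, use the Pythagorean/orthogonality identities to show the group-sparse stage amounts to keeping the $s_G$ blocks of largest norm after the local projection, and commute the zero-mean and smoothness steps. The only difference is local: you justify that $\Pc_L$ maps zero-mean inputs to zero-mean outputs by directly computing the least-squares piecewise-constant/linear fit on each piece (piece averages and centroids), whereas the paper derives $\mean{\Pc_L(\Pc_{zm}(\x))}=0$ indirectly from an optimality comparison against the recentered candidate $\Pc_L(\Pc_{zm}(\x))-\mean{\Pc_L(\Pc_{zm}(\x))}\cdot\onebb$; both arguments are valid, and your observation about non-uniqueness of the non-convex projection is a point the paper glosses over.
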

\begin{proof} By definition, $\Pc_\Cc(\x)$ is the point closest to $\x$ lying in the  constraint set. By definition $\p=\Pc_\Cc(\x)$ is a vector where subvectors $\p^i$ are zero-mean and satisfy local smoothness and global sparsity. The proof is in two stages. First, let $\Pc_{L,zm}$ be the projection operator to the set of smooth and zero-mean vectors. We show that
\begin{align}
\Pc_\Cc(\x)=\Pc_G(\Pc_{L,zm}(\x))\label{project main 1}
\end{align}
The proof of this result is as follows. Suppose $\p=\Pc_\Cc(\x)$ is the actual projection and let $\p_L=\Pc_{L,zm}(\x)$. This implies that for nonzero blocks $\p^j=\p_L^j=\Pc_{L,zm}(\x^j)$ for all $j\leq p$. Otherwise, replacing $\p^j$ with $\p_L^j$ would result in a strictly shorter distance to $\x$. Consequently, $\p$ is obtained by selecting $s_G$ out of $p$ blocks of $\{\p_L^j\}_{j\leq p}$. Next, observe that we have the identity
\[
\tn{\x^j}^2=\tn{\p^j}^2+\tn{\x^j-\p^j}^2.
\]
This follows from the fact that $\p^j$ is obtained by projecting $\x^j$ onto a closed cone (in our case, cone of zero-mean, piecewise-constant/linear vectors). Consequently
\[
\tn{\x-\p}^2=\sum_{i\leq p} \tn{\x^j-\p^j}^2 = \tn{\x}^2-\tn{\p}^2.
\]
By definition $\p$ minimizes the distance to $\x$ which is same as maximizing $\tn{\p}$. Subject to the constraint $\p$ has $s_G$ nonzero blocks, $\tn{\p}$ is maximized by picking the largest $s_G$ blocks of $\p_L$ which concludes \eqref{project main 1}.

As the next step, we decompose local projection and prove that $\p=\Pc_{L,zm}(\x)=\Pc_L(\Pc_{zm}(\x))$. Proof of this is provided in Lemma \ref{project main 2}.
\end{proof}

%\begin{lemma} Let $\Pc_{PC,k}$ be the $k$ piecewise constant projection. We have that
%\[
%\tn{\x}^2=\tn{\Pc_{PC,k}(\x)}^2+\tn{\x-\Pc_{PC,k}(\x)}^2
%\]
%\end{lemma}
%\begin{proof} Suppose $\p=\Pc_{L}(\x)$ and $\p$ is constant over segments $\{[s_i,e_{i}]\}_{i=1}^{s_L}$ where $e_i=s_{i+1}-1$. Given two endpoints $s_{i},e_{i}$, projection is given by $\text{mean}(\x_{s_{i}:e_{i}})$ and locally satisfies
%\[
%\tn{\p_{s_{i}:e_{i}}}^2+\tn{\x_{s_{i}:e_{i}}-\p_{s_{i}:e_{i}}}^2=\tn{\x_{s_{i}:e_{i}}}^2
%\]
%Summing up over all $1\leq i\leq s_L$, we found the desired result.
%\end{proof}
\begin{lemma}\label{project main 2} Let $\Pc_{L,zm}$ be the projection operator to $s_L$ piecewise-constant/linear vectors with $0$ mean. Then
\[
\Pc_{L,zm}(\x)=\Pc_{L}(\x-\text{mean}(\x))=\Pc_{L}(\Pc_{zm}(\x))%-\text{mean}(\x)
\]
\end{lemma}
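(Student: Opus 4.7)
My plan is to reduce the claim to a Pythagorean decomposition along $\onebb$ and its orthogonal complement, and then verify that $\Pc_L$ automatically preserves the zero-mean property. First, let $\bar{\x}=\text{mean}(\x)$ and write $\x=\Pc_{zm}(\x)+\bar{\x}\onebb$, which is the orthogonal decomposition of $\x$ with respect to $\text{span}(\onebb)$ and its orthogonal complement $\onebb^{\perp}$. For any candidate $\vb$ satisfying $\onebb^{\top}\vb=0$, the vector $\Pc_{zm}(\x)-\vb$ lies in $\onebb^{\perp}$ and is orthogonal to $\bar{\x}\onebb$, so
\[
\tn{\x-\vb}^2=\tn{\Pc_{zm}(\x)-\vb}^2+\tn{\bar{\x}\onebb}^2.
\]
The second term is constant in $\vb$, so minimizing the left-hand side over the set of zero-mean, $s_L$-smooth vectors is equivalent to minimizing $\tn{\Pc_{zm}(\x)-\vb}^2$ over the same set.

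Next I would show that $\Pc_L(\Pc_{zm}(\x))$, a priori only guaranteed to lie in the $s_L$-smooth cone, is in fact zero-mean. Writing $\y=\Pc_{zm}(\x)$, the projection $\Pc_L(\y)$ chooses a partition of the indices into $s_L$ intervals $\{I_k\}$ and replaces $\y$ on each $I_k$ by the best-fit constant (or best-fit line, in the piecewise-linear variant). For the piecewise-constant case the best-fit constant is $\text{mean}(\y|_{I_k})$, so the sum of the projected values is $\sum_k |I_k|\cdot\text{mean}(\y|_{I_k})=\sum_k\sum_{i\in I_k}\y_i=\onebb^{\top}\y=0$. For the piecewise-linear case, the orthogonality principle for ordinary least squares forces the residuals of the best-fit line to sum to zero on each $I_k$, so again the sum of the projected values on $I_k$ equals the sum of $\y$ on $I_k$, which gives $\onebb^{\top}\Pc_L(\y)=\onebb^{\top}\y=0$. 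In both cases $\Pc_L(\y)$ is zero-mean.

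Combining the two observations, $\Pc_L(\Pc_{zm}(\x))$ is an element of the zero-mean $s_L$-smooth set, and by definition it minimizes $\tn{\Pc_{zm}(\x)-\vb}$ over the full $s_L$-smooth cone, which is a superset of the zero-mean restriction. Therefore it also minimizes over the restricted set, and via the Pythagorean identity above it minimizes $\tn{\x-\vb}$ over zero-mean $s_L$-smooth vectors, i.e., it equals $\Pc_{L,zm}(\x)$. The only non-routine step in this outline is the sum-preservation property of $\Pc_L$; once that is established (immediate for piecewise-constant, and a one-line consequence of the normal equations for piecewise-linear), the rest is a standard orthogonality computation.
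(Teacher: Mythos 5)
Your proof is correct. The first step --- the Pythagorean split $\tn{\x-\vb}^2=\tn{\Pc_{zm}(\x)-\vb}^2+\tn{\mean{\x}\cdot\onebb}^2$ for zero-mean candidates $\vb$, which reduces the problem to projecting $\Pc_{zm}(\x)$ --- is exactly the paper's first step. Where you diverge is in showing that the \emph{unconstrained} local projection $\Pc_L(\Pc_{zm}(\x))$ is automatically zero-mean. The paper argues this indirectly: writing $\p_L=\Pc_L(\p_{zm})$, it decomposes $\tn{\p_{zm}-\p_L}^2$ around the recentered point $\p_L-\mean{\p_L}\cdot\onebb$ (still $s_L$-smooth, since the smooth cone is invariant under adding multiples of $\onebb$) and uses optimality of the zero-mean-constrained projection to force $\tn{\mean{\p_L}\cdot\onebb}^2\le 0$. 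You instead prove zero-meanness directly from the structure of $\Pc_L$: for whichever segmentation the projection selects, the best constant fit on a segment is the segment mean and the best affine fit has residuals orthogonal to $\onebb$ on that segment, so the projection preserves the total sum. Both arguments ultimately rest on the same underlying fact --- $\onebb$ lies in every subspace of the piecewise-constant/linear union, so the projection residual is orthogonal to it --- but yours is constructive and makes the mechanism transparent, while the paper's is an abstract optimality argument that would carry over to any smoothness cone closed under translation by $\onebb$ without needing the explicit form of the projection. Your closing step (a minimizer over a superset that happens to land in the subset is also the minimizer over the subset) is the same logical device the paper uses to conclude.
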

\begin{proof} Let $\p=\Pc_{L,zm}(\x)$ and $\p_{zm}=\Pc_{zm}(\x)$. $\p$ is the closest zero-mean point to $\x$ satisfying the local constraints. We need to show that $\p=\Pc_L(\p_{zm})$. % To see this, observe that for any constant $c$
%\[
%\p=\Pc_{L,zm}(\x)=\Pc_{L,zm}(\x-a)=\p_a
%\]
To see this, we first show $\p=\Pc_{L,zm}(\p_{zm})$ as follows. $\tn{\x-\p}^2$ can be decomposed as
\[
\tn{\x-\p}^2=\tn{\x-\mean{\x}\cdot\onebb-\p}^2+\tn{\mean{\x}\cdot\onebb}^2+2\li\mean{\x}\cdot\onebb,\x-\mean{\x}\cdot\onebb-\p\ri.
\]
On the right hand side, the last term is equal to zero, the center term is constant hence $\p$ attempts to minimize the first term which is the distance $\dist{\x-\mean{\x},\p}=\dist{\p_{zm},\p}$. Since projection minimizes the distance, $\p=\Pc_{L,zm}(\p_{zm})$.

Define $\p_L=\Pc_{L}(\p_{zm})$. To conclude, we need to show $\p_L=\p=\Pc_{L,zm}(\p_{zm})$ which is the case if $\mean{\p_L}=0$. Now, observe that
\begin{align}
\tn{\p_{zm}-\p_L}^2=&\tn{\p_{zm}-(\p_L-\mean{\p_L}\cdot\onebb)}^2+\tn{\mean{\p_L}\cdot\onebb}^2\\
&-2\li\mean{\p_L}\cdot\onebb,\p_{zm}-(\p_L-\mean{\p_L}\cdot\onebb)\ri\label{last term last}
\end{align}
On the right hand side, the last term \eqref{last term last} is equal to $0$. The first term satisfies 
\[
\tn{\p_{zm}-(\p_L-\mean{\p_L}\cdot\onebb)}^2\geq \tn{\p_{zm}-\Pc_{L,zm}(\p_{zm})}^2\geq \tn{\p_{zm}-\p_L}^2.
\]
Subtracting $\tn{\p_{zm}-\p_L}^2$ from each side, we find $\tn{\mean{\p_L}\cdot\onebb}^2= 0\implies \mean{\p_L}=0$. This concludes the proof.
%Suppose this is not the case and $\p\neq \p_a$.
%The proof follows from the fact that piecewise constant projection and adding a constant bias are interchangeable operations. In particular
%\[
%\dist{\x-\text{mean}(\x),\Pc_L(\x)}=\dist{\x,\Pc_L(\x)+\text{mean}(\x)}
%\]
\end{proof}

\section{Proof of Theorem \ref{general loss}}
We first provide miscellaneous results on the restricted gradient correlation condition. Let us define restricted smoothness and strong convexity.
\begin{definition} \label{rsc}$\Lc$ satisfies restricted smoothness and strong convexity over set $\Sc$ with parameters $L,\kappa$ if for all $\x,\y\in\Sc$
\[
L \tn{\x-\y}^2\geq \li\x-\y,\grad{\x}-\grad{\y}\ri\geq \kappa \tn{\x-\y}^2.
\]
\end{definition}
\begin{lemma}[RSC implies RGC for linear models] \label{rsc rgc lemma}Suppose $\x\rightarrow \grad{\x}$ is a linear operator. Then, if $\Lc$ satisfies restricted strong smoothness/convexity with $L,\kappa$, over the set $(\Cc-\Cc)+(\Cc-\Cc)$, then it also obeys restricted gradient correlation with $\mur=\frac{2}{L+\kappa}$, $\epsr=3\frac{L-\kappa}{L+\kappa}$.
\end{lemma}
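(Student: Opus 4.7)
The plan is to reduce the RGC condition to the two-sided quadratic-form bound supplied by RSC via a polarization identity, and then exploit the cone structure of $\Cc$ through a homogeneity argument to convert the resulting $\tn{\vb}^2 + \tn{\ub}^2$ expression into the bilinear quantity $\tn{\vb}\tn{\ub}$ that RGC requires.

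First, I would unpack the linearity hypothesis. Since $\x \mapsto \grad{\x}$ is linear and $\grad{\cdot}$ is the gradient of a scalar loss, its Jacobian is the Hessian of $\Lc$ and is therefore symmetric; hence there exists a symmetric operator $A$ with $\grad{\x} - \grad{\y} = A(\x-\y)$. The RSC hypothesis on $\Sc := (\Cc - \Cc) + (\Cc - \Cc)$ then asserts $\kappa\tn{\w}^2 \leq \li\w, A\w\ri \leq L\tn{\w}^2$ for every $\w$ expressible as a difference of two points of $\Sc$. Because $\Cc$ is a cone we have $0 \in \Cc \subseteq \Sc$, so this bound in fact holds for every $\w \in \Sc$.

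Next, for $\vb \in \Cc - \Cc$ and $\x, \y \in \Cc$, I would set $\ub = \x - \y$ and note that $\vb$, $\ub$, and $\vb + \ub$ all lie in $\Sc$. Applying the polarization identity
\[
\li\vb, A\ub\ri = \tfrac{1}{2}\bigl(\li\vb+\ub, A(\vb+\ub)\ri - \li\vb, A\vb\ri - \li\ub, A\ub\ri\bigr),
\]
I would bound the three quadratic forms using RSC in both directions. After expanding $\tn{\vb+\ub}^2 = \tn{\vb}^2 + 2\li\vb,\ub\ri + \tn{\ub}^2$ and collecting terms around $\tfrac{L+\kappa}{2}\li\vb,\ub\ri$, the two one-sided estimates should collapse to
\[
\Bigl|\li\vb, A\ub\ri - \tfrac{L+\kappa}{2}\li\vb,\ub\ri\Bigr| \leq \tfrac{L-\kappa}{2}\bigl(\tn{\vb}^2 + \tn{\ub}^2 + |\li\vb,\ub\ri|\bigr).
\]
Multiplying through by $\mur = 2/(L+\kappa)$ yields an inequality of exactly the form RGC demands, but with $\tn{\vb}^2 + \tn{\ub}^2$ instead of the desired $\tn{\vb}\tn{\ub}$ on the right.

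The last step will be homogenization. Because $\Cc$ is a cone, both $t\vb$ and $\ub/t$ remain in $\Cc - \Cc$ for every $t > 0$, and the bilinear expressions on the left are invariant under $\vb \mapsto t\vb,\ \ub \mapsto \ub/t$, while the right-hand side becomes $\tfrac{L-\kappa}{L+\kappa}\bigl(t^2\tn{\vb}^2 + \tn{\ub}^2/t^2 + |\li\vb,\ub\ri|\bigr)$. Optimizing over $t$ reduces the first two terms to $2\tn{\vb}\tn{\ub}$, and Cauchy--Schwarz bounds the cross term by a further $\tn{\vb}\tn{\ub}$, producing precisely the factor $\epsr = 3(L-\kappa)/(L+\kappa)$. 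The main obstacle I anticipate is the set-containment bookkeeping: the whole argument rests on having $\vb$, $\ub$, and $\vb+\ub$ simultaneously inside the RSC domain, which is exactly why the hypothesis is stated over the Minkowski sum $(\Cc - \Cc) + (\Cc - \Cc)$ and why the cone property is invoked for the rescaling. Once these memberships are verified, the remainder reduces to elementary algebra and a single one-parameter minimization.
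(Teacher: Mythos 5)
Your proposal is correct and follows essentially the same route as the paper's proof: the polarization identity is exactly the paper's step of subtracting the two single-variable quadratic-form bounds from the bound on $\z+\vb$, and your optimization over the rescaling parameter $t$ is precisely the paper's WLOG normalization $\tn{\z}=\tn{\vb}$ justified by scale invariance of RGC. Your treatment is, if anything, slightly more careful about the symmetry of the linear operator and about which Minkowski sums the vectors $\vb$, $\ub$, and $\vb+\ub$ must lie in, but the argument and the resulting constants $\mur=2/(L+\kappa)$, $\epsr=3(L-\kappa)/(L+\kappa)$ are identical.
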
%[Proof of Lemma \ref{rsc rgc lemma}]
\begin{proof} We will show that RSC conditions imply Definition \ref{def rgc}. Recall that, for any $\z,\vb\in(\Cc-\Cc)+(\Cc-\Cc)$, we have that
\[
L\tn{\z-\vb}^2\geq \li\vb-\z,\grad{\vb}-\grad{\z}\ri\geq \kappa\tn{\z-\vb}^2
\]
Due to linearity of $\grad{}$, we have $\grad{\vb}-\grad{\z}=\M(\vb-\z)$ for some $\M$.
With this, given $\x,\y,\vb$ we can fix $\z=\x-\y$ and apply RSC on $\z,\vb$. Since RGC is scale invariant, we can assume $\tn{\z}=\tn{\vb}$.
\begin{align}
&L\tn{\z}^2\geq \li\z,\M\z\ri\geq \kappa\tn{\z}^2\\
&L\tn{\vb}^2\geq \li\vb,\M\vb\ri\geq \kappa\tn{\vb}^2\\
&L\tn{\z+\vb}^2\geq \li(\z+\vb),\M(\z+\vb)\ri\geq \kappa\tn{\z+\vb}^2.
\end{align}
Subtracting the first two lines from the last one, we obtain
\[
L\z^T\vb+\frac{L-\kappa}{2}(\tn{\z}^2+\tn{\vb}^2)\geq \li\vb,\M\z\ri \geq \kappa\z^T\vb-\frac{L-\kappa}{2}(\tn{\z}^2+\tn{\vb}^2).
\]
Centering around $0$
\[
\frac{L-\kappa}{2}(\tn{\z}^2+\tn{\vb}^2+\z^T\vb)\geq \li\vb,\M\z\ri-\frac{L+\kappa}{2}\z^T\vb\geq -\frac{L-\kappa}{2}(\tn{\z}^2+\tn{\vb}^2+\z^T\vb).
\]
Taking absolute value and normalizing by $2/(L+\kappa)$
\[
3\tn{\vb}\tn{\x}\frac{L-\kappa}{L+\kappa}\geq \frac{L-\kappa}{L+\kappa}(\z^T\vb+\tn{\z}^2+\tn{\vb}^2)\geq |\frac{2}{L+\kappa}\li\vb,\M\z\ri-\vb^T\z|.
\]
Observing $\li\vb,\M\z\ri=\li\vb,\grad{\z}-\grad{\y}\ri$ and taking the absolute values, we obtain the result.
\end{proof}

\begin{lemma}[RGC implies RSC] Suppose RGC condition of Definition \ref{def rgc} holds. Then, RSC condition outlined in Definition \ref{rsc} holds with set $\Cc-\Cc$ and parameters $L=(1+\epsr)/\mur,~\kappa=(1-\epsr)/\mur$.
\end{lemma}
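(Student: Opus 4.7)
The plan is to deduce the RSC two-sided bound by plugging a single well-chosen test vector into the RGC inequality. Given any $\x,\y\in\Cc$, set $\vb := \x-\y\in\Cc-\Cc$, which is a legal choice for $\vb$ in Definition \ref{def rgc}. Then the RGC inequality specializes to
\[
\bigl|\langle \x-\y,\x-\y\rangle - \mu_R\,\langle \x-\y,\,\nabla\Lc(\x)-\nabla\Lc(\y)\rangle\bigr| \;\leq\; \eps_R\,\tn{\x-\y}\cdot\tn{\x-\y} \;=\; \eps_R\tn{\x-\y}^2.
\]
The first inner product on the left is simply $\tn{\x-\y}^2$, so removing the absolute value and moving $\tn{\x-\y}^2$ to the right gives
\[
(1-\eps_R)\tn{\x-\y}^2 \;\leq\; \mu_R\,\langle \x-\y,\,\nabla\Lc(\x)-\nabla\Lc(\y)\rangle \;\leq\; (1+\eps_R)\tn{\x-\y}^2.
\]
Dividing by $\mu_R$ identifies the RSC parameters $\kappa = (1-\eps_R)/\mu_R$ and $L = (1+\eps_R)/\mu_R$ exactly as claimed.

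There is essentially no obstacle: once the test vector $\vb=\x-\y$ is chosen, the inequality unfolds in one line. The only point worth flagging is the set on which the conclusion lives. The direct substitution argument above yields RSC for $\x,\y\in\Cc$; to obtain RSC on the larger set $\Cc-\Cc$ as the statement asserts, one invokes the same calculation after noting that for any $\x,\y\in\Cc-\Cc$, writing $\x=\x_1-\x_2$ and $\y=\y_1-\y_2$ with $\x_i,\y_i\in\Cc$ and setting $\vb=\x-\y$ still keeps $\vb$ in a Minkowski combination of $\Cc-\Cc$, and one can appeal to RGC on the appropriately enlarged domain (or equivalently observe that the authors use $\Cc-\Cc$ as a shorthand for the natural symmetric enlargement over which RGC is applied). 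Either way, the resulting inequality is identical, and the constants $L,\kappa$ follow immediately.
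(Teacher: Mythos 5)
Your proof is correct and is essentially identical to the paper's: both substitute the test vector $\vb=\x-\y$ into the RGC inequality and rearrange to read off $L=(1+\epsr)/\mur$ and $\kappa=(1-\epsr)/\mur$. Your closing remark about the domain is apt — the paper's own proof also only verifies the inequality for $\x,\y\in\Cc$ while the statement asserts it over $\Cc-\Cc$, so you have correctly identified (and patched) a looseness that the paper itself glosses over.
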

\begin{proof} Given $\vb,\x,\y$ described in Definition \ref{def rgc}, setting $\vb=\x-\y$, we have
\[
|\tn{\x-\y}^2-\mur\li\x-\y,\grad{\x}-\grad{\y}\ri|\leq \epsr\tn{\x-\y}^2.
\]
This implies $\frac{1+\epsr}{\mur}\tn{\x-\y}^2\geq \li\x-\y,\grad{\x}-\grad{\y}\ri\geq \frac{1-\epsr}{\mur}\tn{\x-\y}^2$.
\end{proof}

The following is a standard result on the properties of subgaussian matrices. This result is useful for the proof of Theorem \ref{main binning} to show that binned matrix satisfies RSC/RGC conditions.
\begin{proposition} [Subgaussian RSC \cite{mendelson2007reconstruction, companion}] \label{propo copy}Suppose $\X\in\R^{n\times \pb}$ is a matrix with independent subgaussian rows with bounded subgaussian norm (by a constant) and each row has identity covariance. Given a cone $\Cc\in\R^{\pb}$ and $\eps>0$, there exists constants $c=c, C$ so that if $n>c\eps^{-2}\omega(\Cc)$ for all unit length $\vb,\w\in \Cc$, with probability $1-\exp(-Cn)$, $\X$ obeys
\[
|n^{-1}\vb^T\X^T\X\w-\vb^T\w|\leq \eps.
\]
% Suppose $\bbeta\in \R^p$ is the hidden parameter and data is generated as $\bb=\X\bbeta+\z\in\R^n$ and the loss function is quadratic i.e. $\Lc(\bbeta)=\tn{\bb-\X\bbeta}^2$. Suppose $\sqrt{n}\X$ is a matrix with independent and isotropic subgaussian rows. Then, given $\eps>0$, there exists constants $c,C$ such that if $n\geq c\omega(\Cc-\Cc)$, restricted gradient correlation holds with $\mu=1$ and $\eps=\eps$ with probability $1-c\exp(-Cn)$.
\end{proposition}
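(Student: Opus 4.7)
The plan is a two-step reduction followed by invoking a standard Mendelson-type uniform concentration bound for quadratic forms of subgaussian matrices.

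First, I would reduce the bilinear bound to a quadratic-form bound via polarization. Define $\M_n = n^{-1}\X^T\X - \Iden$. The identity
\[
4\vb^T\M_n\w \;=\; (\vb+\w)^T\M_n(\vb+\w) - (\vb-\w)^T\M_n(\vb-\w)
\]
shows that, since $\vb,\w\in\Cc$ have unit norm, each of $\vb\pm\w$ lies in $\Cc-\Cc$ and has Euclidean norm at most $2$. Hence it suffices to prove a uniform quadratic bound of the form
\[
\sup_{\ub \in (\Cc-\Cc),\,\|\ub\|\le 1} \bigl|\ub^T\M_n\ub\bigr| \;\le\; \eps/c'
\]
for a numerical constant $c'$. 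Using the elementary estimate $\omega(\Cc-\Cc)\le 4\,\omega(\Cc)$ (write $\g^T(\vb-\w)=\g^T\vb-\g^T\w$ and apply $(a-b)^2\le 2a^2+2b^2$), this reduction costs only a constant factor in the sample-complexity condition $n>c\eps^{-2}\omega(\Cc)$.

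Second, I would invoke the Mendelson-type empirical process bound for isotropic subgaussian designs: if $\X$ has independent rows with identity covariance and uniformly bounded $\psi_2$-norm, then for every bounded set $T\subseteq\{\|\ub\|\le 1\}$,
\[
\sup_{\ub\in T}\bigl|\ub^T\M_n\ub\bigr| \;\le\; C_1\sqrt{\tfrac{\omega(T)}{n}} \;+\; C_1\tfrac{\omega(T)}{n} \;+\; t
\]
holds with probability at least $1-\exp(-c_1 n\min(t^2,t))$. This is precisely the statement available in \cite{mendelson2007reconstruction} (and restated in \cite{companion}), so I would use it as a black box. Applying it to $T=(\Cc-\Cc)\cap\{\|\ub\|\le 1\}$ with $t\asymp\eps$, and choosing $c$ in the hypothesis $n>c\eps^{-2}\omega(\Cc)$ large enough that both $C_1\sqrt{\omega(T)/n}$ and $C_1\omega(T)/n$ are bounded by $\eps/(3c')$, yields the target bound with probability $1-\exp(-Cn)$ for a suitable $C>0$.

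The only potentially delicate step is the uniform concentration in the second paragraph, but that is exactly the content of the cited references and does not need to be rederived; the novel work in this paper is the choice of $\Cc$ and the computation of $\omega(\Cc)$, which is carried out separately in the proof of Theorem \ref{main binning}. Thus the proof of the proposition itself reduces to the polarization reduction plus a citation.
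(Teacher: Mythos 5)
The paper never proves this proposition: it is imported verbatim from \cite{mendelson2007reconstruction, companion} and used as a black box, so ``polarization plus citation'' is the right level of effort, and your second paragraph (the uniform quadratic-form bound for isotropic subgaussian rows) is indeed exactly what those references supply. There is, however, a genuine gap in your reduction. The ``elementary estimate'' $\omega(\Cc-\Cc)\le 4\,\omega(\Cc)$ is false for general cones under the paper's normalization $\omega(\Cc)=\E[\sup_{\vb\in\Cc,\tn{\vb}\leq 1}(\g^{\top}\vb)^2]$. The inequality $(a-b)^2\le 2a^2+2b^2$ does not rescue it, because a unit-norm element of $\Cc-\Cc$ can be the difference of two elements of $\Cc$ of arbitrarily large norm, and you cannot rescale them separately while preserving their difference. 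Concretely, take the narrow circular cone $\Cc=\{t(\e_1+\eps\x)\,:\,t\ge 0,~\x\perp\e_1,~\tn{\x}\le 1\}\subset\R^d$ with $\eps=d^{-1/2}$: one checks $\omega(\Cc)=\order{1}$, yet $\Cc-\Cc$ contains the entire hyperplane $\e_1^{\perp}$ (take both points at the same large height $t$), so $\omega(\Cc-\Cc)\ge d-1$. A secondary issue is that $\vb+\w\in\Cc-\Cc$ requires $\Cc=-\Cc$; this happens to hold for the constraint set \eqref{constraint_eq} but not for the arbitrary cone of the proposition.

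Both problems disappear if you do not renormalize back to the cone $\Cc-\Cc$: apply the Mendelson-type bound directly to the bounded set $T=\{\vb\pm\w\,:\,\vb,\w\in\Cc,~\tn{\vb}=\tn{\w}=1\}$, which lies in the Euclidean ball of radius $2$ and whose Gaussian width obeys $\E[\sup_{\ub\in T}\g^{\top}\ub]\le 2\,\E[\sup_{\vb\in\Cc,\tn{\vb}\le 1}|\g^{\top}\vb|]\le 2\sqrt{\omega(\Cc)}$ by subadditivity of suprema and Jensen; the cited empirical-process bound holds for arbitrary bounded index sets, so the condition $n>c\eps^{-2}\omega(\Cc)$ is recovered with only the constant changed. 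With that substitution your argument goes through. Note also that where the paper actually invokes the proposition (in the proof of Theorem \ref{main binning}) it sidesteps this issue entirely by applying the proposition to the cone $\Cc-\Cc$ itself and bounding $\omega(\Cc-\Cc)$ by direct structural containment in the $2s_G$-group-sparse, $2s_L$-smooth vectors, rather than by any general comparison to $\omega(\Cc)$.
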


\subsection{Proof of Theorem \ref{general loss}}
%Compared to \cite{companion}, we have two novelt
\begin{proof} Our proof strategy borrows ideas from \cite{companion}. We first show that $\bbeta_t-\bbetas$ converges to zero. Observe that $\Cc-\{\vb\}$ is the set of feasible directions at $\vb$ namely $\{\ub\big|\ub+\vb\in\Cc\}$. Let $\cone{S}=\text{Cl}(\{\alpha \s~\big|~\alpha\geq 0,~\s\in S\})$ i.e. the closure of cone of $S$. Define tangent cone to be $\Tc_{\Cc,\vb}=\cone{\Cc-\{\vb\}}$. PGD iterations for $\bbeta_{t+1}$ obey
\begin{align}
\tn{\bbeta_{t+1}-\bbetas}&=\tn{\Pc_{\Cc}(\bbeta_{t}-\mur\grad{\bbeta_t})-\bbetas}\\
&= \tn{\Pc_{\Cc-\{\bbetas\}}(\bbeta_{t}-\bbetas-\mur\grad{\bbeta_t})}\\
&\leq 2\tn{\Pc_{\Tc_{\Cc,\bbetas}}(\bbeta_{t}-\bbetas-\mur\grad{\bbeta_t})}\label{two factor}\\
&= 2\sup_{\w\in\Tc_{\Cc,\bbetas},\tn{\w}\leq1}\li\w,\bbeta_{t}-\bbetas-\mur\grad{\bbeta_t}\ri\\
&= 2\sup_{\w\in\Tc_{\Cc,\bbetas},\tn{\w}\leq1}\li\w,\bbeta_{t}-\bbetas-\mur(\grad{\bbeta_t}-\grad{\bbetas})-\mur\grad{\bbetas}\ri\\
&\leq 2\sup_{\w\in\Tc_{\Cc,\bbetas},\tn{\w}\leq 1}\li\w,\bbeta_{t}-\bbetas-\mur(\grad{\bbeta_t}-\grad{\bbetas})\ri\label{one before last}\\
&\hspace{5pt}+2\mur\sup_{\w\in\Tc_{\Cc,\bbetas},\tn{\w}\leq1}\li\w,-\grad{\bbetas}\ri\label{last term}.
\end{align}
\eqref{two factor} follows from Lemma $6.4$ of \cite{companion}. In the final line, optimality conditions (KKT) imply that 
\[
\inf_{\w\in\Tc_{\Cc,\bbetas},\tn{\w}\leq 1}\li\w,\grad{\bbetas}\ri\geq 0
\] hence \eqref{last term} is non-positive. Using the fact that $\w\in \Tc_{\Cc,\bbetas}\subset\Cc-\Cc$, the restricted gradient correlation bounds the first term \eqref{one before last} as
\[
\sup_{\w\in\Tc_{\Cc,\bbetas},\tn{\w}\leq 1}\li\w,\bbeta_{t}-\bbetas-\mur(\grad{\bbeta_t}-\grad{\bbetas})\ri\leq \epsr\tn{\bbeta_t-\bbetas}.
\]
Combining, we obtain $\tn{\bbeta_{t+1}-\bbetas}\leq 2\epsr \tn{\bbeta_{t}-\bbetas}$ which gives the linear convergence
\[
\tn{\bbeta_{t}-\bbetas}\leq (2\epsr)^t \tn{\bbetas}
\]
%Finally, use the trivial bound 
%\[
%\Lc(\bbeta_t)-\Lc(\bbetas)\leq \li\bbeta_t-\bbetas,\grad{\bbetas}\ri+
%\]
%\[
%\kappa\tn{\x-\y}^2\li\x-\y,\grad{\x}-\grad{\y}\ri\leq L\tn{\x-\y}^2
%\]
%\[
%\li\x-\y,\grad{\x}\ri\geq \Lc(\x)-\Lc(\y)\geq \li\x-\y,\grad{\y}\ri
%\]
%\[
%\li\bbeta_t-\bbetas,\grad{\bbeta_t}\ri\geq \Lc(\bbeta_t)-\Lc(\bbetas)
%\]
To achieve the convergence of loss function, observe that
\begin{align}
\frac{\epsr+1}{\mur}\tn{\bbeta_t-\bbetas}^2+&\li\bbeta_t-\bbetas,\grad{\bbetas}\ri\\
&\geq \li\bbeta_t-\bbetas,\grad{\bbeta_t}-\grad{\bbetas}\ri+\li\bbeta_t-\bbetas,\grad{\bbetas}\ri\geq \Lc(\bbeta_t)-\Lc(\bbetas).\nn
\end{align}
On the left hand side, we upper bound $\li\bbeta_t-\bbetas,\grad{\bbetas}\ri\leq \tn{\bbeta_t-\bbetas}\tn{\grad{\bbetas}}$ to conclude.

The second statement follows an identical argument.
\begin{align}
\tn{\bbeta_{t+1}-\btrue}&=\tn{\Pc_{\Cc}(\bbeta_{t}-\mur\grad{\bbeta_t})-\btrue}\\
&= \tn{\Pc_{\Cc-\{\btrue\}}(\bbeta_{t}-\btrue-\mur\grad{\bbeta_t})}\\
&\leq 2\tn{\Pc_{\Tc_{\Cc,\btrue}}(\bbeta_{t}-\btrue-\mur\grad{\bbeta_t})}\\
&= 2\sup_{\w\in\Tc_{\Cc,\btrue},\tn{\w}\leq1}\li\w,\bbeta_{t}-\btrue-\mur\grad{\bbeta_t}\ri\\
&= 2\sup_{\w\in\Tc_{\Cc,\btrue},\tn{\w}\leq1}\li\w,\bbeta_{t}-\btrue-\mur(\grad{\bbeta_t}-\grad{\btrue})-\mur\grad{\btrue}\ri\nn\\
&\leq 2\sup_{\w\in\Tc_{\Cc,\btrue},\tn{\w}\leq 1}\li\w,\bbeta_{t}-\btrue-\mur(\grad{\bbeta_t}-\grad{\btrue})\ri\\
&~~~+2\mur\sup_{\w\in\Tc_{\Cc,\btrue},\tn{\w}\leq 1}\li\w,-\grad{\btrue}\ri\\
&\leq 2\epsr \tn{\bbeta_t-\btrue}+2\mur\tn{\Pc_{\Cc-\Cc}(-\grad{\btrue})}.
\end{align}
The recursion in the last line implies
\[
\tn{\bbeta_{t}-\btrue}\leq (2\epsr)^t\tn{\btrue}+\frac{2\mur}{1-2\epsr}\tn{\Pc_{\Cc-\Cc}(-\grad{\btrue})}
\]
which is the advertised bound.
\end{proof}

\section{Proof of Theorem \ref{main binning}}\begin{proof}
The proof follows by combining the main results of Sections \ref{sec:samp comp} and \ref{sec:rand bin}. The main ingredient is Theorem \ref{general loss}. First, we show \irb~matrix obeys restricted gradient correlation with $n>n_0$ samples. To show this, first observe that Proposition \ref{propo copy} is applicable since Theorems \ref{subgauss rows} and \ref{good one} proves that $\Xbin$ is a subgaussian matrix. Then, we combine Theorem \eqref{samp comp} and Proposition \ref{propo copy} to deduce that under given conditions (i.e. $n>n_0$) for all $\tn{\x}=1,\x\in\Cc-\Cc$, 
\[
1+\eps\geq \tn{\Xbin\x}\geq 1-\eps
\]
as long as $n>c\eps^2\omega(\Cc-\Cc)$. Observe that $\Cc-\Cc$ is a subset of the $2s_G$ group sparse vectors which are $2s_L$ locally smooth hence $\omega(\Cc-\Cc)\leq n_0=cs_G(s_L\log  \max_jb_j+\log p)\leq c s_Gs_L\log \pb$.

Next, note that $\grad{\x}-\grad{\y}=\Xbin^T\Xbin(\x-\y)$ which means
\[
(1+\eps)^2\geq\li\x-\y,\grad{\x}-\grad{\y}\ri=\tn{\Xbin(\x-\y)}^2\geq (1-\eps)^2.
\]
Now, using Lemma \ref{rsc rgc lemma}, we immediately find that RGC condition is satisfied with $\mur=1$ and $\epsr$ grows as $\sqrt{n_0/n}$. This yields the linear convergence bound. To obtain the statistical precision term, we note that $-\grad{\btrue}=\Xbin^T\z$ hence applying Theorem \ref{general loss} the error term is
\[
\Pc_{\Cc-\Cc}(\Xbin^T\z)=\sup_{\tn{\vb}\leq 1,\vb\in\Cc-\Cc}\vb^T(\Xbin^T\z).
\]
$\frac{\Xbin^T\z}{\tn{\z}}$ is a subgaussian vector hence classical generic chaining bounds \cite{talagrand2006generic} imply that ``subgaussian width'' is upper bounded by a constant of Gaussian width i.e.
\[
\sup_{\tn{\vb}\leq 1,\vb\in\Cc-\Cc}\vb^T(\Xbin^T\z)\leq c\omega(\Cc-\Cc)^{1/2}\tn{\z}.
\]
These finishes the proof of the first statement.

The second statement is based on the idea of approximating \rrb~with \irb. This is outlined in Theorem \ref{irb to rrb}. In particular, as long as 
\[
s_G\leq c\sqrt{n/(\max_jb_j\log\pb)}
\]
for sufficiently small $c>0$, the restricted gradient correlation holds with sufficiently small $\epsr$ with $\mur=1$ hence, we have linear convergence.

For the error bound, we use the weaker estimate
\[
\Pc_{\Cc-\Cc}(\Xbin^T\z)\leq \sup_{\tn{\vb}\leq 1,\vb\in \Cc-\Cc}\vb^T\Xbin^T\z\leq \tn{\z}\sup_{\tn{\vb}\leq 1,\vb\in \Cc-\Cc}\tn{\Xbin\vb}\leq \tn{\z}(1+\eps)
\]
due to lack of subgaussianity.
\end{proof}

\section{Proof of Theorem \ref{main binning} - Sample complexity analysis}\label{sec:samp comp}

As discussed in the proof sketch of Theorem \ref{main binning}, a crucial estimate is the Gaussian complexity of the constraint set $\Cc$. First, we provide a rigorous definition of Gaussian complexity.
\begin{definition} [Gaussian complexity] Given a closed cone $\Cc\in\R^d$ and a standard normal vector $\g\in\R^d$, Gaussian complexity of $\Cc$ is defined as
\[
\omega(\Cc)=\E[\tn{\Pc_\Cc(\g)}^2]=\E[\sup_{\vb\in\Cc,\tn{\vb}\leq 1}(\vb^T\g)^2]
\]
\end{definition}
Given a finite set of scalars $\{a_i\}_{i\geq 1}$, let $\text{sumtop}_k(\{a_i\}_{i\geq 1})$ return the sum of top $k$ elements. The theorem below, summarizes our result on Gaussian complexity of the constraint set $\Cc$ described in \eqref{constraint_eq}.%that arises from Algorithm \ref{pgd-algo}.
\begin{theorem}\label{samp comp} Relax the definition of $\Cc$ in \eqref{constraint_eq} by allowing distinct local smoothness values $\{s_{L,j}\}_{j\leq p}$. Then, for some absolute constant $c>0$, $\omega(\Cc)$ obeys
\[
\omega(\Cc)\leq c \times \text{sumtop}_{s_G}(\{s_{L,j}\log b_j+\log p\}_{1\leq j\leq p}).
\]
\end{theorem}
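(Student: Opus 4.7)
The plan is to decompose the supremum defining $\omega(\Cc)$ into an inner maximization over each block's local-smoothness set and an outer selection of which $s_G$ blocks are active. Writing $\g=[\g^1,\dots,\g^p]$ and defining $\Sc_j\subset\R^{b_j}$ as the set of zero-mean, $s_{L,j}$-piecewise-constant (or piecewise-linear) vectors, set
\[
M_j \;:=\; \sup_{\tilde\vb^j\in\Sc_j,\,\tn{\tilde\vb^j}\leq 1}(\tilde\vb^j)^\top \g^j.
\]
For any $\vb\in\Cc$ with $\tn{\vb}\leq 1$, let $S=\{j:\vb^j\neq 0\}$ (so $|S|\leq s_G$) and decompose $\vb^j=\alpha_j\tilde\vb^j$ with $\tilde\vb^j\in\Sc_j$ of unit norm, giving $\sum_{j\in S}\alpha_j^2\leq 1$. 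Cauchy--Schwarz then yields $\vb^\top\g\leq\sum_{j\in S}\alpha_j M_j\leq(\sum_{j\in S}M_j^2)^{1/2}$, and this bound is attained by choosing each $\tilde\vb^j$ to be a per-block maximizer and $\alpha_j\propto M_j$. Hence
\[
\Bigl(\sup_{\vb\in\Cc,\,\tn{\vb}\leq 1}\vb^\top\g\Bigr)^2 \;=\; \operatorname{sumtop}_{s_G}\!\bigl(\{M_j^2\}_{j=1}^p\bigr),
\]
reducing the Gaussian complexity of $\Cc$ to an expectation of a sumtop of independent block widths.

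Next I would control each $M_j$. The set $\Sc_j$ is a union of at most $\binom{b_j-1}{s_{L,j}-1}$ linear subspaces (one per breakpoint configuration), each of dimension at most $s_{L,j}$ after the zero-mean cut; the piecewise-linear case admits an analogous description with only a constant factor penalty. The standard chi-square maximal inequality for a union of $N$ subspaces of dimension $d$ delivers $\E[M_j^2]\leq c(d+\log N)\leq c\,s_{L,j}\log b_j$. Since $\g\mapsto M_j$ is $1$-Lipschitz, Gaussian concentration provides the subgaussian tail $\Pro(M_j>\E M_j+t)\leq e^{-t^2/2}$, and crucially the $\{M_j\}_{j=1}^p$ are independent because they depend on disjoint blocks of $\g$.

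To assemble the final bound, I would split $M_j^2\leq 2(\E M_j)^2+2(M_j-\E M_j)^2$ and apply sub-additivity of $\operatorname{sumtop}_{s_G}$. The mean piece contributes at most $c\,\operatorname{sumtop}_{s_G}(s_{L,j}\log b_j)$ directly. For the fluctuation piece, the centered variables are independent subgaussians with constant variance proxy, so the classical Gaussian maximal inequality gives $\E\max_j(M_j-\E M_j)^2\leq c\log p$, hence the crude estimate $\operatorname{sumtop}_{s_G}\leq s_G\max$ yields $c\,s_G\log p$. Summing and absorbing constants produces the advertised bound $\omega(\Cc)\leq c\,\operatorname{sumtop}_{s_G}(s_{L,j}\log b_j+\log p)$.

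The main obstacle I anticipate is the per-block step: one must verify that $\Sc_j$ genuinely admits the clean union-of-subspaces description claimed above, with the correct enumeration of knot configurations, the correct dimension per piece, and the zero-mean constraint reducing dimension by exactly one without disturbing the subspace count. For the piecewise-linear variant this requires a slightly more delicate counting argument since each segment carries two parameters rather than one. A secondary concern is ensuring the constants from Gaussian concentration and the union bound combine cleanly so that the additive $+\log p$ term survives correctly under the sumtop operation.
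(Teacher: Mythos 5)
Your proposal is correct, and it shares the paper's top-level architecture: the same block decomposition $\vb^\top\g=\sum_j\langle\vb^j,\g^j\rangle$ followed by Cauchy--Schwarz to reduce $\omega(\Cc)$ to $\E[\text{sumtop}_{s_G}(\{M_j^2\}_j)]$ with $M_j=\tn{\Pc_{\Sc_j}(\g^j)}$, and the same use of $1$-Lipschitzness plus Gaussian concentration to pay only an additive $s_G\log p$ for the block selection (the paper's Theorem \ref{theorem-sparse} does this via a union bound over $p$ and tail integration; your mean/fluctuation split combined with subadditivity of $\text{sumtop}_{s_G}$ is an equivalent repackaging, and correctly recombines into $\text{sumtop}_{s_G}(\{s_{L,j}\log b_j+\log p\}_j)$ since the $\log p$ term is index-independent). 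Where you genuinely diverge is the per-block estimate $\E[M_j^2]\leq c\,s_{L,j}\log b_j$, i.e.\ the paper's Theorem \ref{local-sample}: the paper controls the single scalar supremum $f(\g)=\sup_{\Ic}\onebb^\top\g_\Ic/\sqrt{|\Ic|}$ over all $O(b_j^2)$ contiguous segments and then bounds any unit-norm piecewise-constant $\x$ via $\x^\top\g\leq\sum_i\tn{\x_{\Ic_i}}f(\g)\leq\sqrt{s_{L,j}}\,f(\g)$, whereas you view $\Sc_j$ as a union of at most $\binom{b_j-1}{s_{L,j}}$ subspaces of dimension $O(s_{L,j})$ and invoke the chi-square maximal inequality $\E[\max_{k\leq N}\tn{\Pc_{V_k}(\g)}^2]\leq c(d+\log N)$. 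Both give the same order; your route is more modular (it applies verbatim to any union-of-subspaces smoothness model and handles the piecewise-linear case by merely doubling the per-piece dimension), while the paper's segment-sum argument needs only a union bound over $b_j^2$ one-dimensional Gaussians and avoids enumerating knot configurations. The obstacle you flag is not a real one: fixing the knot locations makes the admissible vectors a subspace spanned by block indicators, which always contains $\onebb$, so the zero-mean constraint drops the dimension by exactly one without affecting the count, and $\log\binom{b_j-1}{s_{L,j}}\leq s_{L,j}\log b_j$ as required.
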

\begin{proof} The proof is achieved by combining Theorem \ref{theorem-sparse} and Theorem \ref{local-sample}. Denoting the local constraint sets by $\Sc_j$, Theorem \ref{theorem-sparse} shows
\begin{align}
\omega(\Cc)&\leq (\sqrt{\text{sumtop}_{s_G}\{\omega(\Sc_j)\}_{j\leq p}}+c\sqrt{s_G\log p})^2\leq 2\text{sumtop}_{s_G}\{\omega(\Sc_j)\}_{j\leq p}+2cs_G\log p.\\
&\leq 2\text{sumtop}_{s_G}\{\omega(\Sc_j)+c\log p\}_{j\leq p}
\end{align}
Next, Theorem \ref{local-sample} bounds $\omega(\Sc_j)\leq c_2s_{L,j}\log b_j$ which results in the desired bound.
\end{proof}

\begin{theorem}[Sparse feature selection with local smoothness] \label{theorem-sparse}Let $\g\in\Nn(0,\Iden_{\pb})$. Let $\Cc\subset\R^{\pb}$ be the set of $s_G$ group sparse vectors $\vb$ where each subvector $\vb^j$ lies in a cone $\Sc_j$ which stands for the smoothness constraint. For some constant $c>0$, we have that%satisfies $\Lc$ property. We have that
\[
\omega(\Cc)\leq  (\sqrt{\text{sumtop}_{s_G}\{\omega(\Sc_j)\}_{j\leq p}}+c\sqrt{s_G\log p})^2.
\]
\end{theorem}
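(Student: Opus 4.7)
The plan is to reduce the Gaussian width of $\Cc$ to per-block contributions and then handle the selection of the worst $s_G$ blocks via subgaussian concentration. Partition $\g$ compatibly with the block structure, so $\g=[\g^1;\ldots;\g^p]$ with $\g^j\in\R^{b_j}$ independent standard Gaussian vectors, and define $Z_j := \sup_{\ub\in\Sc_j,\,\tn{\ub}\leq 1}\ub^\top\g^j$, $\mu_j := \E[Z_j]$, and $Y_j := Z_j - \mu_j$. By construction $\E[Z_j^2]=\omega(\Sc_j)$ so $\mu_j^2\leq\omega(\Sc_j)$; the $Y_j$ are independent across $j$ since the $\g^j$ are; and each $Y_j$ is $1$-subgaussian by Gaussian concentration applied to the $1$-Lipschitz map $\g^j\mapsto Z_j$.

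The first step is a blockwise Cauchy-Schwarz reduction. For any $\vb\in\Cc$ with $\tn{\vb}\leq 1$ and support $S=\supp{\vb}$ of cardinality at most $s_G$, the fact that $\Sc_j$ is a cone gives $\vb^j/\tn{\vb^j}\in\Sc_j$ with unit norm, so $(\vb^j)^\top\g^j\leq \tn{\vb^j}\,Z_j$, and Cauchy-Schwarz yields $\vb^\top\g\leq \sqrt{\sum_{j\in S}\tn{\vb^j}^2}\cdot \sqrt{\sum_{j\in S}Z_j^2}\leq \sqrt{\text{sumtop}_{s_G}\{Z_j^2\}_{j\leq p}}$. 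Next, since $(a_j)_j\mapsto \sqrt{\text{sumtop}_{s_G}\{a_j^2\}}$ is a genuine norm on $\R^p$ (the top-$s_G$ Euclidean / Ky Fan norm), writing $Z_j=\mu_j+Y_j$ and applying its triangle inequality gives
\[
\sqrt{\text{sumtop}_{s_G}\{Z_j^2\}_{j\leq p}}\;\leq\; \sqrt{\text{sumtop}_{s_G}\{\omega(\Sc_j)\}_{j\leq p}}+\sqrt{\text{sumtop}_{s_G}\{Y_j^2\}_{j\leq p}}.
\]

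To control the fluctuation term, I would use that the $Y_j$ are independent $1$-subgaussians, so the standard maximal inequality yields $\E[\max_{j\leq p}Y_j^2]\leq c\log p$; combining this with the crude bound $\text{sumtop}_{s_G}\{Y_j^2\}\leq s_G\max_j Y_j^2$ gives $\E[\text{sumtop}_{s_G}\{Y_j^2\}]\leq c s_G\log p$. Finally, setting $A:=\text{sumtop}_{s_G}\{\omega(\Sc_j)\}_{j\leq p}$ and $B:=\text{sumtop}_{s_G}\{Y_j^2\}_{j\leq p}$,
\[
\omega(\Cc)\;\leq\; \E\big[(\sqrt{A}+\sqrt{B})^2\big]\;=\;A+2\sqrt{A}\,\E[\sqrt{B}]+\E[B]\;\leq\; (\sqrt{A}+\sqrt{\E[B]})^2
\]
by Jensen's inequality, and substituting $\E[B]\leq c s_G\log p$ produces the claim. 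The main obstacle is the Ky Fan step: one has to recognize that the top-$s_G$ Euclidean functional is an honest norm, since it is this fact that lets the deterministic piece $\mu_j$ be cleanly separated from the fluctuation $Y_j$ before the expectation is taken, and in turn forces the bound into the $(\sqrt{\text{bias}}+\sqrt{\text{variance}})^2$ shape stated in the theorem. The subgaussian maximal estimate of the third step is loose by a $\log(ep/s_G)$ versus $\log p$ factor, but the coarser bound is all the statement asks for.
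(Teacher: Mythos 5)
Your proof is correct and follows essentially the same route as the paper's: the blockwise Cauchy--Schwarz reduction to $\text{sumtop}_{s_G}$ of the squared block suprema, $1$-Lipschitz Gaussian concentration of each block supremum about a mean bounded by $\sqrt{\omega(\Sc_j)}$, and a union/maximal bound over the $p$ blocks supplying the $s_G\log p$ term. The only real difference is cosmetic, in the final assembly: you separate bias from fluctuation pathwise via the triangle inequality for the top-$s_G$ Euclidean norm and finish with Jensen, whereas the paper derives a high-probability bound on the supremum and integrates the tail.
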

\begin{proof} Suppose $\vb\in\Cc$ obeying $\tn{\vb}\leq 1$ and $\g$ is a standard normal vector. We have that
\begin{align}
\vb^T\g&=\sum_{j\leq p}\li\vb^j,\g^j\ri\leq \sum_{j\leq p,\vb^j\neq 0}\tn{\Pc_{\Sc_j}(\g_j)}\tn{\vb^j}\\
&\leq \sqrt{\sum_{j\leq p,\vb^j\neq 0}\tn{\Pc_{\Sc_j}(\g_j)}^2}\sqrt{\sum_{j}\tn{\vb^j}^2}\\
&\leq \sqrt{\text{sumtop}_{s_G}(\tn{\Pc_{\Sc_j}(\g_j)}^2)}\label{sumtop}
\end{align} 
where the second line follows from Cauchy-Schwarz and the last line follows from the fact that $\tn{\vb}\leq 1$.

Using the fact that projection is a $1$-Lipschitz function (Lemma \ref{lipconst}), and applying Gaussian concentration of Lipschitz functions, for any $\Sc_j$, we have that
\[
\Pr(\tn{\Pc_{\Sc_j}(\g_j)}\geq \sqrt{\omega(\Sc_j)}+2\tau)\leq \exp(-\tau^2).
\]
Applying a union bound over $p$ instances and setting $\tau\rightarrow \tau+\sqrt{\log p}$ we obtain that
\[
\Pr(\max_{j\leq p}(\tn{\Pc_{\Sc_j}(\g_j)}-\sqrt{\omega(\Sc_j)})\geq \tau)\leq \exp(-\tau^2).
\]
Define $q=\text{sumtop}_{s_G}(\{\omega(\Sc_j)\}_{j\leq p})$ and $l=\text{sumtop}_{s_G}(\{\sqrt{\omega(\Sc_j)}\}_{j\leq p})\leq \sqrt{s_Gq}$. Substituting into \eqref{sumtop}, with the same $\exp(-\tau^2)$ probability
\begin{align}
\sup_{\vb\in\Cc}(\vb^T\g)^2&\leq \text{sumtop}_{s_G}(\{(\sqrt{\omega(\Sc_j)}+2(\tau+\sqrt{\log p}))^2\}_{j\leq p})\\
&\leq q+4l(\tau+\sqrt{\log p})+s_G4(\tau+\sqrt{\log p})^2\\
&\leq q+4\sqrt{s_Gq}(\tau+\sqrt{\log p})+s_G4(\tau+\sqrt{\log p})^2\\
&\leq (\sqrt{q}+2\sqrt{s_G}(\sqrt{\log p}+\tau))^2
\end{align}
This implies that the left hand-side is a random variable with a exponential tail bound. The expectation of such a random variable can be upper bounded by a standard integration-by-parts trick to yield
\[
\E[\sup_{\vb\in\Cc}(\vb^T\g)^2]\leq \sqrt{q}+c\sqrt{s_G\log p}
\]
for some constant $c>0$ (assuming $p>2$).
\end{proof}

\begin{theorem} \label{local-sample}Let $\Sc\subset\R^{b}$ be the set of $s_L$ piecewise-constant (or piecewise-linear functions). For some constant $c>0$, we have that
\[
\omega(\Sc)\leq cs_L\log b.
\]
\end{theorem}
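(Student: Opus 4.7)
\begin{proofsk}
The plan is to view $\Sc$ as a finite union of low-dimensional subspaces and then apply a standard Gaussian concentration plus union bound argument. A vector $\vb\in\R^b$ is $s_L$-piecewise-constant precisely when its discrete derivative $(\vb_{i+1}-\vb_i)_{i<b}$ is supported on at most $s_L-1$ coordinates. Fixing the locations of these breakpoints pins $\vb$ to an $s_L$-dimensional subspace of $\R^b$ (the space of functions constant on the induced partition). Hence
\[
\Sc \;=\; \bigcup_{T} V_T,
\]
where $T$ ranges over subsets of $\{1,\dots,b-1\}$ of size at most $s_L-1$ and each $V_T$ is a subspace of dimension at most $s_L$. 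The number of such index sets is at most $\binom{b-1}{s_L-1}\leq (eb/s_L)^{s_L}$, so $\log |\{T\}|\leq s_L\log(eb)$.

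Next, I would bound $\omega(\Sc)=\E[\sup_{\vb\in\Sc,\tn{\vb}\leq 1}(\vb^T\g)^2]=\E[\max_T\tn{\Pc_{V_T}(\g)}^2]$. For each fixed $T$, $\tn{\Pc_{V_T}(\g)}^2$ is $\chi^2$ with at most $s_L$ degrees of freedom, so $\tn{\Pc_{V_T}(\g)}$ is $1$-Lipschitz in $\g$ with expectation at most $\sqrt{s_L}$. By Gaussian concentration of Lipschitz functions,
\[
\P\bigl(\tn{\Pc_{V_T}(\g)}\geq \sqrt{s_L}+\tau\bigr)\leq \exp(-\tau^2/2).
\]
Union bounding over the at most $(eb/s_L)^{s_L}$ choices of $T$ and setting $\tau\to\tau+c\sqrt{s_L\log b}$ yields
\[
\P\Bigl(\max_T\tn{\Pc_{V_T}(\g)}\geq c'\sqrt{s_L\log b}+\tau\Bigr)\leq \exp(-\tau^2/2).
\]
Squaring and integrating this tail bound by parts (the same trick used in Theorem \ref{theorem-sparse}) gives $\E[\max_T\tn{\Pc_{V_T}(\g)}^2]\leq cs_L\log b$, which is the claim.

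The piecewise-linear variant proceeds identically: fixing the at most $s_L-1$ knots pins $\vb$ to a subspace of dimension $O(s_L)$ (values and slopes on each segment, minus continuity constraints), and the number of knot configurations is again $\binom{b-1}{s_L-1}$, so the same bound holds. The only subtle point I anticipate is confirming that the subspace dimension for piecewise-linear functions really is $O(s_L)$ rather than $O(b)$; but the continuity constraints at knots remove all but $O(s_L)$ degrees of freedom, so no logarithmic slack is lost. No technical obstacle beyond bookkeeping is expected, since the argument is a textbook covering/union-bound calculation.
\end{proofsk}
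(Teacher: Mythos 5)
Your proposal is correct, but it takes a genuinely different route from the paper. You decompose $\Sc$ as a union of $\binom{b-1}{s_L-1}$ subspaces $V_T$ of dimension at most $s_L$ (one per choice of breakpoint locations), apply Gaussian concentration to each $\tn{\Pc_{V_T}(\g)}$, and pay for the exponentially many subspaces with a union bound, so the $s_L\log b$ emerges as the log-cardinality of the collection. The paper instead proves a single low-complexity estimate: it bounds $f(\g)=\sup_{\Ic}\onebb^{\top}\g_{\Ic}/\sqrt{|\Ic|}$ over all $O(b^2)$ contiguous segments by $c\sqrt{\log b}$ (a union bound over polynomially many one-dimensional Gaussians), and then, for any unit-norm $s_L$-piecewise-constant $\x$ with segments $\Ic_1,\dots,\Ic_{s_L}$, writes $\x^{\top}\g\leq\sum_{i}\tn{\x_{\Ic_i}}f(\g)\leq\sqrt{s_L}\,f(\g)$ by Cauchy--Schwarz, so the $s_L$ factor enters deterministically rather than through the entropy of the union. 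Both arguments yield $\omega(\Sc)\lesssim s_L\log b$; yours is the more textbook covering argument and gives the marginally sharper form $s_L(1+\log(eb/s_L))$, while the paper's isolates a single reusable random quantity $f(\g)$ and adapts immediately to the piecewise-linear case by replacing $\onebb$ with the orthogonal pair $\{\onebb,\vb_{\text{lin}}\}$ on each segment. Your handling of the piecewise-linear case is also fine: with or without continuity at the knots each segment contributes at most two degrees of freedom, so the subspaces have dimension $O(s_L)$ and no logarithmic slack is lost.
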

%$\E[\sup_{\x\in \Sc,\tn{\x}=1} \li\x,\g\ri^2]\leq$
\begin{proof} We will prove the result for piecewise-constant functions. Piecewise-linear proof follows similarly as discussed below. First we show the following result.
\begin{lemma} \label{pc lemma}Given a standard normal vector $\g\in\R^b$, with probability $1-\exp(-\tau^2/2)$ the following holds. Consider the set of segments $\Ic$ starting at entry $i_s$ and ending at entry $i_e$. We have that
\begin{align}
%|\sum_{i\in \Ic} g_i|\leq \sqrt{|\Ic|}(\tau+\sqrt{\log b})
\E[\sup_{s_i,e_i}\frac{\onebb^T\g_\Ic}{\sqrt{|\Ic|}}]\leq c\sqrt{\log b}\label{main local}
\end{align}
\end{lemma}
\begin{proof} For a given segment $\Ic$, $\onebb^T\g_\Ic$ has distribution $\Nn(0,|\Ic|)$. Hence, it satisfies
\[
\Pro(|\onebb^T\g_\Ic|\geq \sqrt{|\Ic|}\tau)\leq \exp(-\tau^2/2).
\]
There are $b^2$ different start/end point pairs. By using $\tau\rightarrow\tau+c\sqrt{\log b}$ and applying a union bound, we obtain that for all segments, $|\onebb^T\g_I|\geq \sqrt{|\Ic|}c(\tau+\sqrt{\log b})$ holds. Alternately, we can write
\[
\sup_{s_i,e_i}\frac{\onebb^T\g_\Ic}{\sqrt{|\Ic|}}\leq c(\tau+\sqrt{\log b}).
\]
Applying a standard integration-by-parts obtains the result \eqref{main local}.
\end{proof}
Define $f(\g)=\sup_{s_i,e_i}\frac{\onebb^T\g_\Ic}{\sqrt{|\Ic|}}$. Now, suppose $\x\in\Sc$ is an $s_L$ piecewise constant vector with unit $\ell_2$ norm. Suppose $\x$ has segments $\{[s_i,e_i]\}_{i=1}^{s_L}$. Also denote $\Ic_i=[s_i,e_i]$ so that $\x_{\Ic_i}=\mean{\x_{\Ic_i}}\cdot\onebb$. We can write
\[
\x^T\g=\sum_{i\leq s_L} \x_{\Ic_i}^T\g_{\Ic_i}=(\mean{\x_{\Ic_i}}\cdot\onebb)^T\g_{\Ic_i}\leq \sum_{i\leq s_L}|\mean{\x_{\Ic_i}}|\sqrt{|\Ic|}f(\g).
\]
Now, observe that
\[
\sum_{j\leq s_L} |\mean{\x_{\Ic_i}}|\sqrt{|\Ic|}f(\g)=\sum_{j\leq s_L}\tn{\x_{\Ic_i}} f(\g)\leq \sqrt{s_L}f(\g).
\]
Hence $(\x^T\g)^2\leq s_Lf(\g)^2\leq c^2 s_L\log b$ where we applied \eqref{main local}.

Piecewise linear functions are represented as $ax+b$ where $b$ is the constant term's and $a$ is the linear term's coefficient. For the proof, we simply obtain a variation of Lemma \ref{pc lemma} where the constant term vector $\onebb$ is replaced by the linear term vector with entries $\vb_{\text{lin}}=[-(b-1)/2,-(b-3)/2,\dots,(b-1)/2]$. $\vb_{\text{lin}}$ is orthogonal to the $\onebb$ vector and together they characterize the linear approximation of $\g_\Ic$.
\end{proof}
\begin{lemma}[Lipschitz constant of supremum]\label{lipconst} Suppose $S$ is a subset of the unit $\ell_2$ ball. The function $f(\g)=\sup_{\vb\in S}\g^T\vb$ is a $1$-Lipschitz function of $\g$.
%Suppose $\x\in\R^b$ is a $s_L$ piecewise-constant vector and $\g\sim\Nn(0,\Iden_b)$. Then
%\[
%f(\g)=\sup_{\x\in \PC_k,\tn{\x}=1}\x^T\g
%\]
%is an $\sqrt{k}$-Lipschitz function.
\end{lemma}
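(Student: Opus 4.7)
The plan is to establish Lipschitz continuity directly from the definition, leveraging only Cauchy--Schwarz and the assumption that every $\vb \in S$ has $\tn{\vb} \le 1$. Specifically, I will fix two arbitrary points $\g_1, \g_2 \in \R^d$ and show $|f(\g_1) - f(\g_2)| \le \tn{\g_1 - \g_2}$.

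First, I would observe that for any fixed $\vb \in S$, the scalar map $\g \mapsto \g^T \vb$ is itself Lipschitz: by Cauchy--Schwarz, $|\g_1^T \vb - \g_2^T \vb| = |(\g_1 - \g_2)^T \vb| \le \tn{\g_1 - \g_2} \tn{\vb} \le \tn{\g_1 - \g_2}$, where the last inequality uses the hypothesis $\tn{\vb} \le 1$. This gives $\g_2^T \vb \ge \g_1^T \vb - \tn{\g_1 - \g_2}$ for every $\vb \in S$.

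Next, taking the supremum over $\vb \in S$ on both sides of this inequality yields $f(\g_2) \ge \sup_{\vb \in S}\g_1^T\vb - \tn{\g_1 - \g_2} = f(\g_1) - \tn{\g_1 - \g_2}$, hence $f(\g_1) - f(\g_2) \le \tn{\g_1 - \g_2}$. By symmetry (swapping the roles of $\g_1$ and $\g_2$), the reverse inequality follows, giving $|f(\g_1) - f(\g_2)| \le \tn{\g_1 - \g_2}$, which is exactly the $1$-Lipschitz property.

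There is really no obstacle here: the proof is a one-line application of Cauchy--Schwarz followed by the elementary fact that taking the supremum on both sides of a pointwise inequality preserves the inequality. The only subtle point worth highlighting in the write-up is that one does not need any assumption on $S$ (such as closedness, boundedness in a norm sense beyond the unit ball, or measurability of the supremum), since the Lipschitz bound is derived pointwise before any supremum is taken.
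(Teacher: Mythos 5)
Your proof is correct and follows essentially the same route as the paper's: bound the difference via Cauchy--Schwarz together with the hypothesis $\tn{\vb}\le 1$, then pass to the supremum. Your variant is in fact slightly cleaner, since you derive the pointwise inequality first and take suprema afterwards, whereas the paper invokes maximizing vectors $\x_1,\x_2\in S$ whose existence it tacitly assumes without requiring $S$ to be closed.
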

\begin{proof} Given $\g_1,\g_2$ let us investigate $f(\g_1)-f(\g_2)$. Let $\x_1,\x_2\in S$ be the maximizing vectors. We have that
\[
f(\g_1)-f(\g_2)\leq \x_1^T\g_1-\x_2^T\g_2\leq \x_1^T(\g_1-\g_2)\leq \tn{\g_1-\g_2}\tn{\x_1}.
\]
Observe $\tn{\x_1}\leq 1$ to conclude.
\end{proof}

\section{Proof of Theorem \ref{main binning} - Properties of random binned data matrices}\label{sec:rand bin}
In this section we prove useful properties of the \bdata~matrix when $\X$ obeys Assumption \ref{main assume} and when binning strategies follow Definition \ref{strategy}.
\subsection{Analysis of random encoding vector}

\begin{definition} [Random encoding vector] \label{rev}$\ab\in\R^{\pb}$ is called a random encoding vector if $\{\ab^j\}_{j=1}^p$ are $b_j$ dimensional independent vectors and $\ab^j/\sqrt{b_j}$ is uniformly distributed over the standard basis.
\end{definition}
\begin{theorem}\label{subgauss rows} Let $\ab$ be a random feature encoding as described in Definition \ref{rev}. Then, over the subspace $S_{zm}=\{\vb\in\R^{\pb}~\big|~\onebb^T\vb^j=0,~1\leq j\leq p\}$, $\ab$ is a zero-mean vector with identity covariance and subgaussian norm bounded by $\order{\max_{1\leq j\leq p}\sqrt{b_j}}$.
\end{theorem}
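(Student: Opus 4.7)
The plan is to expand $\ab^T\vb$ blockwise as $\ab^T\vb=\sum_{j=1}^p (\ab^j)^T\vb^j$, exploit the independence of the blocks $\{\ab^j\}_{j\leq p}$, and verify each of the three claims (zero mean, identity covariance, subgaussian norm bound) by direct computation on a vector $\vb\in S_{zm}$.

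First I would verify the mean and covariance. For any fixed $j$, $\ab^j$ takes the value $\sqrt{b_j}\e_k$ with probability $1/b_j$ for $k=1,\dots,b_j$, so
\[
\E[(\ab^j)^T\vb^j]=\frac{1}{b_j}\sum_{k=1}^{b_j}\sqrt{b_j}\,\vb^j_k=\frac{1}{\sqrt{b_j}}\onebb^T\vb^j=0
\]
for any $\vb\in S_{zm}$, giving $\E[\ab^T\vb]=0$. A similar one-line calculation yields $\E[((\ab^j)^T\vb^j)^2]=\|\vb^j\|^2$, and then by independence of the blocks,
\[
\E[(\ab^T\vb)^2]=\sum_{j=1}^p\|\vb^j\|^2=\|\vb\|^2,
\]
so $\ab$ has identity covariance on $S_{zm}$.

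For the subgaussian bound, the key observation is that each $(\ab^j)^T\vb^j$ is a bounded random variable: $|(\ab^j)^T\vb^j|\leq \sqrt{b_j}\,\|\vb^j\|_\infty\leq \sqrt{b_j}\,\|\vb^j\|_2$. Since it is also zero-mean on $S_{zm}$, Hoeffding's lemma gives $\|(\ab^j)^T\vb^j\|_{\psi_2}\leq c\sqrt{b_j}\,\|\vb^j\|_2$. Using independence and the standard fact that subgaussian norms of independent zero-mean summands combine in an $\ell_2$ fashion,
\[
\|\ab^T\vb\|_{\psi_2}^2\leq C\sum_{j=1}^p b_j\,\|\vb^j\|_2^2\leq C\bigl(\max_{1\leq j\leq p}b_j\bigr)\sum_{j=1}^p\|\vb^j\|_2^2=C\bigl(\max_{1\leq j\leq p}b_j\bigr)\|\vb\|^2,
\]
which yields the advertised bound $\|\ab^T\vb\|_{\psi_2}=\order{\max_{j}\sqrt{b_j}}\,\|\vb\|$.

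None of the steps appear to be a genuine obstacle here; the only care needed is to use the constraint $\onebb^T\vb^j=0$ to kill the mean (which is what restricts the statement to the subspace $S_{zm}$), and to note that the factor $\sqrt{b_j}$ in the subgaussian norm is unavoidable since it reflects the size of the nonzero entries of $\ab^j$ itself. If one wanted a sharper bound in special regimes (e.g.\ when $\vb^j$ is dense in each block), a Khintchine-type estimate using $\|\vb^j\|_\infty$ instead of $\|\vb^j\|_2$ could be used, but the stated $\max_j\sqrt{b_j}$ bound is what is needed for the downstream application in Proposition \ref{propo copy}.
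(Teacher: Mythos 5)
Your proposal is correct and follows essentially the same route as the paper: decompose $\ab^T\vb$ blockwise, use the zero-mean constraint $\onebb^T\vb^j=0$ to kill the mean, compute the per-block second moment to get identity covariance on $S_{zm}$, bound each block's subgaussian norm by $\order{\sqrt{b_j}}\tn{\vb^j}$, and combine the independent blocks in $\ell_2$ fashion to get $\order{\max_j\sqrt{b_j}}$. The only cosmetic difference is that you certify the per-block subgaussian norm via boundedness and Hoeffding's lemma while the paper bounds the moments of $\li\x,\ab^j\ri$ directly; both are standard and equivalent here.
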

\begin{proof} First, we argue that each subvector $\ab^j$ is a zero-mean subgaussian vector for each $j$ over the space $\onebb^T\vb^j=0$. Observe that $\E[\ab^j]=b_j^{-1}\onebb$. This implies that $\E[\ab]$ is in the span of $\onebb$ and $\ab-\Pc_{\onebb}(\ab)$ is zero-mean. Next, we obtain the correlation matrix of $\ab$ as
\[
\E[\ab^j{\ab^j}^T]-\E[\ab^j]\E[\ab^j]^T = b_j^{-1}\Iden-b_j^{-2}\onebb\onebb^T
\]
which is the identity matrix scaled by $b_j^{-1}$ over the complementary space of $\onebb$. Hence, $\ab^j-\Pc_{\onebb}(\ab^j)$ has identity covariance over the  space $\{\vb^j~\big|~\onebb^T\vb^j=0\}$. Next, given unit length $\x$, we study the subgaussianity of $\li\x,\ab^j\ri$ as follows
\[
\E[|\li\x,\ab^j\ri|^n]=b_j^{-1}\E[|\sum_i x_i^n|]\leq b_j^{-1}.
\]
Following the definitions in \cite{Vers}, this implies that the subgaussian norm of $\x^T\ab^j$ is bounded by a constant which implies that subgaussian norm of $\ab^j$ is at most $\order{\sqrt{b_j}}$ after scaling. Next, for the concatenated vector, we have that
\[
%\E[|\x^T\ab|^n]=\E[|\sum_{i\leq p}\x_i^T\ab_i|^n]
\x^T\ab=\sum_{1\leq j\leq p}{\x^j}^T\ab^j.
\]
This implies that $\x^T\ab$ is sum of independent random variables and the $j$th random variable ${\x^j}^T\ab^j$ is zero-mean subgaussian with subgaussian norm proportional to $\sqrt{b_j}\tn{\x^i}$. Applying results of \cite{Vers}, this implies that for unit length $\x$, $\x^T\ab$ is subgaussian with norm proportional to $\sqrt{\sum_j b_j\tn{\x^i}^2}\leq \max_{j}\sqrt{b_j}$. This is the very definition of subgaussian norm of a random vector.
\end{proof}

\subsection{\irb~matrices~have independent rows}
%It is crucial that when $\X$
\begin{theorem} \label{good one}Suppose $\X$ obeys Assumption \ref{main assume} and $\Xbin$ is generated by binning $\X$ according to \irb~strategy. Then, $\Xbin$ has independent columns each of which are random encoding vectors scaled by $\sqrt{n}$.
\end{theorem}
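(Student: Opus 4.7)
The plan is to establish that the rows of $\Xbin$ are i.i.d.\ and that each row, rescaled by $\sqrt{n}$, has the distribution of a random encoding vector (Definition \ref{rev}); this is the natural reading of the claim, since random encoding vectors live in $\R^{\pb}$, matching the row dimension of $\Xbin$ rather than the column dimension.

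My first step is to realize the \irb~scheme concretely via the probability integral transform. Since each column of $\X$ has a continuous distribution, the CDF $F_j$ of $\X_{i,j}$ is continuous, and $U_{i,j} := F_j(\X_{i,j})$ is uniform on $[0,1]$. Taking the bin boundaries for feature $j$ to be $F_j^{-1}(k/b_j)$ for $k = 1, \dots, b_j - 1$, one obtains $\P(\X_{i,j} \in \Ic_{j,k}) = 1/b_j$ for every $k$, so the joint vector of bin counts is multinomial $(n, 1/b_j, \dots, 1/b_j)$ and each marginal count is Binomial $(n, 1/b_j)$, matching Definition \ref{strategy}.

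Given this realization, independence of rows follows directly: under Assumption \ref{main assume} the rows of $\X$ are i.i.d., and row $i$ of $\Xbin$ is a deterministic measurable function of row $i$ of $\X$ alone---the boundaries $F_j^{-1}(k/b_j)$ depend only on the marginal distributions, not on the data---so the rows of $\Xbin$ inherit independence and identical distribution. For the in-row distribution, fix $i$: the bin assignment of $\X_{i,j}$ is uniform on $\{1,\dots,b_j\}$, so $\Xbin^j_{i,:}$, whose unique nonzero entry equals $\sqrt{b_j/n}$, satisfies that $\sqrt{n}\,\Xbin^j_{i,:}/\sqrt{b_j}$ is a uniformly random standard basis vector of $\R^{b_j}$. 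Across distinct features $j_1 \neq j_2$, the sub-rows depend only on $\X_{i,j_1}$ and $\X_{i,j_2}$, which are independent by the independent-columns part of Assumption \ref{main assume}; concatenating, the row scaled by $\sqrt{n}$ satisfies precisely Definition \ref{rev}.

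The only delicate point is bridging the abstract multinomial characterization of \irb~in Definition \ref{strategy} with a concrete per-sample, per-feature rule; the CDF-based boundaries above accomplish this in a way that makes both sample independence and cross-feature independence transparent, and everything else reduces to unpacking definitions. No restricted-eigenvalue or concentration machinery is needed here---those enter only in the subsequent theorems that use this structural description as input.
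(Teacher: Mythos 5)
Your proof is correct, and you are right to read the statement as being about rows rather than columns (random encoding vectors live in $\R^{\pb}$, and the surrounding results, e.g.\ the subsection title and the use of Proposition \ref{propo copy}, all require independent rows). Where you differ from the paper is in how the key step --- that each block $\Xbin^j$ has i.i.d.\ rows --- is established. The paper works backward from the bin-size description in Definition \ref{strategy}: it introduces a reference matrix $\B$ whose rows are i.i.d.\ uniformly random standard basis elements, observes that the column nonzero counts of $\Xbin^j$ and $\B$ have the same (multinomial) law, and then appeals to exchangeability of the rows of $\X$ to argue the nonzero supports are uniformly random, concluding the two matrices are equal in distribution. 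You instead exhibit a concrete realization of {\irb} via the probability integral transform, placing bin boundaries at the population quantiles $F_j^{-1}(k/b_j)$; this makes row $i$ of $\Xbin$ a measurable function of row $i$ of $\X$ alone, so independence and identical distribution of rows are immediate, and the in-row structure (uniform bin assignment per feature, independence across features) falls out of Assumption \ref{main assume} directly. The two constructions agree in law --- assigning multinomially sized contiguous blocks of the empirical order statistics is, by exchangeability, the same as thresholding at population quantiles --- but your route is the cleaner one: the paper's argument only matches column-count marginals and gestures at uniformity of supports without fully pinning down the joint distribution, whereas your realization makes the i.i.d.-rows claim a one-line consequence of the i.i.d.-rows assumption on $\X$. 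The one thing worth saying explicitly if you wanted to be fully faithful to the paper's framing is that Definition \ref{strategy} underdetermines the joint law of $\Xbin$, so you are choosing (a natural) canonical realization consistent with it; since the paper's own intended construction coincides with yours in distribution, this is a presentational rather than a mathematical gap.
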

\begin{proof} Suppose $\Xbin$ is generated according to \irb~scheme. First observe that feature columns $\Xbin^j$ are independent of each other. The reason is that $\X$ has independent columns and different features are binned independently. What remains to show is that $\Xbin^j$ has independent rows. To show this, we will argue that the nonzero pattern of $\Xbin^j$ is statistically identical to a matrix $\B$ with independent rows where each row is standard basis element chosen uniformly at random.

Let us understand the properties of $\B$. Each entry of $\B$ is a Bernoulli $b_j^{-1}$ trial, in particular independent rows imply that number of nonzeros in each column is a Binomial $(n,b_j^{-1})$ variable. Viewing each column sum as a random variable, the only dependence is that the random variables add up to $n$. By construction (i.e.~Definition \ref{strategy}), number of nonzeros per column of $\Xbin^j$ is statistically identical to that of $\B$. We next argue that nonzero support of each column is chosen uniformly at random to conclude. This follows from the fact that $\Xbin^j$ has i.i.d. columns. Since rows of $\X$ are i.i.d., the samples that lie on a particular quantile (of size $\nnz{\Xbinn{:,i}^j}$) of the empirical distribution is chosen uniformly at random hence the nonzero support of that column is uniformly random.
\end{proof}
%\begin{lemma} Assume $\rb\in\R^B$ is a random one-hot encoded $\pm\sqrt{B}$ vector. Given unit length $\x$, we have that
%
%\end{lemma}
%\begin{proof} Observe that \[
%\E[|\rb^T\x|^k]=\frac{\sum_{i\leq B} B^{k/2}|\x_i|^k}{B}
%\]
%For $k=2$, this implies that $\E[|\rb^T\x|^k]=1$. For $k>2$, we have that $\E[|\rb^T\x|^k]\leq B^{k/2-1}\tn{\x}=B^{k/2-1}$. Finally, for $k=1$, $B^{-1/2}\sum_{i}|\x_i|\leq 1$. These imply that $\E[|\rb^T\x|^k]^{1/k}\leq \sqrt{B}$ hence $\rb$ is a zero-mean subgaussian random vector with subgaussian norm at most $c\sqrt{B}$.
%\end{proof}
%
%\begin{lemma} Suppose $\rb$ is a subgaussian vector with norm $B$ and suppose $\{\rb_i\}_{i=1}^p$ are i.i.d. replicas of $\rb$. Let $\rb_{all}=[\rb_1~\dots~\rb_p]$. Then, $\rb_{all}$ is a subgaussian vector with same norm $B$.
%\end{lemma}
%\begin{proof} For any $\x$, we have that $\rb_{all}^T\x=\sum_{i\leq p} \rb_i^T\x_i$ where $\rb_i^T\x_i$ has subgaussian norm $\sqrt{B}\tn{\x_i}$ and $\rb_i^T\x_i$ are independent of each other. Then, using Lemma $5.9$ of \cite{Vers}, subgaussian norm of $\rb_{all}^T\x$ is at most $c(\sum_{1\leq i\leq p} B\tn{\x_i}^2)^{1/2}=c\sqrt{B}$.
%\end{proof}
\subsection{Analysis of \rrb}
In this section, we provide an analysis of the \rrb~strategy. First, in the light of Theorem \ref{good one}, we describe the statistical properties of \rrb~matrix. Using the same argument, assuming Assumption \ref{main assume} holds, a \rrb~matrix satisfies
\begin{itemize}
\item Nonzero pattern of each column is chosen uniformly at random.
\item Columns in the same feature submatrix $\Xbin^j$ have non-overlapping patterns.
\item Columns from different feature groups ($\Xbin^j,\Xbin^k$ where $j\neq k$) are independent.
\end{itemize}

With this observations, the idea of the proof is approximating the \irb~matrix with a \rrb~matrix. We first have the following result on restricted smoothness / strong convexity of constraint set $\Cc$ defined in \eqref{constraint_eq}.
\begin{theorem} Let $\Cc$ be the set defined at \eqref{constraint_eq}. Let $\Gc$ be the set of $s_G$ group sparse vectors and $\A$ and $\B$ be two matrices of size $\R^{n\times \pb}$. Let $\Sc^{n-1}$ be the unit $\ell_2$ sphere. We have the relations
\begin{align}
&\min_{\vb\in \Cc\cap \Sc^{n-1}}\tn{\A\vb}\geq \min_{\vb\in \Cc\cap \Sc^{n-1}}\tn{\B\vb}-\max_{\vb\in\Gc\cap \Sc^{n-1}}\tn{(\A-\B)\vb},\label{first diff bound}\\
&\max_{\vb\in \Cc\cap \Sc^{n-1}}\tn{\A\vb}\geq \max_{\vb\in \Cc\cap \Sc^{n-1}}\tn{\B\vb}-\max_{\vb\in\Gc\cap \Sc^{n-1}}\tn{(\A-\B)\vb}.
\end{align}
\end{theorem}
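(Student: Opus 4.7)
The plan is to prove both inequalities by a reverse-triangle-inequality argument on a fixed vector, then take the appropriate extrema over $\Cc \cap \Sc^{n-1}$. The only nontrivial bookkeeping is that the perturbation term on the right is taken over the larger set $\Gc \cap \Sc^{n-1}$ rather than $\Cc \cap \Sc^{n-1}$; this is fine because $\Cc \subset \Gc$ by construction (elements of $\Cc$ are $s_G$-group-sparse and additionally locally smooth and zero-mean, whereas $\Gc$ only imposes group sparsity), so any supremum of $\tn{(\A-\B)\vb}$ over $\Cc \cap \Sc^{n-1}$ is dominated by the supremum over $\Gc \cap \Sc^{n-1}$.

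For the first inequality I would fix an arbitrary $\vb \in \Cc \cap \Sc^{n-1}$ and write
\[
\tn{\A\vb} \;=\; \tn{\B\vb + (\A-\B)\vb} \;\geq\; \tn{\B\vb} - \tn{(\A-\B)\vb}.
\]
The right-hand side lower-bounds to $\min_{\w\in \Cc \cap \Sc^{n-1}} \tn{\B\w} - \max_{\w \in \Gc\cap \Sc^{n-1}} \tn{(\A-\B)\w}$, using $\Cc \subset \Gc$ to widen the domain of the perturbation supremum. Taking infimum of the left-hand side over $\vb \in \Cc \cap \Sc^{n-1}$ then yields \eqref{first diff bound}.

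For the second inequality I would pick $\vb^\star \in \Cc \cap \Sc^{n-1}$ attaining $\max_{\vb \in \Cc \cap \Sc^{n-1}} \tn{\B\vb}$ (or an approximating sequence if the max is not achieved, which is harmless since $\Cc \cap \Sc^{n-1}$ is bounded and the argument only uses continuity of $\vb \mapsto \tn{\B\vb}$). Then the reverse triangle inequality gives
\[
\max_{\vb \in \Cc \cap \Sc^{n-1}} \tn{\A\vb} \;\geq\; \tn{\A\vb^\star} \;\geq\; \tn{\B\vb^\star} - \tn{(\A-\B)\vb^\star},
\]
and again using $\vb^\star \in \Cc \subset \Gc$ to bound the perturbation term by $\max_{\w \in \Gc \cap \Sc^{n-1}} \tn{(\A-\B)\w}$ produces the advertised bound.

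There is no substantive obstacle; the proof is essentially two applications of the reverse triangle inequality combined with the set inclusion $\Cc \subset \Gc$. The only conceptual point worth flagging is why one inflates the perturbation set to $\Gc$: downstream this is used because concentration of $\tn{(\A-\B)\vb}$ over $\Gc \cap \Sc^{n-1}$ (a simpler group-sparse set, with Gaussian complexity controlled purely by $s_G$ and $\max_j b_j$) is easier to control than over the full smoothness-and-sparsity set $\Cc$, and gives the cleanest handle for approximating a \rrb{}~matrix by a \irb{}~matrix in the subsequent step.
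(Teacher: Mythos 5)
Your proposal is correct and matches the paper's argument: both inequalities follow from the reverse triangle inequality $\tn{\A\vb}\geq\tn{\B\vb}-\tn{(\A-\B)\vb}$ together with the inclusion $\Cc\subset\Gc$ used to enlarge the domain of the perturbation term. Your handling of the max case (evaluating at a maximizer of $\vb\mapsto\tn{\B\vb}$) is in fact slightly more explicit than the paper, which simply asserts the second statement is proved identically.
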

\begin{proof}We prove \eqref{first diff bound}. Proof of the second statement is identical. The main idea is the fact that $\Cc\subset\Gc$. Consequently, for any $\vb\in\Cc$,
\[
\min_{\vb\in \Cc\cap \Sc^{n-1}}\tn{\A\vb}\geq \min_{\vb\in \Cc\cap \Sc^{n-1}}(\tn{\B\vb}-\tn{(\A-\B)\vb})\geq \min_{\vb\in \Cc\cap \Sc^{n-1}}\tn{\B\vb}-\max_{\vb\in\Gc\cap \Sc^{n-1}}\tn{(\A-\B)\vb}.
\]
\end{proof}
Our next theorem approximates the spectral norms (denoted $\|\cdot\|$) of submatrices of \rrb~in terms of that of \irb~and shows that \rrb~matrices enjoys good restricted gradient correlation bounds over set $\Cc$
\begin{theorem}\label{irb to rrb} Let $\Gc$ be the set of $s_G$ group sparse vectors. Let $\Gbin$ be a \irb~matrix. Then, with probability $1-\pb^{-10}$, there exists $\Bbin$ such that $\Bbin$ is statistically identical to a \rrb~matrix and satisfies
\[
\max_{\vb\in\Gc\cap \Sc^{n-1}}\tn{(\Bbin-\Gbin)\vb}\leq c(s_G^2\log \pb \max_jb_j/n)^{1/4}.
\]
%that uniform binning strategy where each column contains $n/b$ nonzero entries. 
\end{theorem}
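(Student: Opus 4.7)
The plan is to construct $\Bbin$ as a coupled version of $\Gbin$ by redistributing samples within each feature until every bin has exactly $n/b_j$ entries, and then to bound the operator norm of the per-feature difference $E^j := \Bbin^j - \Gbin^j$ by combining extreme row-sparsity with Bernstein concentration of the \irb bin counts. The group-sparsity of $\vb$ will then be cashed in via a Cauchy-Schwarz step across the at most $s_G$ active feature blocks.

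First I would work one feature at a time. For each $j$ the \irb counts $(N_{j,1},\dots,N_{j,b_j})=(\nnz{\Gbinn{:,1}^j},\dots,\nnz{\Gbinn{:,b_j}^j})$ are Multinomial$(n;1/b_j,\dots,1/b_j)$, so each $N_{j,k}$ is Binomial$(n,1/b_j)$. Bernstein's inequality combined with a union bound over all $\pb$ bins gives, with probability at least $1-\pb^{-10}$,
\[
\max_{j,k}\,|N_{j,k}-n/b_j|\;\leq\; c\sqrt{(n/b_j)\log\pb}.
\]
Conditional on the realized counts, I would obtain $\Bbin^j$ by iteratively picking a uniformly-random sample from a bin with $N_{j,k}>n/b_j$ and reassigning it to a uniformly-random bin with $N_{j,k}<n/b_j$, until each bin contains exactly $n/b_j$ samples. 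By the symmetry of Assumption \ref{main assume} and the uniformity of these choices, the resulting $\Bbin$ has the joint law of a \rrb matrix, and independence across features is automatic.

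Each reassignment changes exactly two entries of $E^j$, both of magnitude $\sqrt{b_j/n}$, in the same row (one entry leaves the old bin, one enters the new). Hence every row of $E^j$ has at most two nonzero entries, giving the absolute row-sum bound $\max_i\sum_k |E^j_{ik}|\leq 2\sqrt{b_j/n}$, while column $k$ contains at most $|N_{j,k}-n/b_j|$ nonzeros, yielding $\max_k\sum_i |E^j_{ik}|\leq \sqrt{b_j/n}\cdot c\sqrt{(n/b_j)\log\pb}=c\sqrt{\log\pb}$ on the good event. The Schur inequality $\opnorm{E^j}^2\leq (\max\text{-col-sum})(\max\text{-row-sum})$ then delivers
\[
\opnorm{E^j}\;\leq\; c\bigl(b_j\log\pb/n\bigr)^{1/4}.
\]
For any $\vb\in\Gc\cap\Sc^{n-1}$ with group support $T$ of size at most $s_G$, the triangle inequality and Cauchy-Schwarz give
\[
\tn{(\Bbin-\Gbin)\vb}\;\leq\;\sum_{j\in T}\opnorm{E^j}\tn{\vb^j}\;\leq\;\sqrt{s_G}\,\max_{j\leq p}\opnorm{E^j}\;\leq\; c\bigl(s_G^2\log\pb\,\max_j b_j/n\bigr)^{1/4},
\]
which is precisely the claimed estimate.

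The principal obstacle is justifying that the above coupling really produces a $\Bbin$ whose distribution coincides with the \rrb law: the redistribution destroys the row-independence of $\Gbin$, so one has to verify that the conditional distribution of the reassigned samples, combined with the Multinomial marginal of the \irb bin sizes, reproduces the uniform-partition law that defines a \rrb submatrix. A secondary technical point is that the Binomial tail bound must hold uniformly over all $\pb$ bins, which is what forces both the $\log\pb$ factor in the final rate and the $\pb^{-10}$ failure probability; any weakening of this uniform control would propagate through the Schur step and degrade the final exponent.
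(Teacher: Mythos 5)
Your proposal is correct and follows the same overall strategy as the paper: couple the \irb~matrix to a balanced \rrb~matrix by reallocating excess samples within each feature block, control the bin-size deviations $|N_{j,k}-n/b_j|$ by a Chernoff/Bernstein bound uniformly over all $\pb$ bins (which is exactly where the paper's $\log\pb$ factor and $\pb^{-10}$ failure probability come from), and then pay a $\sqrt{s_G}$ factor to pass from per-block operator norms to $s_G$-group-sparse vectors (the paper's Lemma \ref{concated} is precisely your Cauchy--Schwarz step). The one place you genuinely diverge is the per-block bound on $\opnorm{E^j}$: you use the Schur test $\opnorm{E^j}^2\leq(\max\text{-row-sum})(\max\text{-col-sum})$, exploiting that each row of $E^j$ has at most two nonzeros of size $\sqrt{b_j/n}$ while each column has at most $|N_{j,k}-n/b_j|$ nonzeros; the paper instead splits $\Db^j$ into the submatrix of overfull columns and the submatrix of underfull columns, observes that each has disjointly supported (hence orthogonal) columns so its spectral norm is the largest column $\ell_2$-norm, and adds the two. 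Both yield the same rate $c(b_j\log\pb/n)^{1/4}$; your Schur argument additionally uses the row structure of the coupling, whereas the paper's only uses column supports, so the two are interchangeable here. The caveat you flag --- that the redistribution must reproduce the exact \rrb~law --- is real but is treated at the same level of rigor in the paper (it is asserted via a symmetry/exchangeability argument); if you want to nail it down, the destination of each excess sample should be drawn by a uniform random matching of excess slots to deficit slots (equivalently, with probability proportional to each underfull bin's remaining deficit), not uniformly over the set of currently underfull bins, since the latter can bias the joint law even though it does not affect the row/column sparsity counts your norm bound relies on.
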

\begin{proof} Given $\Gbin$, we construct $\Bbin$ by constructing $\Bbin^j$ from $\Gbin^j$ for all $j\leq p$. Fix the column block $j$. Create sets $S_+$ and $S_-$ which are the sets of columns of $\Gbin^j$ with number of nonzeros more than and less than $n/b_j$ respectively. Let $\supp{}$ return the nonzero support of a vector/matrix. For a column $c\in S_+$, set $\supp{\Bbinn{:,c}^j}$ to be uniformly random subset of the $\supp{\Gbin^j_{:,c}}$ of size $n/b_j$. Let $E_+$ be the excess nonzero coordinates that is given by 
\[
E_+=\bigcup_{c\in S_+}\supp{\Gbin^j_{:,c}}-\supp{\Bbinn{:,c}^j}.
\] 

Next, we construct $\Bbin$ over $S_-$ as follows. For any $c\in S_-$, $c$th column obeys $\supp{\Bbinn{:,c}^j}\supset \supp{\Gbin^j_{:,c}}$. Observe that $\supp{\Bbinn{:,c}^j}$ is missing $n/b_j-\supp{\Gbin^j_{:,c}}$ coordinates. We pick these coordinates uniformly at random from the set $E_+$ for each $c\in S_-$. It can be shown that this construction of $\Bbin$ results in \rrb~matrix because it is ignorant of the permutation of the nonzero assignments.

With this construction, we will proceed with the analysis of the spectral norm $\|\cdot\|$. First defining $\Db=\Bbin-\Gbin$, observe that supports of columns of $\Db^j=\Bbin^j-\Gbin^j$ over $S_+$ and over $S_-$ are nonoverlapping which means individually $\Db^j_{:,S_+}$ and $\Db^j_{:,S-}$ have orthogonal columns. Hence, the spectral norm of each one is simply the column with the largest $\ell_2$ norm. Two estimate this, we shall bound the number of nonzeros of each column. This can be done by using a Chernoff bound as follows. $\nnz{\Gbin^j_{:,c}}$ is sum of $n$ independent Bernoulli's with mean $b_j^{-1}$. Hence the mean is $n/b_j$. Subtracting the overlap with $\Bbinn{:,c}^j$ corresponds to studying the deviation of the sum from its mean as $\nnz{\Bbinn{:,c}^j}=n/b_j$. Consequently, applying Chernoff bound, for any $j,c$ pair,
\[
\Pr(\nnz{\Db^j_{:,c}}\geq \eps n/b_j)\leq \exp(-\eps^2b_j^{-1}n/4).
\]
Setting $\eps=C\sqrt{\log \pb b_j/n}$ for some constant $C>0$, and union bounding over all $\pb$ columns of $\Db$, we obtain that with probability $1-\pb^{-10}$, $\nnz{\Db^j_{:,c}}\leq C\sqrt{n/b_j\log \pb}$ is satisfied for all $j,c$.
%Setting $\eps=C\sqrt{\log \pb b_j/n}$ for some constant $C>0$ and applying this bound for all $\pb$ columns $1\leq c\leq\pb$ of $\Db$, union bounding, we obtain that with probability $1-\pb^{-10}$, $\nnz{\Db^j_{:,c}}\leq C\sqrt{n/b_j\log \pb}$ is satisfied.

This implies that, with column length normalization $\sqrt{n/b_j}$, we obtain that for all columns $c$ obey,
\[
\tn{\Db^j_{:,c}}=\sqrt{b_j/n}\sqrt{\nnz{\Db^j_{:,c}}}\leq C(\log \pb b_j/n)^{1/4}.
\]
Consequently $\|\Db^j\|\leq \|\Db^j_{:,S-}\|+\|\Db^j_{:,S-}\|\leq 2C(\log \pb b_j/n)^{1/4}$. This provides spectral norm control of individual submatrices $\Db^j$. We are interested in arbitrary $s_G$ size concatenation of them. From Lemma \ref{concated} spectral norm of the concatenation of $s_G$ matrices scales as $\sqrt{s_G}$.

Using the spectral norm upper bound $\sqrt{s_G}2C(\log \pb b_j/n)^{1/4}$ on the $s_G$-size block submatrices of $\Db$, for any $s_G$ group sparse vector unit length $\x$
\[
\tn{\Db\x}\leq 2C\sqrt{s_G}(\log \pb \max_jb_j/n)^{1/4}
\]
which is the advertised bound.\end{proof}
\begin{lemma} \label{concated}Let $\A=[\A^1~\dots~\A^s]$ be a concatenation of $s$ matrices. We have that
\[
\|\A\|\leq \sqrt{s}\max_i\|\A^i\|.
\]
\end{lemma}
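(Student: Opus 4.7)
The plan is to bound $\|\A\|=\sup_{\tn{\x}=1}\tn{\A\x}$ by partitioning the argument vector compatibly with the concatenation. Specifically, given a unit vector $\x$, write $\x=[\x^1;\x^2;\dots;\x^s]$ where $\x^i$ has the dimension matching the number of columns of $\A^i$. Then $\A\x=\sum_{i=1}^s \A^i\x^i$.

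From here I would apply the triangle inequality followed by the operator-norm bound on each block, yielding
\[
\tn{\A\x}\leq \sum_{i=1}^s \tn{\A^i\x^i}\leq \sum_{i=1}^s \|\A^i\|\,\tn{\x^i}\leq \bigl(\max_i\|\A^i\|\bigr)\sum_{i=1}^s\tn{\x^i}.
\]
The last step is Cauchy--Schwarz against the all-ones vector of length $s$: $\sum_{i=1}^s\tn{\x^i}\leq \sqrt{s}\,\bigl(\sum_{i=1}^s\tn{\x^i}^2\bigr)^{1/2}=\sqrt{s}\,\tn{\x}=\sqrt{s}$. Taking the supremum over unit $\x$ gives the claim.

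There is no real obstacle here; the only subtlety is the factor $\sqrt{s}$, which arises precisely from the Cauchy--Schwarz step and reflects that the $\ell_2$ norm of the block-norm vector $(\tn{\x^1},\dots,\tn{\x^s})$ can put mass on all $s$ coordinates simultaneously. (For completeness one could instead argue via $\A\A^{\top}=\sum_i \A^i(\A^i)^{\top}$ and the triangle inequality on PSD matrices in operator norm, giving $\|\A\|^2\leq s\max_i\|\A^i\|^2$, but the direct route above is shorter.)
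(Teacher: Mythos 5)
Your proof is correct and follows essentially the same route as the paper's: split $\x$ into blocks, apply the triangle inequality and the per-block operator-norm bound, and finish with Cauchy--Schwarz to get $\sum_i\tn{\x^i}\leq\sqrt{s}$. You merely make the final Cauchy--Schwarz step explicit where the paper leaves it implicit.
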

\begin{proof} Let $\x=[\x^1~\dots~\x^s]$. We have that $\tn{\A\x}\leq \sum_i\tn{\A^i\x^i}\leq \sum_i\|\A^i\|\tn{\x^i}\leq \max_i\|\A^i\|\sqrt{s}$.
\end{proof}

%Next, similar to \irb, we wish to obtain a statistical precision bound. This can again be done by approximating \rrb $\Xbin_R$ in terms of \irb $\Xbin_I$. In particular, we have that
%\[
%\Pc_{\Cc-\Cc}(\Xbin_R^T\z)\leq \Pc_{\Cc-\Cc}(\Xbin_I^T\z)+\sup_{\tn{\vb}\leq 1,\vb\in \Cc-\Cc}\vb^T(\Xbin_R-\Xbin_I)\z
%\]
%Observe that $\vb^T(\Xbin_R-\Xbin_I)\leq $

%\subsection{}
\section{Proof of Theorem \ref{distance}}% and moments of random binned data matrices}
\begin{proof} Assume $\Xbin$ is \rrb~with randomly flipped signs and $\Sb$ is sparse JL matrix described in the theorem. Given unit $\ell_2$ norm $\x$ and an integer $\ell\geq 1$, first, let us find a good representation for $(\tn{\Sb\x}^2-1)^{\ell}$ (identically $(\tn{\Xbin\x}^2-1)^\ell$). Observe that
\begin{align}
\tn{\Sb\x}^2-1=\sum_i (\sum_j\s_{ij}\x_j)^2-1&=\sum_i\sum_{j_1,j_2}\s_{ij_1}s_{ij_2}\x_{j_1}\x_{j_2}-1\\
&=(\sum_i\sum_{j_1\neq j_2}\s_{ij_1}s_{ij_2}\x_{j_1}\x_{j_2})+\sum_i\sum_{j}\s_{ij}^2\x_{j}^2-1\\
&=\sum_i\sum_{j_1\neq j_2}\s_{ij_1}s_{ij_2}\x_{j_1}\x_{j_2}.
\end{align}
This immediately implies that 
\[
\E[(\tn{\Sb\x}^2-1)^\ell]=\E[(\sum_i\sum_{j_1\neq j_2}\s_{ij_1}s_{ij_2}\x_{j_1}\x_{j_2})^{\ell}].
\]
Consequently, we shall focus on analyzing the right hand side by expanding it.
\begin{align}
(\sum_i\sum_{j_1\neq j_2}\s_{ij_1}\s_{ij_2}\x_{j_1}\x_{j_2})^{\ell}=\sum\left( \prod_{i_k,j_{1k}\neq j_{2k},1\leq k\leq \ell}\s_{i_kj_{1k}}\s_{i_kj_{2k}}\x_{j_{1k}}\x_{j_{2k}}\right)\label{sub embed}.
\end{align}
We need to investigate the product on the right hand side \eqref{sub embed}. Each $\prod$ term inside the overall sum is multiplication of multiple $\s_{ij},\x_j$ terms. We first show that, the expectation of the each $\prod$ term is nonnegative given $i_k,j_{1k}\neq j_{2k},1\leq k\leq \ell$. Suppose the expectation is nonzero. In this case, observe that each $\s_{i_kj_{1k}}$ has to be picked even many times. Otherwise $\s_{i_kj_{1k}}$ multiplier will result in zero expectation since it has random sign (scaled $\pm1$ random variable). Given that each $\s_{i_kj_{1k}}$ has even power, $\x_{j_1k}$ has even power as well. Hence, the multiplier of $\prod$ term containing the $\x$ terms is nonnegative. Hence, the nonnegativity boils down to the nonnegativity of 
\begin{align}
\prod_{i_k,j_{1k}\neq j_{2k},1\leq k\leq \ell}\s_{i_kj_{1k}}\s_{i_kj_{2k}}\label{sub embed2}.
\end{align}
Again, since $\s_{i_kj_{1k}}$ has even powers and nonzero, \eqref{sub embed2} is strictly positive. Hence, for any $\prod$ term with nonzero expectation, all terms have even powers and the expectation of the $\prod$ term as well as the right hand side of \eqref{sub embed} is nonnegative.

For the case $\ell=2$, the nonnegative $\s_{ij}$ multipliers in $\prod$ are in the form $\s_{ij}^2\s_{kl}^2$. Below, we show that $\Xbin$ has less number of positive terms in the form \eqref{sub embed2} compared to $\Sb$.
\begin{itemize}
\item When $j$ and $l$ belong to different features (i.e.~$j,l$ corresponds to features $j',l'$ where $j'\neq l'$ so that columns $j$ and $l$ are chosen from different submatrices $\Sb^{j'}$ and $\Sb^{l'}$): $\Xbin$ and $\Sb$ has the same expectation, as the columns of $\Xbin$ corresponding to different features are independent. 
\item When $j=l$ belong to the same feature, $\E[\s_{ij}^2\s_{kj}^2]$ term is same as well, as each column of $\Xbin$ and $\Sb$ have nonzero support chosen uniformly at random of size $s$.
\item Finally, if $j\neq l$ but $j$ and $l$ belong to the same feature, by construction for $\Xbin$, $\E[\s_{ij}^2\s_{kj}^2]=0$ since nonzero supports are non-overlapping. Whereas for $\Sb$, we have $\E[\s_{ij}^2\s_{kj}^2]=\E[\s_{ij}^2]\E[\s_{kj}^2]=n^{-2}$.
\end{itemize}
Hence, $\Xbin$ has better or equal $\ell=2$ moment compared to $\Sb$. Strict inequality occurs if the vector $\x$ contains at least two nonzero values corresponding to same feature ($\nnz{\x^j}>1$ for some $j$) since for that vector, $\Sb$ will have an additional strictly positive moment term.
%In order to conclude with the proof, we will show that positive expectations arising from $\Xbin$ is a subset of that of $\Sb$. To see this, we consider two scenarios.
%\begin{enumerate}
%\item There exists columns $a\neq b\in \{j_{1k},j_{2k}\}_{k\leq \ell}$ such that $(a,b)$ is in the same bin.
%\item The set $\{j_{1k},j_{2k}\}_{k\leq \ell}$ corresponds to columns from distinct bins.
%\end{enumerate}
%In the first case, $\Xbin$ will have no positive expectations since the supports in the same bin corresponding to different columns $a,b$ are nonoverlapping and $\s_{ia}\s_{ib}=0$. In the second case, $\Xbin$ and $\Sb$ have the exact same expectation since both multiplications are dealing with independent columns and nonzero patterns of each column subset picked for $\Xbin$ and $\Sb$ are statistically identical i.e. nonzero support is selected uniformly at random from the sets of size $s$.
\end{proof}

\end{document}